\def\eqref#1{equation~\ref{#1}}
\def\1{\bm{1}}
\DeclareMathAlphabet{\mathsfit}{\encodingdefault}{\sfdefault}{m}{sl}
\SetMathAlphabet{\mathsfit}{bold}{\encodingdefault}{\sfdefault}{bx}{n}
\def\gD{{\mathcal{D}}}
\definecolor{mine}{RGB}{205, 232, 248}%
\title{Safe Offline Reinforcement Learning with Feasibility-Guided Diffusion Model}
\author{Yinan Zheng$^{1,2}$\footnotemark[1]~~, Jianxiong Li$^{1,2}$\footnotemark[1]~~, Dongjie Yu$^3$, Yujie Yang$^2$, Shengbo Eben Li$^{2}$, \\
\textbf{Xianyuan Zhan$^{1,4}$\footnotemark[2] \, , Jingjing Liu$^{1,2}$}\footnotemark[2] \\
$^1$ Institute for AI Industry Research (AIR), Tsinghua University \\
$^2$ School of Vehicle and Mobility, Tsinghua University\\
$^3$ Department of Computer Science, The University of Hong Kong \\
$^4$ Shanghai Artificial Intelligence Laboratory\\
\texttt{\{zhengyn23,li-jx21\}@mails.tsinghua.edu.cn,}  \\ \texttt{zhanxianyuan@air.tsinghua.edu.cn}\\
}
\newtheorem{theorem}{Theorem}
\newtheorem{definition}{Definition}
\newtheorem{lemma}[]{Lemma}
\begin{document}

\vspace{-0pt}
\maketitle
\vspace{-7pt}
\renewcommand{\thefootnote}{\fnsymbol{footnote}}
\footnotetext[1]{Equal contribution}
\footnotetext[2]{Correspondence to Xianyuan Zhan and Jingjing Liu}
\renewcommand*{\thefootnote}

\begin{abstract}
\vspace{-6pt}
Safe offline reinforcement learning is a promising way to bypass risky online interactions towards safe policy learning. Most existing methods only enforce soft constraints, \textit{i.e.,} constraining safety violations in expectation below thresholds predetermined. This can lead to potentially unsafe outcomes, thus unacceptable in safety-critical scenarios.
An alternative is to enforce the hard constraint of zero violation. However, this can be challenging in offline setting, as it needs to strike the right balance among three highly intricate and correlated aspects: safety constraint satisfaction, reward maximization, and behavior regularization imposed by offline datasets.
Interestingly, we discover that via reachability analysis of safe-control theory, the hard safety constraint can be equivalently translated to identifying the largest feasible region given the offline dataset. This seamlessly converts the original trilogy problem to a feasibility-dependent objective, \textit{i.e.}, maximizing reward value within the feasible region while minimizing safety risks in the infeasible region. Inspired by these, we propose \textit{FISOR} (\textit{FeasIbility-guided Safe Offline RL}), which allows safety constraint adherence, reward maximization, and offline policy learning to be realized via three decoupled processes, while offering strong safety performance and stability. In FISOR, the optimal policy for the translated optimization problem can be derived in a special form of weighted behavior cloning, which can be effectively extracted with a guided diffusion model thanks to its expressiveness. Moreover, we propose a novel energy-guided sampling method that does not require training a complicated time-dependent classifier to simplify the training.
We compare FISOR against baselines on \textit{DSRL} benchmark for safe offline RL. Evaluation results show that FISOR is the only method that can guarantee safety satisfaction in all tasks, while achieving top returns in most tasks. Project website: \url{https://zhengyinan-air.github.io/FISOR/}.

\end{abstract}
\vspace{-8pt}
\section{Introduction}
\vspace{-3pt}

Autonomous decision making imposes paramount safety concerns in safety-critical tasks \citep{li2023reinforcement}, such as industrial control systems \citep{zhan2022deepthermal} and autonomous driving \citep{kiran2021deep}, where any unsafe outcome can lead to severe consequences. In these tasks, one top priority is to ensure persistent safety constraint satisfaction
~\citep{zhao2023state}.
Safe reinforcement learning (RL) holds the promise of safety guarantee by formulating and solving this problem as a Constrained Markov Decision Process (CMDP)
~\citep{gu2022review}. However, most previous studies focus on online RL setting, which suffers from serious safety issues in both training and deployment phases, especially for scenarios that lack high-fidelity simulators and require real system interaction for policy learning~\citep{liu2023datasets}.
\textit{Safe offline RL}, on the other hand, incorporates safety constraints into offline RL~\citep{levine2020offline} and provides a new alternative for learning safe policies in a fully offline manner~\citep{xu2022constraints}. In this setting, only the learned final policy needs safety guarantees, bypassing the inherent safety challenges during online training, thus providing an effective and practical solution to safety-critical real-world applications~\citep{zhan2022deepthermal}. 

Though promising, current safe offline RL methods exhibit several limitations.
Firstly, to the best of our knowledge, all previous studies adopt \textit{soft constraint} and only require constraint violations in expectation to remain under a given threshold throughout the entire trajectory~\citep{xu2022constraints, lee2022coptidice}, thus allowing potential unsafe outcome.
Such soft constraint-based formulation is widely employed in the research community but unacceptable to safety-critical industrial application~\citep{yang2023model}.
Instead, \textit{hard constraint} is more preferable for these tasks, where state-wise zero constraint violation is strictly enforced. To achieve this, a far more rigorous safety constraint is in demand, which however, inevitably imposes severe policy conservatism and suboptimality. Secondly, the difficulty is further exacerbated by the \textit{offline setting}, as additional consideration 
of behavior regularization imposed by offline dataset is required to combat distributional shift~\citep{fujimoto2019off}. Thus, striking the right balance among \textit{constraint satisfaction}, \textit{reward maximization}, and \textit{offline policy learning} can be 
exceptionally challenging. 
Jointly optimizing these intricate and closely-correlated aspects, as in existing safe offline RL methods, can lead to very unstable training processes and unsatisfactory safety performance~\citep{lee2022coptidice,saxena2021generative}. 

To tackle these challenges, we introduce a novel safe offline RL approach, \textit{FISOR (FeasIbility-guided Safe Offline RL)}, which provides stringent safety assurance while simultaneously optimizing for high rewards. 
FISOR revisits safety constraint satisfaction in view of optimization under different feasibility conditions and comprises three simple decoupled learning processes, offering strong safety performance and learning stability. Specifically, we first revise Hamilton-Jacobi (HJ) Reachability~\citep{bansal2017hamilton} in safe-control theory to directly identify the largest feasible region through the offline dataset using a reversed version of expectile regression, which ensures zero constraint violation while maximally expanding the set of feasible solutions.
With the knowledge of feasibility in hand, we further develop a feasibility-dependent objective that maximizes reward values within the feasible regions while minimizing safety risks in the infeasible regions. 
This allows constraint satisfaction and reward maximization to be executed in decoupled offline learning processes.
Furthermore, we show that the optimal policy for these reformulated problems has a special weighted behavior cloning form with distinct weighting schemes in feasible and infeasible regions.
By noting the inherent connection between weighted regression and exact energy-guided sampling for diffusion models~\citep{lu2023contrastive}, we extract the policy using a novel time-independent classifier-guided method, enjoying both superior expressivity and efficient training.

Extensive evaluations on the standard safe offline RL benchmark DSRL \citep{liu2023datasets}  show that FISOR is the only method that guarantees satisfactory safety performance in all evaluated tasks, while achieving the highest returns in most tasks. Moreover, we demonstrate the versatility of FISOR in the context of safe offline imitation learning (IL), still outperforming competing baselines.

\vspace{-5pt}
\section{Preliminary}
\vspace{-5pt}
Safe RL is typically formulated as a Constrained Markov Decision Process~\citep{altman2021constrained}, which is specified by a tuple $\mathcal{M}:=\left( \mathcal{S},\mathcal{A},T,r,h,c, \gamma\right)$. $\mathcal{S}$ and $\mathcal{A}$ represent the state and action space; $T:\mathcal{S} \times \mathcal{A} \to \Delta(\mathcal{S})$ is transition dynamics; $r:\mathcal{S} \times \mathcal{A} \to \mathbb{R}$ is the reward function; $h:\mathcal{S} \to \mathbb{R}$ is the constraint violation function; $c:\mathcal{S} \to \left[0,C_{\rm max} \right]$ is cost function; and $\gamma \in (0,1)$ is the discount factor. Typically, $c(s)=\max \left(h(s),0\right)$, which means that it takes on the value of $h(s)$ when the state constraint is violated ($h(s) > 0$), and zero otherwise ($h(s) \leq 0$). 

Previous studies typically aim to find a policy $\pi:\mathcal{S} \to \Delta(\mathcal{A})$ to maximize the expected cumulative rewards while satisfying the soft constraint that restricts the expected cumulative costs below a pre-defined cost limit $l \ge 0$, \textit{i.e.}, $\max_{\pi} ~\mathbb{E}_{\tau \sim \pi}\left[ {\textstyle \sum_{t=0}^{\infty}}\gamma^{t}r(s_t,a_t)  \right], \mathrm{s.t.}~   \mathbb{E}_{\tau \sim \pi}\left[ {\textstyle \sum_{t=0}^{\infty}}\gamma^{t}c(s_t)  \right] \leq l$, where $\tau$ is the trajectory induced by policy $\pi$. Existing safe offline RL methods typically solve this problem in the following form~
\citep{xu2022constraints, lee2022coptidice}:
\begin{equation}
\label{eq:safe_offline}
\max_{\pi} ~\mathbb{E}_{s}\left[V^{\pi}_r(s)\right]  \quad\quad
\mathrm{s.t.}~ \mathbb{E}_{s}\left[V^{\pi}_c(s)\right] \leq l;\; \text{D}(\pi \| \pi_\beta) \leq \epsilon,
\end{equation}
where $V^{\pi}_r$ is state-value function, $V^{\pi}_c$ is cost state-value function, $\pi_\beta$ is the underlying behavioral policy of the offline dataset $\mathcal{D}:= \left(s, a, s', r, c\right)$ with both safe and unsafe trajectories. 
$\text{D}(\pi \| \pi_\beta)$ is a divergence term (\textit{e.g.}, KL divergence $\text{D}_{\text{KL}}(\pi\| \pi_\beta)$) used to prevent distributional shift from $\pi_\beta$ in offline setting. 
To handle safety constraints, previous methods often consider solving the Lagrangian dual form of Eq. (\ref{eq:safe_offline}) with a Lagrangian multiplier $\lambda$~\citep{chow2017risk,tessler2018reward}:
\begin{equation}
\label{lagrangian}
\min_{\lambda \ge 0}\max_{\pi} ~\mathbb{E}_{s}\left[V^{\pi}_r(s) - \lambda \left(V^{\pi}_c(s) - l \right)\right]  \quad\quad
\mathrm{s.t.}~ \text{D}(\pi\| \pi_\beta) \leq \epsilon.
\end{equation}
\vspace{-10pt}

\textbf{Limitations of Existing Methods}. The safety constraint in Eq.~(\ref{eq:safe_offline}) is a \textit{soft constraint} enforced on the expectation of all possible states, meaning that there exists a certain chance of violating the constraint with a positive $l$, which poses serious safety risks in real-world scenarios~\citep{ma2022joint}. Additionally, choosing a suitable cost limit $l$ that yields the best safety performance requires engineering insights, whereas the optimal limit may vary across tasks~\citep{yu2022reachability}. Besides, Eq.~(\ref{eq:safe_offline}) shows that safe offline RL necessitates the simultaneous
optimization of three potentially conflicting objectives: maximizing rewards, ensuring safety, and adhering to the policy constraint w.r.t the behavior policy. Finding the right balance across all three is very difficult. Moreover, the optimization objective in Eq.~(\ref{lagrangian}) introduces an additional layer of optimization for the Lagrange multiplier, and the learning of $V_r^{\pi}$, $V_c^{\pi}$ and $\pi$ are mutually coupled, where minor approximation errors can bootstrap across them and lead to severe instability~\citep{kumar2019stabilizing}.
\label{sec:pre}

\vspace{-3pt}
\section{Methods}
\vspace{-3pt}
\begin{figure}[t]
\begin{center}
\vspace{-14pt}
\includegraphics[width=0.95\textwidth]{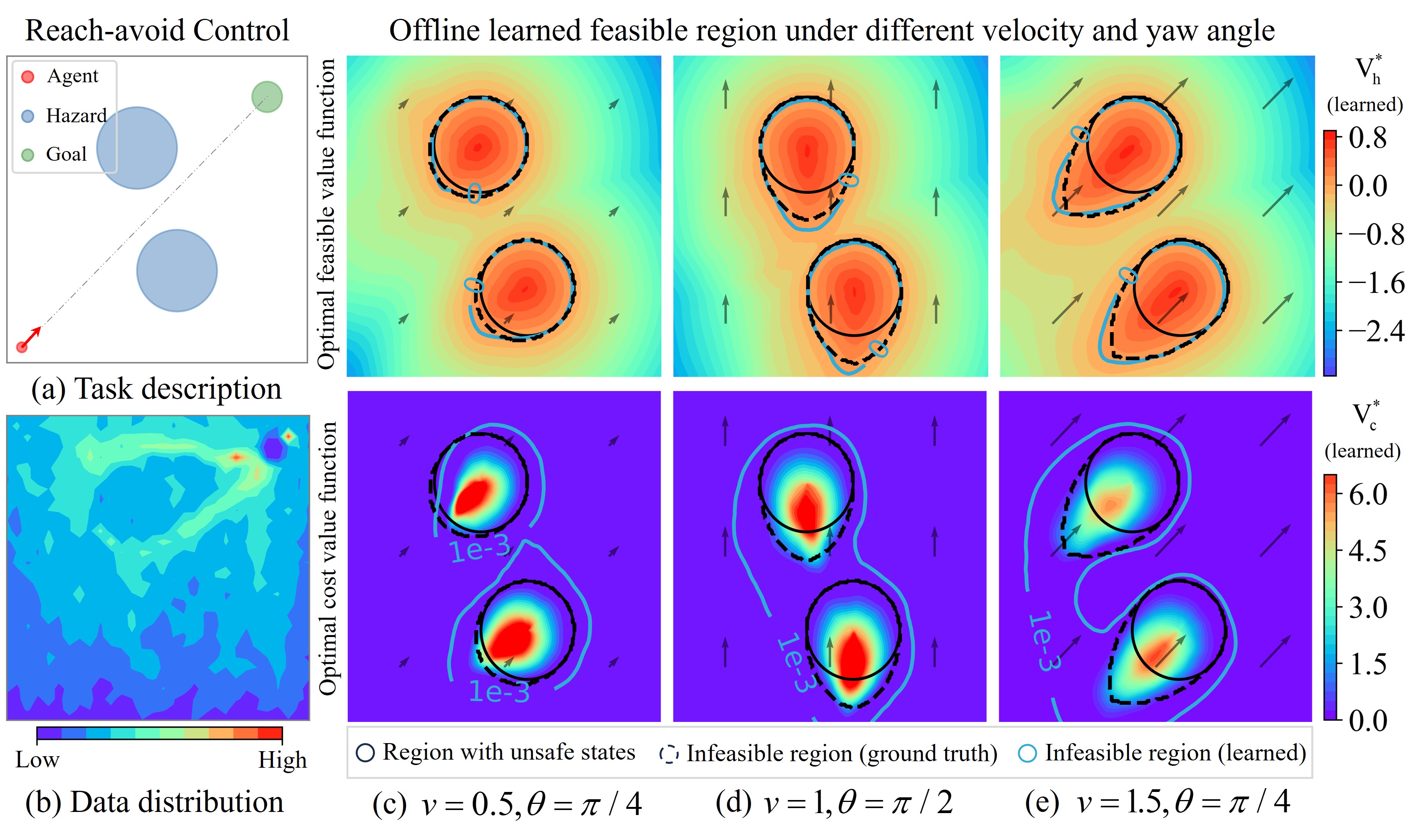}
\end{center}
\vspace{-15pt}
\caption{\small (a) Reach-avoid control task: the agent (red) aim to reach the goal (green) while avoiding hazards (blue). (b) Offline data distribution. (c)-(e) Comparisons with the  feasible region learned by feasible value $\left\{s|V^{\ast}_h(s) \leq 0 \right\}$ and cost value $\left\{s|V^{\ast}_c(s) \leq 1e^{-3} \right\}$. See Appendix \ref{ap:toy} for more details.}
\vspace{-11pt}
\label{fig:toycase}
\end{figure}

\textbf{Feasibility-Dependent Optimization}. To address these challenges, we replace soft constraint with hard constraints that demand state-wise zero violations 
($h(s_t) \le 0, a \sim \pi, \forall t \in \mathbb{N}$, Appendix ~\ref{ap:constraint}). 
A straightforward way is to remove the expectation and set the cost limit to zero, i.e., replacing the safety constraint in Eq.~(\ref{eq:safe_offline}) with $V^{\pi}_c \leq 0$:
\begin{equation}
\label{eq:safe_offline_hardc}
\max_{\pi} ~\mathbb{E}_{s}\left[V^{\pi}_r(s)\right]  \quad\quad
\mathrm{s.t.}~ V^{\pi}_c(s) \leq 0;\; \text{D}(\pi \| \pi_\beta) \leq \epsilon.
\vspace{-1pt}
\end{equation}
However, this leads to a highly restricted policy set $\Pi_c(s):=\left\{ \pi|V^{\pi}_c(s)=0\right\}$, since $V_c^\pi$ is non-negative ($c\geq 0$) and the presence of approximation errors makes it difficult to take the exact value of 0 (Figure~\ref{fig:toycase}).
To address this issue, We replace $\Pi_c(s)$ with a new safe policy set ${\Pi}_f(s)$ suitable for practical solving. Furthermore, not all states allow the existence of a policy that satisfies the hard constraint, rendering the problem unsolvable at these states. We refer to these as \textit{infeasible states} and the others \textit{feasible states}. To solve this, we consider infeasible states separately and modify the problem as follows (Appendix \ref{ap:feasdep} shows theoretical relationships between Eq.~(\ref{eq:safe_offline_hardc}) and Eq.~(\ref{eq:feasible_hard_constraint_obj_abs})):
\begin{equation}
\small
\label{eq:feasible_hard_constraint_obj_abs}
\begin{aligned}
{\rm \textbf{Feasible:}}\;\max_{\pi} ~&\mathbb{E}_{s}\left[V^{\pi}_r(s)\cdot \mathbb{I}_{s\in\mathcal{S}_f}\right]\\
\mathrm{s.t.}~ &\pi\in\Pi_f(s), \forall s\in\mathcal{S}_f \\&\text{D}_{\text{KL}}(\pi \| \pi_\beta) \leq \epsilon 
\end{aligned}
\quad
\begin{aligned}
{\rm \textbf{Infeasible:}}\;\max_{\pi} ~&\mathbb{E}_{s}\left[- V^{\pi}_c(s)\cdot \mathbb{I}_{s\notin\mathcal{S}_f}\right] \\
\mathrm{s.t.}~ &\text{D}_{\text{KL}}(\pi \| \pi_\beta) \leq \epsilon  \\
{\color{white}}
\end{aligned}
\end{equation}
Eq.~(\ref{eq:feasible_hard_constraint_obj_abs}) presents a feasibility-dependent objective. Here, $\mathbb{I}$ is the indicator function, and $\mathcal{S}_f$ is the feasible region including all feasible states, for which at least one policy exists that satisfies the hard constraint. This objective allows us to concentrate on maximizing rewards and minimizing safety risks individually to
obtain the best possible performance.  In the case of infeasible states, maximizing rewards becomes meaningless since safety violations will always occur. Therefore, the focus is shifted to minimizing future constraint violation as much as possible~\citep{yu2022reachability}. For feasible states, adherence to the hard constraint is ensured by policies from the safe policy set $\Pi_f$, thus allowing for the focus on reward maximization. Next, we introduce using the reachability analysis from safe-control theory to determine $\mathcal{S}_f$ and $\Pi_f$.



%


\vspace{-2pt}
\subsection{Offline Identification of Feasibility}
\vspace{-2pt}
\label{sec:hjr}

Accurately determining $\mathcal{S}_f$ and $\Pi_f$ plays a crucial role in the success of the feasibility-dependent objective. This reminds us of the effectiveness of Hamilton-Jacobi (HJ) reachability~\citep{bansal2017hamilton} for enforcing hard constraint in safe-control theory, which has recently been adopted in safe online RL studies~\citep{fisac2019bridging,yu2022reachability}. We first provide a brief overview of the basics in HJ reachability (Definition~\ref{def:feasible_value_function}) and then delve into the instantiation of the feasible region $\mathcal{S}_f$ and policy set $\Pi_f$ in the HJ reachability framework (Definition~\ref{def:feasible_region}-\ref{def:feasible_policy}).



 \begin{definition}[Optimal feasible value function]
\label{def:feasible_value_function}
The optimal feasible state-value function $V_h^{\ast}$ , and the optimal feasible action-value function $Q_h^{\ast}$ are defined as~\citep{bansal2017hamilton}:
\begin{align}
\label{vh}
V^{\ast}_h(s) &:= \min_{\pi} V_h^\pi(s):=\min_{\pi}\max_{t \in \mathbb{N}}h(s_t),s_0=s,a_t\sim\pi(\cdot \mid s_t), \\
\label{qh}
Q^{\ast}_h(s,a) &:= \min_{\pi} Q_h^\pi(s,a):=\min_{\pi}\max_{t \in \mathbb{N}}h(s_t),s_0=s,a_0=a,a_{t+1}\sim\pi(\cdot \mid s_{t+1}), 
\vspace{-3pt}
\end{align}
\end{definition}
where $V_h^\pi$ represents the maximum constraint violations in the trajectory induced by policy $\pi$ starting from a state $s$. The (optimal) feasible value function possesses the following properties:
\begin{itemize}[leftmargin=*] 
\item $V_h^\pi(s)\le0 \Rightarrow \forall s_t, h(s_t) \le 0$, indicating $\pi$ can satisfy the hard constraint starting from $s$. Moreover, $V_h^*(s)\le0\Rightarrow  \min_\pi V_h^\pi\le0\Rightarrow \exists\pi, V_h^\pi\le0$, meaning that there exists a policy that satisfies the hard constraint.
\item $V_h^\pi(s) > 0 \Rightarrow \exists s_t, h(s_t) > 0$, suggesting $\pi$ cannot satisfy the hard constraint; and $V_h^*(s)>0 \Rightarrow \min_\pi V_h^\pi>0\Rightarrow \forall \pi, V_h^\pi>0$, meaning that no policy can satisfy the hard constraint.
\end{itemize}
We can see from these properties that the feasible value function indicates whether policies can satisfy hard constraints (\textit{i.e.}, feasibility of policies); while the optimal feasible value function represents whether there exists a policy that achieves hard constraints (\textit{i.e.}, feasibility of state), from which we define the feasible region (Definition~\ref{def:feasible_region})~\citep{yu2022reachability} and feasible policy set (Definition~\ref{def:feasible_policy}):
\begin{definition}[Feasible region]
\label{def:feasible_region}
The feasible region of $\pi$ and the largest feasible region are:
\begin{align}
\mathcal{S}_f^{\pi}:=\left\{s|V_h^\pi(s) \leq 0\right\}~~~~~~~~~~~~~~~~~~~~~~~~\mathcal{S}_f^{\ast}:=\left\{s|V_h^\ast(s) \leq 0\right\} \nonumber
\end{align}
\end{definition}
\begin{definition}[Feasible policy set]
\label{def:feasible_policy}
For state $s$, the feasible policy set is: $\Pi_f(s):=\left\{ \pi|V_h^{\pi}(s)\leq 0\right\}$.
\end{definition}
$\mathcal{S}_f^\pi$ is induced by policy $\pi$ and includes all the states in which $\pi$ can satisfy the hard constraint. $\mathcal{S}_f^*$ is the largest feasible region, which is the union of all feasible regions~\citep{choi2021robust, li2023reinforcement}. Intuitively, $\mathcal{S}_f^*$ includes all the states where there exists at least one policy that satisfies hard constraint, and this property formally satisfies the requirement of feasible region in Eq.~(\ref{eq:feasible_hard_constraint_obj_abs}). Besides, 
$\Pi_f(s)$ includes all policies that satisfy hard constraint starting from state $s$. This set can be employed as a constraint in Eq.~(\ref{eq:feasible_hard_constraint_obj_abs}). Note that the range of $V_h$ is the entire real number space, making $V_h$ more suitable as a hard constraint compared to $V_c$, as shown in Figure~\ref{fig:toycase}. 


\textbf{Learning Feasible Value Function from Offline Data}. 
Although HJ reachability holds the promise for enforcing hard constraint, calculating the optimal feasible value function based on Definition~\ref{def:feasible_value_function} requires Monte-Carlo estimation via interacting with the environment~\citep{bajcsy2019efficient}, which is inaccessible in offline setting. Fortunately, similar to approximate dynamic programming, we can obtain the approximated optimal feasible value by repeatedly applying a feasible bellman operator with a discounted factor $\gamma\rightarrow1$~\citep{fisac2019bridging} (Definition~\ref{feasible bellman operator}).

\begin{definition}[Feasible Bellman operator] The feasible Bellman operator defined below is a contraction mapping, with its fixed point $Q_{h,\gamma}^*$ satisfying $\lim_{\gamma\rightarrow1}Q_{h,\gamma}^*\rightarrow Q_h^*$~\citep{fisac2019bridging}.
\label{feasible bellman operator}
\begin{equation}
\label{operator}
\mathcal{P}^{\ast}Q_h(s,a):=(1-\gamma)h(s)+\gamma \max \{h(s), V^{\ast}_h(s')\}, \quad V^{\ast}_h(s')=\min_{a'}Q_h(s',a')
\vspace{-5pt}
\end{equation}
\end{definition}
Given the offline dataset $\mathcal{D}$, we can approximate $Q_h^{\ast}$ by minimizing $\mathbb{E}_{\mathcal{D}}[\left(\mathcal{P}^{\ast}Q_h - Q_h\right)^2 ]$. Note that the $\min_{a'}$ operation in Eq. (\ref{operator}) could query the out-of-distribution (OOD) actions, potentially leading to severe underestimation issues in offline setting \citep{fujimoto2019off}. To mitigate this, we consider a data support constrained operation $\min_{a', {\rm s.t.} \pi_\beta(a'|s')>0}$ that aims to estimate the minimum value over actions that are in the support of data distribution, but this requires an estimation of the behavior policy $\pi_{\beta}$. Inspired by \citet{kostrikov2021offline} that uses expectile regression to approximate the maximum value function without explicit behavioral modeling, we adopt a reversed version of this for learning the optimal (minimum) feasible value function via minimizing Eq.~(\ref{eq:expectile_v})-(\ref{eq:qh_loss}):
\begin{align}
\mathcal{L}_{V_h} &= \mathbb{E}_{(s,a)\sim \mathcal{D}}\left[L^{\tau}_{\rm rev}\left(Q_h(s,a)-V_h(s)\right) \right] ,\label{eq:expectile_v}\\
\mathcal{L}_{Q_h} &= \mathbb{E}_{(s,a,s')\sim \mathcal{D}}\left[\left(\left((1-\gamma)h(s)+\gamma\max \{h(s), V_h(s')\} \right) - Q_h(s,a)\right)^2 \right],\label{eq:qh_loss} 
\end{align}
where $L^{\tau}_{\rm rev}(u)=|\tau - \mathbb{I}(u>0)|u^2$. For $\tau\in(0.5, 1)$, this asymmetric loss diminishes the impact of $Q_h$ values that exceed $V_h$, placing greater emphasis on smaller values instead. Subsequently, we can substitute $V_h(s')$ for $\min_{a'}Q_h(s',a')$ within the feasible Bellman operator. By minimizing Eq. (\ref{eq:expectile_v}) and Eq. (\ref{eq:qh_loss}), we can predetermine the approximated largest feasible region, as shown in Figure~\ref{fig:toycase}.

\vspace{-2pt}
\subsection{Feasibility-Dependent Optimization Objective
\vspace{-2pt}
}
\label{sec:fea}

Given the feasible policy set $\Pi_f$ and the predetermined largest feasible region $\mathcal{S}_f^{\ast}$, we can instantiate the feasibility-dependent objective in Eq.~(\ref{eq:feasible_hard_constraint_obj_abs}) as follows, with $V_c$  changed to $V_h$:
\begin{equation}
\small
\label{eq:feasible_hard_constraint_obj}
\begin{aligned}
{\rm \textbf{Feasible:}}\;\max_{\pi} ~&\mathbb{E}_{s}\left[V^{\pi}_r(s)\cdot \mathbb{I}_{s \in \mathcal{S}_f^{\ast}}\right]  \\
\mathrm{s.t.}~ &V^{\pi}_h(s) \leq 0, \forall s \in \mathcal{S}_f^{\ast} \\&\text{D}_{\text{KL}}(\pi \| \pi_\beta) \leq \epsilon
\end{aligned}
\quad
\begin{aligned}
{\rm \textbf{Infeasible:}}\;\max_{\pi} ~&\mathbb{E}_{s}\left[- V^{\pi}_h(s)\cdot \mathbb{I}_{s \notin \mathcal{S}_f^{\ast}}\right]  \\
\mathrm{s.t.}~ &\text{D}_{\text{KL}}(\pi \| \pi_\beta) \leq \epsilon\\
{\color{white}}
\end{aligned}
\end{equation}
Eq. (\ref{eq:feasible_hard_constraint_obj}) is similar to the objective in the safe online RL method, RCRL~\citep{yu2022reachability}. However, RCRL suffers from severe training instability due to its coupling training procedure. In contrast, we predetermine $\mathcal{S}_f^*$ from the offline data.
However, the training of $V_r^\pi$, $V_h^\pi$, and $\pi$ is still coupling together, leading to potential training instability. To solve this, we show in Lemma~\ref{lemma:transformation} that these interrelated elements can be fully decoupled.
Based on this insight, we devise a new surrogate learning objective that disentangles the intricate dependencies among these elements.
\begin{lemma}The optimization objectives and safety constraints in Eq.~(\ref{eq:feasible_hard_constraint_obj}) can be achieved by separate optimization objectives and constraints as follows (see Appendix~\ref{ap:proof_lemma_trans} for proof):
\label{lemma:transformation}
\begin{itemize}[leftmargin=*]
    \item Optimization objective in the feasible region: $\max_{\pi} ~\mathbb{E}_{a \sim \pi}\left[A^{\ast}_r(s,a)\right] \Rightarrow \max_{\pi} ~\mathbb{E}_{s}\left[V^{\pi}_r(s)\right]$.
    \item Optimization objective in the infeasible region: $\max_{\pi} ~\mathbb{E}_{a \sim \pi}\left[-A^{\ast}_h(s,a)\right] \Rightarrow \max_{\pi} ~\mathbb{E}_{s}\left[-V^{\pi}_h(s)\right]$.
    \item Safety constraint in the feasible region: $\int_{\{a|Q^{\ast}_h(s,a) \leq 0\}}^{}\pi(\cdot | s){\rm{d}}a=1 \Rightarrow V_h^{\pi}(s) \le 0$.
\end{itemize}
\end{lemma}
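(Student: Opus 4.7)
The lemma contains three independent implications that I would prove separately: two reformulations of objective functions (in the feasible and infeasible regions) and one reformulation of the safety constraint. My overall plan is to lean on standard Bellman-optimality and policy-improvement reasoning for the two objective parts, and to prove the constraint reformulation by combining the Bellman consistency of $Q_h^*$ with a forward-invariance argument on $\mathcal{S}_f^*$.

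For the reward objective in the feasible region, I would observe that $A_r^*(s,a) = Q_r^*(s,a) - V_r^*(s)$ and that $V_r^*(s)$ does not depend on $\pi$, so maximizing $\mathbb{E}_{a\sim\pi(\cdot\mid s)}[A_r^*(s,a)]$ is equivalent to maximizing $\mathbb{E}_{a\sim\pi(\cdot\mid s)}[Q_r^*(s,a)]$. Any policy $\pi^{\star}$ that concentrates on $\argmax_a Q_r^*(s,a)$ is, by Bellman optimality for the reward MDP, an optimal policy, so it attains $V_r^{\pi^{\star}}(s) = V_r^*(s) \ge V_r^{\pi}(s)$ for every competing $\pi$, which delivers the first implication. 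The infeasible-region claim is symmetric: since $V_h^* = \min_\pi V_h^\pi$, maximizing $\mathbb{E}_{a\sim\pi(\cdot\mid s)}[-A_h^*(s,a)]$ places mass on $\argmin_a Q_h^*(s,a)$, and by the contraction property of the feasible Bellman operator in Definition~\ref{feasible bellman operator} such a greedy policy achieves $V_h^{\pi^{\star}} = V_h^*$ and therefore also maximizes $-V_h^\pi$.

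The main obstacle is the safety-constraint reformulation, because it is a claim about entire closed-loop trajectories rather than just a one-step decision. My plan has two ingredients. First, I would invoke the one-step Bellman consistency of the HJ action-value function, namely $Q_h^*(s,a) = \max\{h(s), V_h^*(s')\}$ with $s' = T(s,a)$ in the deterministic case (and an essential-sup reading over successors in the stochastic case, consistent with the $\max_t h(s_t)$ definition of $V_h$). From $Q_h^*(s,a) \le 0$ this simultaneously forces $h(s) \le 0$ and $V_h^*(s') \le 0$, so $s' \in \mathcal{S}_f^*$. Second, the support condition $\int_{\{a : Q_h^*(s,a)\le 0\}} \pi(a\mid s)\,\mathrm{d}a = 1$ is assumed at every $s \in \mathcal{S}_f^*$. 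An induction on $t$, seeded by $s_0 \in \mathcal{S}_f^*$, then shows that every reachable state $s_t$ remains in $\mathcal{S}_f^*$, the support condition keeps applying, every action $a_t$ drawn from $\pi$ satisfies $Q_h^*(s_t, a_t) \le 0$, and hence $h(s_t) \le 0$ for all $t$. Taking the maximum over $t$ finally yields $V_h^\pi(s) = \max_t h(s_t) \le 0$, as required.

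The delicate point throughout is the stochastic-transition case, which must be handled consistently with the pointwise-max definition of $V_h^\pi$; provided one reads the Bellman consistency with a worst-case over successors (which matches how $V_h^*$ is defined via $\min_\pi \max_t h(s_t)$), the induction goes through unchanged and all three implications follow.
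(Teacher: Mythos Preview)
Your proposal is correct and handles all three parts, but it takes a genuinely different route from the paper's proof in two places. For the reward objective, the paper invokes the performance-difference lemma $\eta(\pi)=\eta(\pi_r^*)+\mathbb{E}_{s\sim d^\pi}\mathbb{E}_{a\sim\pi}[A_r^*(s,a)]$ (Kakade--Langford/Schulman) and argues that a policy maximizing the advantage at every state in particular maximizes it under $d^\pi$, hence maximizes $\eta(\pi)$; you instead appeal directly to Bellman optimality, noting that the $Q_r^*$-greedy policy attains $V_r^*$ pointwise and therefore dominates every $V_r^\pi$. Your route is more elementary and avoids the performance-difference machinery, while the paper's route ties the claim to a well-known identity in policy-gradient theory. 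For the safety constraint, the paper rewrites the support condition as $Q_h^*(s,a)\pi(a\mid s)\le 0$, multiplies by $d^\pi(s)$, and essentially asserts that this gives $V_h^\pi(s)\le 0$; your forward-invariance induction via the Bellman identity $Q_h^*(s,a)=\max\{h(s),V_h^*(s')\}$ makes explicit the mechanism the paper leaves informal, namely that the closed-loop trajectory never leaves $\mathcal{S}_f^*$ and hence $h(s_t)\le 0$ for all $t$. The infeasible-region argument is essentially the same in both. Overall your version is more self-contained and arguably more rigorous on the constraint part; the paper's is shorter but leans on external results and a somewhat terse final step.
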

Lemma~\ref{lemma:transformation} shows that the maximization objectives in Eq.~(\ref{eq:feasible_hard_constraint_obj}) can be achieved 
 by finding the optimal advantages $A_r^*$, $A_h^*$ first, and then optimizing the policy $\pi$. The safety constraints in Eq.~(\ref{eq:feasible_hard_constraint_obj}) can also be enforced by using the predetermined $Q_h^*$ without coupling to other networks. Inspired by this, we propose a surrogate objective in Eq.~(\ref{eq:two_obj}) that fully decouples the intricate interrelations between the optimizations of $V_r^\pi, V_h^\pi$ and $\pi$. We dub this \textit{FISOR (FeasIbility-guided Safe Offline RL)} :
\begin{equation}
\small
\label{eq:two_obj}
\begin{aligned}
{\rm \textbf{Feasible:}}\;\max_{\pi} ~\mathbb{E}_{a \sim \pi}&\left[A^{\ast}_r(s,a)\cdot \mathbb{I}_{V^{\ast}_h(s) \leq 0}\right]  \\
\mathrm{s.t.}~ \int_{\{a|Q^{\ast}_h(s,a) \leq 0\}}& \pi(a | s){\rm{d}}a=1 , \forall s \in \mathcal{S}_f^{\ast}  \\
&{\textbf{\rm D}_\textbf{\rm KL}}(\pi \| \pi_\beta) \leq \epsilon
\end{aligned}
\quad
\begin{aligned}
{\rm \textbf{Infeasible:}}\;\max_{\pi} ~&\mathbb{E}_{a \sim \pi}\left[- A^{\ast}_h(s,a)\cdot \mathbb{I}_{V^{\ast}_h(s) > 0}\right]  \\
\mathrm{s.t.}~ & \int_{a}^{}\pi(\cdot | s){\rm{d}}a=1  \\
&{\textbf{\rm D}_\textbf{\rm KL}}(\pi \| \pi_\beta) \leq \epsilon\\
\end{aligned}
\end{equation}
We present an intuitive illustration in Figure~\ref{fig:traj} to demonstrate the characteristics of Eq.~(\ref{eq:two_obj}). In feasible regions, FISOR favors actions that yield high rewards while ensuring that the agent remains within feasible regions. The largest feasible region grants the policy with more freedom to maximize reward, allowing for enhanced performance.
As depicted in Figure~\ref{fig:traj}, agents originating from feasible regions can effectively find a direct and safe path to the goal. 

In infeasible regions, FISOR chooses actions that minimize constraint violations as much as possible. Figure~\ref{fig:traj} shows that the agents starting from infeasible regions initially prioritize escaping from infeasible regions, and then navigate towards the destination along safe paths.

\begin{wrapfigure}{r}{0.33\textwidth} 
\includegraphics[width=0.32\textwidth]{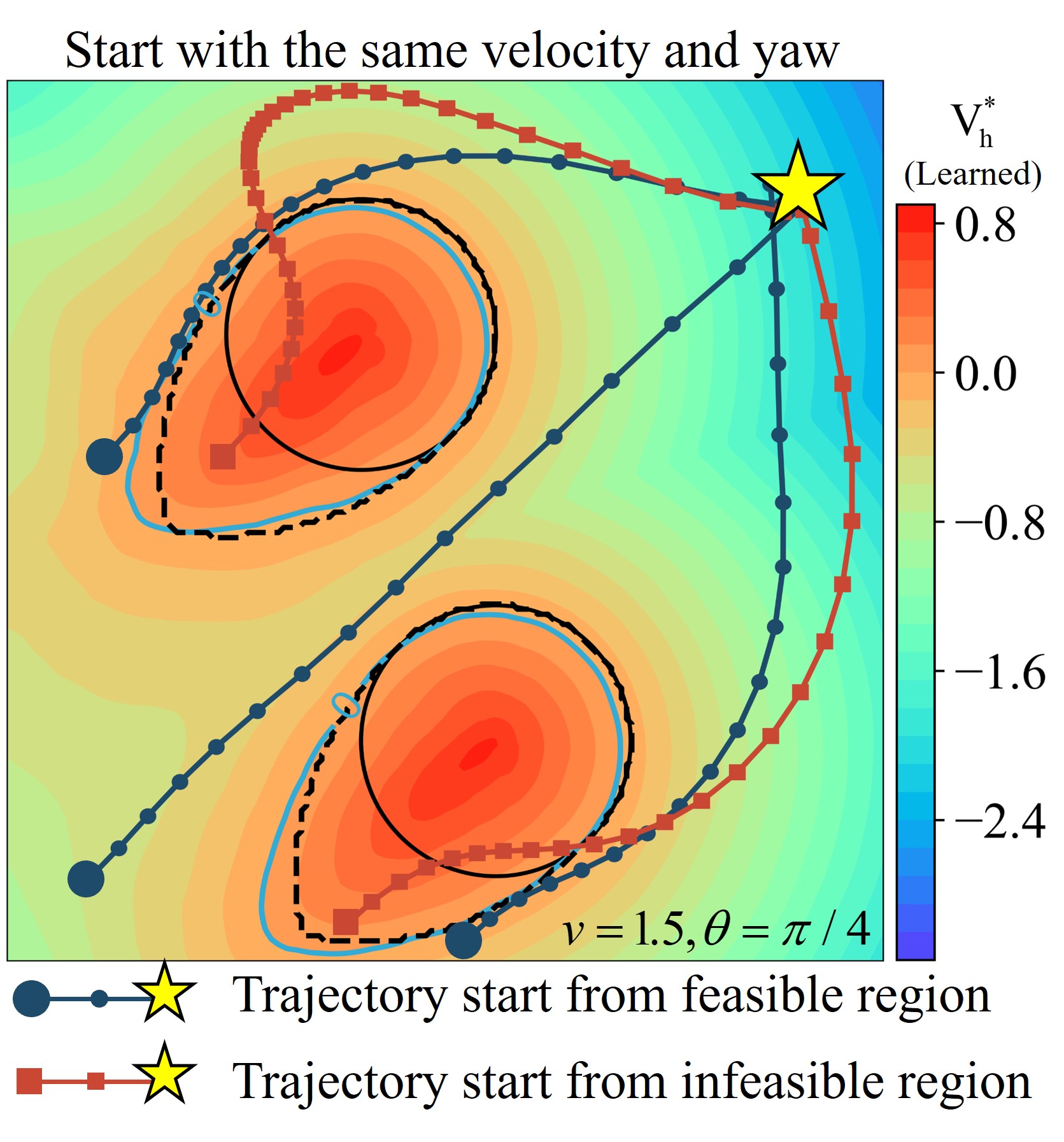}
\vspace{-5pt}
\caption{\small Trajectories induced by FISOR from different start points.}
\label{fig:traj}
\vspace{-0pt} 
\end{wrapfigure}
The decoupled optimization objective in Eq.~(\ref{eq:two_obj}) allows us to directly obtain the closed-form solution as shown in Theorem \ref{trm:solution} below (see Appendix \ref{ap:proof2} for detailed proof):

\begin{theorem}
\label{trm:solution}
The optimal solution for Eq.~(\ref{eq:two_obj}) satisfies:
\begin{equation}
\label{eq:close-form}
\pi^{\ast}(a|s) \propto \pi_\beta (a | s) \cdot w(s,a),
\end{equation}
where $w(s,a)$ is the weight function, and $\alpha_1$, $\alpha_2$ are temperatures that control the behavior regularization strength:
\begin{equation}
\small
\label{eq:weight}
 w(s,a) = \begin{cases}{
\begin{aligned}
& \exp\left(\alpha_1 A_r^{\ast}(s,a)\right)\cdot \mathbb{I}_{Q^{\ast}_h(s,a) \leq 0} &&V^{\ast}_h(s) \leq 0 \\
& \exp\left(-\alpha_2 A_h^{\ast}(s,a)\right)&& V^{\ast}_h(s) > 0 
\end{aligned}
}\end{cases}
\end{equation}
\end{theorem}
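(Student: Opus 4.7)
The plan is to decouple the proof into the two cases given by the sign of $V_h^{\ast}(s)$, and solve each case separately via Lagrangian duality and KKT conditions on the policy $\pi(\cdot|s)$ treated as a density on $\mathcal{A}$.

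For the infeasible case ($V_h^{\ast}(s) > 0$), the problem is a standard KL-regularized maximization over policies with only normalization and KL constraints. I would form the Lagrangian
\begin{equation*}
\gL(\pi,\alpha_2,\mu) = \int \pi(a|s)\bigl(-A_h^{\ast}(s,a)\bigr)\,da - \alpha_2\Bigl(\int \pi(a|s)\log\tfrac{\pi(a|s)}{\pi_\beta(a|s)}da - \eps\Bigr) + \mu\Bigl(\int \pi(a|s)\,da - 1\Bigr),
\end{equation*}
set the pointwise derivative $\partial \gL/\partial \pi(a|s) = -A_h^{\ast}(s,a) - \alpha_2(\log\pi - \log\pi_\beta + 1) + \mu$ to zero, and solve for $\pi$, absorbing the constants into a normalizing factor $Z(s)$. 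This immediately gives $\pi^{\ast}(a|s) \propto \pi_\beta(a|s)\exp(-\alpha_2 A_h^{\ast}(s,a))$ (after relabeling $1/\alpha_2 \mapsto \alpha_2$ in the notation of the theorem), matching the second branch of $w(s,a)$.

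For the feasible case ($V_h^{\ast}(s) \leq 0$), I would first argue that the safe action set $\mathcal{A}_{\mathrm{safe}}(s) := \{a : Q_h^{\ast}(s,a) \leq 0\}$ is nonempty, since by Definition~\ref{def:feasible_value_function} we have $V_h^{\ast}(s) = \min_a Q_h^{\ast}(s,a) \leq 0$. The constraint $\int_{\mathcal{A}_{\mathrm{safe}}(s)}\pi(a|s)\,da = 1$ combined with $\int \pi(a|s)\,da = 1$ forces $\pi(a|s) = 0$ on the complement $\mathcal{A} \setminus \mathcal{A}_{\mathrm{safe}}(s)$, so I restrict the optimization to densities supported on $\mathcal{A}_{\mathrm{safe}}(s)$ and set up the analogous Lagrangian over this restricted set. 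The same pointwise KKT calculation yields $\pi^{\ast}(a|s)\propto \pi_\beta(a|s)\exp(\alpha_1 A_r^{\ast}(s,a))$ on $\mathcal{A}_{\mathrm{safe}}(s)$, and multiplying by $\mathbb{I}_{Q_h^{\ast}(s,a) \leq 0}$ encodes the zero-on-unsafe-actions constraint, recovering the first branch of $w(s,a)$.

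The main obstacle I expect is the feasible-case support restriction: one must be careful that the indicator $\mathbb{I}_{Q_h^{\ast}(s,a)\leq 0}$ is not just a convenient notation but is genuinely enforced by the constraint, and that the resulting density is normalizable, i.e., that $\pi_\beta$ places positive measure on $\mathcal{A}_{\mathrm{safe}}(s)$. This last point requires an implicit data-coverage assumption on $\pi_\beta$ at feasible states, which I would state explicitly and note as a standard assumption in offline RL. The rest (advantage substitution via Lemma~\ref{lemma:transformation}, and the fact that subtracting $V_r^{\ast}(s)$ or $V_h^{\ast}(s)$ does not alter the argmax since those are constants in $a$) is routine and absorbs into the normalizing constant $Z(s)$.
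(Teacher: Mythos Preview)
Your proposal is correct and follows essentially the same route as the paper: split on the sign of $V_h^{\ast}(s)$, form the Lagrangian with a KL multiplier (and, in the feasible case, a multiplier for the safe-action constraint), and read off the exponential-tilted form from the stationarity condition. Your handling of the indicator in the feasible case—first arguing the constraint forces $\pi=0$ off $\mathcal{A}_{\mathrm{safe}}(s)$ and then optimizing on the restricted support—is slightly more explicit than the paper's, which folds the indicator directly into the Lagrange multiplier term, but the two derivations are equivalent.
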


The optimal solution in Eq. (\ref{eq:weight}) is similar to weighted behavior cloning, but adopts distinct weighting schemes in feasible and infeasible regions. Using Eq. (\ref{eq:weight}), we can easily extract the optimal policy in a remarkably simple and stable way, while achieving a balance between safety assurance, reward maximization, and behavior regularization.


\vspace{-2pt}
\subsection{Guided Diffusion Policy Learning Without Time-dependent Classifier}
\vspace{-2pt}
Theorem \ref{trm:solution} shows that the optimal policy $\pi^*$ needs to choose drastically different behaviors based on the current state's feasibility. We use diffusion model to parameterize policies due to its strong ability in modeling complex distributions~\citep{ho2020denoising, wang2023diffusion}.
To obtain the weighted distribution of $\pi^*$ in Eq.~(\ref{eq:close-form}), a simple way is to use classifier-guidance methods to guide the diffusion sampling procedures~\citep{lin2023safe,lu2023contrastive}. However, such methods require training an additional complicated time-dependent classifier, inevitably increasing the overall complexity. Instead, we demonstrate in Theorem~\ref{theorem:weighted_energy_equal_maintext} that the exact $\pi^*$ in Eq.~(\ref{eq:close-form}) can in fact be obtained without the training of time-dependent classifier, thus greatly reducing training difficulty. 
\begin{theorem}[Weighted regression as exact energy guidance]
    \label{theorem:weighted_energy_equal_maintext}
    We can sample $a\sim\pi^*(a|s)$ by optimizing the weighted regression loss in Eq.~(\ref{eq:pi_loss}) and solving the diffusion ODEs/SDEs given the obtained $\boldsymbol{z}_\theta$. (see Appendix~\ref{ap:diffusion_proof} for proof and thorough discussions)
\end{theorem}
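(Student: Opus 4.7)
The plan is to reduce the claim to the standard equivalence between weighted denoising score matching and unweighted denoising score matching against a reweighted target distribution, so that $\boldsymbol{z}_\theta$ directly learns the score of $\pi^*$ and no separate time-dependent classifier is needed. I would structure the argument in four steps.

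First, I would fix a forward diffusion kernel $q_{t\mid 0}(a_t\mid a_0)$ (a Gaussian) and introduce the perturbed marginal $p_t^*(a_t\mid s) := \int q_{t\mid 0}(a_t\mid a_0)\,\pi^*(a_0\mid s)\,\mathrm{d}a_0$. By the standard reverse-time SDE/ODE theory, sampling with score $\nabla_{a_t}\log p_t^*(a_t\mid s)$ from $t=T$ to $t=0$ produces exact draws $a\sim\pi^*(\cdot\mid s)$, so the theorem reduces to showing that the minimizer $\boldsymbol{z}_\theta^*$ of the weighted regression loss equals this score.

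Second, I would write the weighted denoising score matching loss
$$\mathcal{L}_\pi(\theta) = \mathbb{E}_{t,(s,a_0)\sim\mathcal{D},\,a_t\sim q_{t\mid 0}(\cdot\mid a_0)}\Bigl[w(s,a_0)\,\bigl\|\boldsymbol{z}_\theta(s,a_t,t) - \nabla_{a_t}\log q_{t\mid 0}(a_t\mid a_0)\bigr\|^2\Bigr],$$
with $w$ as in Theorem~\ref{trm:solution}. Since $(s,a_0)\sim\mathcal{D}$ amounts to drawing $a_0\sim\pi_\beta(\cdot\mid s)$ conditional on $s$, and $\pi^*(a_0\mid s) = Z(s)^{-1}\pi_\beta(a_0\mid s)\,w(s,a_0)$ by Theorem~\ref{trm:solution}, the reweighting $w(s,a_0)\pi_\beta(a_0\mid s) \propto \pi^*(a_0\mid s)$ lets me identify $\mathcal{L}_\pi$ (up to an $s$-dependent normalization that does not affect the pointwise minimizer in $a_t,t$) with the standard denoising score matching loss on the joint $\pi^*(a_0\mid s)\,q_{t\mid 0}(a_t\mid a_0)$. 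Tweedie's identity then gives $\boldsymbol{z}_\theta^*(s,a_t,t) = \mathbb{E}_{a_0\sim p_{0\mid t}^*}[\nabla_{a_t}\log q_{t\mid 0}(a_t\mid a_0)] = \nabla_{a_t}\log p_t^*(a_t\mid s)$.

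Third, I would combine the two steps: plugging $\boldsymbol{z}_\theta^*$ into the reverse-time SDE/ODE gives exact samples from $\pi^*$, matching the claim of Theorem~\ref{theorem:weighted_energy_equal_maintext}. The main obstacle I anticipate is the discontinuous weight in the feasible branch, $w(s,a_0) = \exp(\alpha_1 A_r^*(s,a_0))\,\mathbb{I}_{Q_h^*(s,a_0)\le 0}$: since this can vanish on a positive-measure subset of $\mathrm{supp}(\pi_\beta(\cdot\mid s))$, the reweighted distribution $\pi^*(\cdot\mid s)$ has strictly smaller support, and I must argue that the pointwise minimizer of $\mathcal{L}_\pi$ still recovers the correct score on the support of $p_t^*$ (with the values of $\boldsymbol{z}_\theta$ away from that support being irrelevant because the reverse trajectory concentrates on $\mathrm{supp}(p_t^*)$). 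A further subtlety is the state-wise switch between the two weighting schemes driven by the sign of $V_h^*(s)$; because $V_h^*$ is precomputed and depends only on $s$, this switch can be viewed as a conditional on $s$ that is consistent across all $t$, so a single $\boldsymbol{z}_\theta$ trained with the composite $w$ still learns the correct conditional score on each branch.
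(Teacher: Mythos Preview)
Your proposal is correct and follows essentially the same route as the paper: identify the weighted denoising score matching loss on $\pi_\beta$ with the unweighted DSM loss on $\pi^*\propto w\cdot\pi_\beta$, conclude via the Vincent/Tweedie identity that the optimal network recovers (up to the $-\sigma_t$ reparameterization) the score $\nabla_{a_t}\log p_t^*$, and then invoke the reverse ODE/SDE to sample from $\pi^*$. Your treatment is in fact slightly more careful than the paper's --- you flag the $s$-dependent normalizer $Z(s)$ (which reweights states but leaves the pointwise minimizer unchanged under sufficient capacity) and the reduced-support issue from the indicator $\mathbb{I}_{Q_h^*\le 0}$ in $w$, both of which the paper's explicit calculation (expanding the squared norm and dropping $\theta$-independent terms) passes over silently.
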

Specifically, the exact $\pi^*$ can be obtained by modifying the diffusion training loss for the behavior policy $\pi_\beta$: $\min_\theta\mathbb{E}_{t,(s,a),z}\left[\|z-z_\theta(a_t,s,t)\|_2^2\right]$ via simply augmenting the weight function $w(s,a)$ in Eq.~(\ref{eq:weight}) with the following weighted loss:
\begin{equation}
\label{eq:pi_loss}
\min_\theta\mathcal{L}_{z}(\theta) = \min_\theta\mathbb{E}_{t\sim \mathcal{U}([0,T]),z \sim \mathcal{N}(0,I),(s,a)\sim \mathcal{D}}\left[ w(s,a) \cdot \left \| z-z_\theta\left(a_t,s,t\right) \right \|_2^2 \right],
\end{equation}
where $a_t=\alpha_t a+\sigma_t z$ is the noisy action that satisfies the forward transition distribution $\mathcal{N}(a_t|\alpha_t a, \sigma_t^2 I)$ in the diffusion model, and $\alpha_t, \sigma_t$ are human designed noise schedules. After obtaining $\theta$, we can sample from the approximated optimal policy $a\sim\pi^*_\theta(a|s)$ by solving the diffusion ODEs/SDEs~\citep{song2021score} as stated in Theorem~\ref{theorem:weighted_energy_equal_maintext}. Eq.~(\ref{eq:pi_loss}) bypasses the training of the complicated time-dependent classifier in typical guided diffusion models~\citep{lin2023safe,lu2023contrastive} and forms a weighted regression objective, which has proved stable against directly maximizing the value functions~\citep{kostrikov2021offline}. This weighted loss form has been used in recent studies~\citep{hansen2023idql, kang2023efficient}, but they did not provide theoretical investigation in its inherent connections to exact energy guidance for diffusion policies~\citep{lu2023contrastive} 
(see Appendix~\ref{subsec:guided_sampling_related_works} for comparison with other guided sampling methods).

\vspace{-1pt}
\subsection{Practical Implementation}
\vspace{-1pt}
Theorem \ref{trm:solution} offers a decoupled solution for extracting the optimal policy from the dataset. The remaining challenge is how to obtain the optimal advantages $A_r^*$ in advance. In-sample learning methods, such as IQL~\citep{kostrikov2021offline}, SQL~\citep{xu2023offline} and XQL~\citep{garg2023extreme}, decouple policy learning from value function learning and allows the approximation of $A_r^*=Q_r^*-V_r^*$ without  explicit policy, offering improved training stability.
We use IQL to learn the optimal value function $Q_r^{\ast}$ and $V_r^{\ast}$ in our method with $L^\tau(u)=\left|\tau-\mathbb{I}(u<0)\right|u^2$, $\tau\in(0.5, 1)$:
\begin{align}
\label{eq:vr_loss}
\mathcal{L}_{V_r} & = \mathbb{E}_{(s,a)\sim \mathcal{D}}\left[L^{\tau}\left(Q_r(s,a)-V_r(s)\right) \right], \\
\label{eq:qr_loss}
\mathcal{L}_{Q_r} & = \mathbb{E}_{(s,a,s')\sim \mathcal{D}}\left[\left(r+ \gamma V_r(s') - Q_r(s,a)\right)^2 \right].
\end{align}
So far, we have transformed the original tightly-coupled safety-constrained offline RL problem into three decoupled simple supervised objectives:
$1)$ Offline identification of the largest feasible region (Eq. (\ref{eq:expectile_v}-\ref{eq:qh_loss})); $2)$ Optimal advantage learning (Eq. (\ref{eq:vr_loss}-\ref{eq:qr_loss})); and $3)$ Optimal policy extraction via guided diffusion model (Eq. (\ref{eq:pi_loss})). This disentanglement of learning objectives provides a highly stable and conducive solution to safe offline RL for practical applications.
To further enhance safety, we sample $N$ action candidates from the diffusion policy and select the safest one (\textit{i.e.}, the lowest $Q_h^{\ast}$ value) as the final output as in other safe RL methods~\citep{dalal2018safe, thananjeyan2021recovery}. Please see Appendix \ref{ap:ad_exp_details} and \ref{ap:pseu} for implementation details and algorithm pseudocode. Code is available at: \url{https://github.com/ZhengYinan-AIR/FISOR}.
    


\vspace{-5pt}
\section{Experiments}
\vspace{-5pt}
\begin{table}[t]
\centering
\caption{\small Normalized DSRL~\citep{liu2023datasets} benchmark results.
$\uparrow$ means the higher the better. $\downarrow$ means the lower the better. 
Each value is averaged over 20 evaluation episodes and 3 seeds. {\color[HTML]{656565} Gray}: Unsafe agents.
\textbf{Bold}: Safe agents whose normalized cost is smaller than 1. 
{\color[HTML]{0000FF} \textbf{Blue}}: Safe agents with the highest reward.} 
\label{tab:main-result}
\vspace{-5pt}
\resizebox{1.\linewidth}{!}{
\begin{tabular}{ccccccccccccccccc}
\toprule[1pt]
& \multicolumn{2}{c}{BC}  & \multicolumn{2}{c}{CDT}  & \multicolumn{2}{c}{BCQ-Lag} & \multicolumn{2}{c}{CPQ} & \multicolumn{2}{c}{COptiDICE} & \multicolumn{2}{c}{TREBI} & \multicolumn{2}{c}{FISOR (ours)} \\ 
\cmidrule(r){2-3} \cmidrule(r){4-5} \cmidrule(r){6-7} \cmidrule(r){8-9} \cmidrule(r){10-11} \cmidrule(r){12-13} \cmidrule(r){14-15}
\multirow{-2}{*}{Task} & reward $\uparrow$ & cost $\downarrow$ & reward $\uparrow$ & cost $\downarrow$ & reward $\uparrow$  & cost $\downarrow$  & reward $\uparrow$  & cost $\downarrow$  & reward $\uparrow$ & cost $\downarrow$  & reward $\uparrow$   & cost $\downarrow$  & reward $\uparrow$  & cost $\downarrow$   \\ \midrule
CarButton1  
& {\color[HTML]{656565} 0.01}  & {\color[HTML]{656565} 6.19}
& {\color[HTML]{656565} 0.17}  & {\color[HTML]{656565} 7.05}
& {\color[HTML]{656565} 0.13}  & {\color[HTML]{656565} 6.68}
& {\color[HTML]{656565} 0.22}  & {\color[HTML]{656565} 40.06}
& {\color[HTML]{656565} -0.16}  & {\color[HTML]{656565} 4.63}
& {\color[HTML]{656565} 0.07}  & {\color[HTML]{656565} 3.75}
& {\color[HTML]{0000FF} \textbf{-0.02}}  & {\color[HTML]{0000FF} \textbf{0.26}}
\\
CarButton2
& {\color[HTML]{656565} -0.10}  & {\color[HTML]{656565} 4.47}
& {\color[HTML]{656565} 0.23}  & {\color[HTML]{656565} 12.87}
& {\color[HTML]{656565} -0.04}  & {\color[HTML]{656565} 4.43}
& {\color[HTML]{656565} 0.08}  & {\color[HTML]{656565} 19.03}
& {\color[HTML]{656565} -0.17}  & {\color[HTML]{656565} 3.40}
& \textbf{-0.03}  & \textbf{0.97}
& {\color[HTML]{0000FF} \textbf{0.01}}  & {\color[HTML]{0000FF} \textbf{0.58}}
\\
CarPush1
& {\color[HTML]{656565} 0.21}  & {\color[HTML]{656565} 1.97}
& {\color[HTML]{656565} 0.27}  & {\color[HTML]{656565} 2.12}
& {\color[HTML]{656565} 0.23}  & {\color[HTML]{656565} 1.33}
& \textbf{0.08}  & \textbf{0.77}
& {\color[HTML]{656565} 0.21}  & {\color[HTML]{656565} 1.28}
& {\color[HTML]{656565} 0.26}  & {\color[HTML]{656565} 1.03}
& {\color[HTML]{0000FF} \textbf{0.28}}  & {\color[HTML]{0000FF} \textbf{0.28}}
\\
CarPush2
& {\color[HTML]{656565} 0.11}  & {\color[HTML]{656565} 3.89}
& {\color[HTML]{656565} 0.16}  & {\color[HTML]{656565} 4.60}
& {\color[HTML]{656565} 0.10}  & {\color[HTML]{656565} 2.78}
& {\color[HTML]{656565} -0.03}  & {\color[HTML]{656565} 10.00}
& {\color[HTML]{656565} 0.10}  & {\color[HTML]{656565} 4.55}
& {\color[HTML]{656565} 0.12}  & {\color[HTML]{656565} 2.65}
& {\color[HTML]{0000FF} \textbf{0.14}}  & {\color[HTML]{0000FF} \textbf{0.89}}
\\
CarGoal1
& {\color[HTML]{656565} 0.35}  & {\color[HTML]{656565} 1.54}
& {\color[HTML]{656565} 0.60}  & {\color[HTML]{656565} 3.15}
& {\color[HTML]{656565} 0.44}  & {\color[HTML]{656565} 2.76}
& {\color[HTML]{656565} 0.33}  & {\color[HTML]{656565} 4.93}
& {\color[HTML]{656565} 0.43}  & {\color[HTML]{656565} 2.81}
& {\color[HTML]{656565} 0.41}  & {\color[HTML]{656565} 1.16}
& {\color[HTML]{0000FF} \textbf{0.49}}  & {\color[HTML]{0000FF} \textbf{0.83}}
\\
CarGoal2
& {\color[HTML]{656565} 0.22}  & {\color[HTML]{656565} 3.30}
& {\color[HTML]{656565} 0.45}  & {\color[HTML]{656565} 6.05}
& {\color[HTML]{656565} 0.34}  & {\color[HTML]{656565} 4.72}
& {\color[HTML]{656565} 0.10}  & {\color[HTML]{656565} 6.31}
& {\color[HTML]{656565} 0.19}  & {\color[HTML]{656565} 2.83}
& {\color[HTML]{656565} 0.13}  & {\color[HTML]{656565} 1.16}
& {\color[HTML]{0000FF} \textbf{0.06}}  & {\color[HTML]{0000FF} \textbf{0.33}}
\\
AntVel
& {\color[HTML]{656565} 0.99}  & {\color[HTML]{656565} 12.19}
& {\color[HTML]{0000FF} \textbf{0.98}}  & {\color[HTML]{0000FF} \textbf{0.91}}
& {\color[HTML]{656565} 0.85}  & {\color[HTML]{656565} 18.54}
& \textbf{-1.01}  & \textbf{0.00}
& {\color[HTML]{656565} 1.00}  & {\color[HTML]{656565} 10.29}
& \textbf{0.31}  & \textbf{0.00}
& \textbf{0.89}  & \textbf{0.00}
\\
HalfCheetahVel
& {\color[HTML]{656565} 0.97}  & {\color[HTML]{656565} 17.93}
& {\color[HTML]{0000FF} \textbf{0.97}}  & {\color[HTML]{0000FF} \textbf{0.55}}
& {\color[HTML]{656565} 1.04}  & {\color[HTML]{656565} 57.06}
& {\color[HTML]{656565} 0.08}  & {\color[HTML]{656565} 2.56}
& \textbf{0.43}  & \textbf{0.00}
& \textbf{0.87}  & \textbf{0.23}
& \textbf{0.89}  & \textbf{0.00}
\\
SwimmerVel
& {\color[HTML]{656565} 0.38}  & {\color[HTML]{656565} 2.98}
& {\color[HTML]{656565} 0.67}  & {\color[HTML]{656565} 1.47}
& {\color[HTML]{656565} 0.29}  & {\color[HTML]{656565} 4.10}
& {\color[HTML]{656565} 0.31}  & {\color[HTML]{656565} 11.58}
& {\color[HTML]{656565} 0.58}  & {\color[HTML]{656565} 23.64}
& {\color[HTML]{656565} 0.42}  & {\color[HTML]{656565} 1.31}
& {\color[HTML]{0000FF} \textbf{-0.04}}  & {\color[HTML]{0000FF} \textbf{0.00}}
\\ \midrule
\begin{tabular}[c]{@{}c@{}}\textbf{SafetyGym}\\ \textbf{Average}\end{tabular}       
& {\color[HTML]{656565} 0.35}  & {\color[HTML]{656565} 6.05}
& {\color[HTML]{656565} 0.50}  & {\color[HTML]{656565} 4.31}
& {\color[HTML]{656565} 0.38}  & {\color[HTML]{656565} 11.38}
& {\color[HTML]{656565} 0.02}  & {\color[HTML]{656565} 10.58}
& {\color[HTML]{656565} 0.29}  & {\color[HTML]{656565} 5.94}
& {\color[HTML]{656565} 0.28}  & {\color[HTML]{656565} 1.36}
& {\color[HTML]{0000FF} \textbf{0.30}}  & {\color[HTML]{0000FF} \textbf{0.35}}
\\ \midrule
AntRun
& {\color[HTML]{656565} 0.73}  & {\color[HTML]{656565} 11.73}
& {\color[HTML]{656565} 0.70}  & {\color[HTML]{656565} 1.88}
& {\color[HTML]{656565} 0.65}  & {\color[HTML]{656565} 3.30}
& \textbf{0.00}  & \textbf{0.00}
& {\color[HTML]{656565} 0.62}  & {\color[HTML]{656565} 3.64}
& {\color[HTML]{656565} 0.63}  & {\color[HTML]{656565} 5.43}
& {\color[HTML]{0000FF} \textbf{0.45}}  & {\color[HTML]{0000FF} \textbf{0.03}}
\\
BallRun
& {\color[HTML]{656565} 0.67}  & {\color[HTML]{656565} 11.38}
& \textbf{0.32}  & \textbf{0.45}
& {\color[HTML]{656565} 0.43}  & {\color[HTML]{656565} 6.25}
& {\color[HTML]{656565} 0.85}  & {\color[HTML]{656565} 13.67}
& {\color[HTML]{656565} 0.55}  & {\color[HTML]{656565} 11.32}
& {\color[HTML]{656565} 0.29}  & {\color[HTML]{656565} 4.24}
& {\color[HTML]{0000FF} \textbf{0.18}}  & {\color[HTML]{0000FF} \textbf{0.00}}
\\
CarRun
& {\color[HTML]{656565} 0.96}  & {\color[HTML]{656565} 1.88}
& {\color[HTML]{656565} 0.99}  & {\color[HTML]{656565} 1.10}
& {\color[HTML]{656565} 0.84}  & {\color[HTML]{656565} 2.51}
& {\color[HTML]{656565} 1.06}  & {\color[HTML]{656565} 10.49}
& {\color[HTML]{0000FF} \textbf{0.92}}  & {\color[HTML]{0000FF} \textbf{0.00}}
& {\color[HTML]{656565} 0.97}  & {\color[HTML]{656565} 1.01}
& \textbf{0.73}  & \textbf{0.14}
\\
DroneRun
& {\color[HTML]{656565} 0.55}  & {\color[HTML]{656565} 5.21}
& {\color[HTML]{0000FF} \textbf{0.58}}  & {\color[HTML]{0000FF} \textbf{0.30}}
& {\color[HTML]{656565} 0.80}  & {\color[HTML]{656565} 17.98}
& {\color[HTML]{656565} 0.02}  & {\color[HTML]{656565} 7.95}
& {\color[HTML]{656565} 0.72}  & {\color[HTML]{656565} 13.77}
& {\color[HTML]{656565} 0.59}  & {\color[HTML]{656565} 1.41}
& \textbf{0.30}  & \textbf{0.55}
\\
AntCircle
& {\color[HTML]{656565} 0.65}  & {\color[HTML]{656565} 19.45}
& {\color[HTML]{656565} 0.48}  & {\color[HTML]{656565} 7.44}
& {\color[HTML]{656565} 0.67}  & {\color[HTML]{656565} 19.13}
& \textbf{0.00}  & \textbf{0.00}
& {\color[HTML]{656565} 0.18}  & {\color[HTML]{656565} 13.41}
& {\color[HTML]{656565} 0.37}  & {\color[HTML]{656565} 2.50}
& {\color[HTML]{0000FF} \textbf{0.20}}  & {\color[HTML]{0000FF} \textbf{0.00}}
\\
BallCircle
& {\color[HTML]{656565} 0.72}  & {\color[HTML]{656565} 10.02}
& {\color[HTML]{656565} 0.68}  & {\color[HTML]{656565} 2.10}
& {\color[HTML]{656565} 0.67}  & {\color[HTML]{656565} 8.50}
& {\color[HTML]{656565} 0.40}  & {\color[HTML]{656565} 4.37}
& {\color[HTML]{656565} 0.70}  & {\color[HTML]{656565} 9.06}
& {\color[HTML]{656565} 0.63}  & {\color[HTML]{656565} 1.89}
& {\color[HTML]{0000FF} \textbf{0.34}}  & {\color[HTML]{0000FF} \textbf{0.00}}
\\
CarCircle
& {\color[HTML]{656565} 0.65}  & {\color[HTML]{656565} 11.16}
& {\color[HTML]{656565} 0.71}  & {\color[HTML]{656565} 2.19}
& {\color[HTML]{656565} 0.68}  & {\color[HTML]{656565} 8.84}
& {\color[HTML]{656565} 0.49}  & {\color[HTML]{656565} 4.48}
& {\color[HTML]{656565} 0.44}  & {\color[HTML]{656565} 7.73}
& {\color[HTML]{0000FF} \textbf{0.49}}  & {\color[HTML]{0000FF} \textbf{0.73}}
& \textbf{0.40}  & \textbf{0.11}
\\
DroneCircle
& {\color[HTML]{656565} 0.82}  & {\color[HTML]{656565} 13.78}
& {\color[HTML]{656565} 0.55}  & {\color[HTML]{656565} 1.29}
& {\color[HTML]{656565} 0.95}  & {\color[HTML]{656565} 18.56}
& {\color[HTML]{656565} -0.27}  & {\color[HTML]{656565} 1.29}
& {\color[HTML]{656565} 0.24}  & {\color[HTML]{656565} 2.19}
& {\color[HTML]{656565} 0.54}  & {\color[HTML]{656565} 2.36}
& {\color[HTML]{0000FF} \textbf{0.48}}  & {\color[HTML]{0000FF} \textbf{0.00}}
\\ \midrule
\begin{tabular}[c]{@{}c@{}}\textbf{BulletGym}\\ \textbf{Average}\end{tabular}       
& {\color[HTML]{656565} 0.72}  & {\color[HTML]{656565} 10.58}
& {\color[HTML]{656565} 0.63}  & {\color[HTML]{656565} 2.09}
& {\color[HTML]{656565} 0.71}  & {\color[HTML]{656565} 10.63}
& {\color[HTML]{656565} 0.32}  & {\color[HTML]{656565} 5.28}
& {\color[HTML]{656565} 0.55}  & {\color[HTML]{656565} 7.64}
& {\color[HTML]{656565} 0.56}  & {\color[HTML]{656565} 2.45}
& {\color[HTML]{0000FF} \textbf{0.39}}  & {\color[HTML]{0000FF} \textbf{0.10}}
\\ \midrule
easysparse
& {\color[HTML]{656565} 0.32}  & {\color[HTML]{656565} 4.73}
& \textbf{0.05}  & \textbf{0.10}
& {\color[HTML]{656565} 0.99}  & {\color[HTML]{656565} 14.00}
& \textbf{-0.06}  & \textbf{0.24}
& {\color[HTML]{656565} 0.94}  & {\color[HTML]{656565} 18.21}
& {\color[HTML]{656565} 0.26}  & {\color[HTML]{656565} 6.22}
& {\color[HTML]{0000FF} \textbf{0.38}}  & {\color[HTML]{0000FF} \textbf{0.53}}
\\
easymean
& {\color[HTML]{656565} 0.22}  & {\color[HTML]{656565} 2.68}
& \textbf{0.27}  & \textbf{0.24}
& {\color[HTML]{656565} 0.54}  & {\color[HTML]{656565} 10.35}
& \textbf{-0.06}  & \textbf{0.24}
& {\color[HTML]{656565} 0.74}  & {\color[HTML]{656565} 14.81}
& {\color[HTML]{656565} 0.19}  & {\color[HTML]{656565} 4.85}
& {\color[HTML]{0000FF} \textbf{0.38}}  & {\color[HTML]{0000FF} \textbf{0.25}}
\\
easydense 
& {\color[HTML]{656565} 0.20}  & {\color[HTML]{656565} 1.70}
& {\color[HTML]{656565} 0.43}  & {\color[HTML]{656565} 2.31}
& {\color[HTML]{656565} 0.40}  & {\color[HTML]{656565} 6.64}
& \textbf{-0.06}  & \textbf{0.29}
& {\color[HTML]{656565} 0.60}  & {\color[HTML]{656565} 11.27}
& {\color[HTML]{656565} 0.26}  & {\color[HTML]{656565} 5.81}
& {\color[HTML]{0000FF} \textbf{0.36}}  & {\color[HTML]{0000FF} \textbf{0.25}}
\\
mediumsparse
& {\color[HTML]{656565} 0.53}  & {\color[HTML]{656565} 1.74}
& {\color[HTML]{656565} 0.26}  & {\color[HTML]{656565} 2.20}
& {\color[HTML]{656565} 0.93}  & {\color[HTML]{656565} 7.48}
& \textbf{-0.08}  & \textbf{0.18}
& {\color[HTML]{656565} 0.64}  & {\color[HTML]{656565} 7.26}
& {\color[HTML]{656565} 0.06}  & {\color[HTML]{656565} 1.70}
& {\color[HTML]{0000FF} \textbf{0.42}}  & {\color[HTML]{0000FF} \textbf{0.22}}
\\
mediummean
& {\color[HTML]{656565} 0.66}  & {\color[HTML]{656565} 2.94}
& {\color[HTML]{656565} 0.28}  & {\color[HTML]{656565} 2.13}
& {\color[HTML]{656565} 0.60}  & {\color[HTML]{656565} 6.35}
& \textbf{-0.08}  & \textbf{0.28}
& {\color[HTML]{656565} 0.73}  & {\color[HTML]{656565} 8.35}
& {\color[HTML]{656565} 0.20}  & {\color[HTML]{656565} 1.90}
& {\color[HTML]{0000FF} \textbf{0.39}}  & {\color[HTML]{0000FF} \textbf{0.08}}
\\
mediumdense 
& {\color[HTML]{656565} 0.65}  & {\color[HTML]{656565} 3.79}
& \textbf{0.29}  & \textbf{0.77}
& {\color[HTML]{656565} 0.64}  & {\color[HTML]{656565} 3.78}
& \textbf{-0.08}  & \textbf{0.20}
& {\color[HTML]{656565} 0.91}  & {\color[HTML]{656565} 9.52}
& {\color[HTML]{656565} 0.03}  & {\color[HTML]{656565} 1.18}
& {\color[HTML]{0000FF} \textbf{0.49}}  & {\color[HTML]{0000FF} \textbf{0.44}}
\\
hardsparse
& {\color[HTML]{656565} 0.28}  & {\color[HTML]{656565} 1.98}
& \textbf{0.17}  & \textbf{0.47}
& {\color[HTML]{656565} 0.48}  & {\color[HTML]{656565} 7.52}
& \textbf{-0.04}  & \textbf{0.28}
& {\color[HTML]{656565} 0.34}  & {\color[HTML]{656565} 7.34}
& \textbf{0.00}  & \textbf{0.82}
& {\color[HTML]{0000FF} \textbf{0.30}}  & {\color[HTML]{0000FF} \textbf{0.01}}
\\
hardmean
& {\color[HTML]{656565} 0.34}  & {\color[HTML]{656565} 3.76}
& {\color[HTML]{656565} 0.28}  & {\color[HTML]{656565} 3.32}
& {\color[HTML]{656565} 0.31}  & {\color[HTML]{656565} 6.06}
& \textbf{-0.05}  & \textbf{0.24}
& {\color[HTML]{656565} 0.36}  & {\color[HTML]{656565} 7.51}
& {\color[HTML]{656565} 0.16}  & {\color[HTML]{656565} 4.91}
& {\color[HTML]{0000FF} \textbf{0.26}}  & {\color[HTML]{0000FF} \textbf{0.09}}
\\
harddense 
& {\color[HTML]{656565} 0.40}  & {\color[HTML]{656565} 5.57}
& {\color[HTML]{656565} 0.24}  & {\color[HTML]{656565} 1.49}
& {\color[HTML]{656565} 0.39}  & {\color[HTML]{656565} 5.11}
& \textbf{-0.04}  & \textbf{0.24}
& {\color[HTML]{656565} 0.42}  & {\color[HTML]{656565} 8.11}
& {\color[HTML]{656565} 0.02}  & {\color[HTML]{656565} 1.21}
& {\color[HTML]{0000FF} \textbf{0.30}}  & {\color[HTML]{0000FF} \textbf{0.34}}
\\ \midrule
\begin{tabular}[c]{@{}c@{}}\textbf{MetaDrive}\\ \textbf{Average}\end{tabular}       
& {\color[HTML]{656565} 0.40}  & {\color[HTML]{656565} 3.21}
& {\color[HTML]{656565} 0.25}  & {\color[HTML]{656565} 1.45}
& {\color[HTML]{656565} 0.59}  & {\color[HTML]{656565} 7.48}
& \textbf{-0.06}  & \textbf{0.24}
& {\color[HTML]{656565} 0.63}  & {\color[HTML]{656565} 10.26}
& {\color[HTML]{656565} 0.13}  & {\color[HTML]{656565} 3.18}
& {\color[HTML]{0000FF} \textbf{0.36}}  & {\color[HTML]{0000FF} \textbf{0.25}}
\\ \bottomrule[1pt]
\end{tabular}
}
\vspace{-10pt}
\end{table}

\textbf{Evaluation Setups.}
We conduct extensive evaluations on Safety-Gymnasium \citep{Ray2019,ji2023omnisafe}, Bullet-Safety-Gym \citep{gronauer2022bullet} and MetaDrive \citep{li2022metadrive} tasks on DSRL benchmark \citep{liu2023datasets} to evaluate FISOR against
other SOTA safe offline RL methods. We use \textit{normalized return} and \textit{normalized cost} as the evaluation metrics, where a normalized cost below 1 indicates safety. As stated in DSRL, we take safety as the primary criterion for evaluations, and pursue higher rewards on the basis of meeting safety requirements. In order to mimic safety-critical application scenarios, we set more stringent safety requirements, where the more difficult task Safety-Gymnasium has its cost limit set to 10, and other environments are set to 5.  


\textbf{Baselines.} 
We compare FISOR with the following baselines:
$1)$ \textit{BC}: Behavior cloning that imitates the whole datasets. $2)$ \textit{CDT} \citep{liu2023constrained}: A Decision Transformer based method that considers safety constraints. $3)$ \textit{BCQ-Lag}: A Lagrangian-based method that considers safety constraints on the basis of BCQ \citep{fujimoto2019off}. $4)$ \textit{CPQ} \citep{xu2022constraints}: Treat the OOD action as an unsafe action and update the Q-value function with safe state-action. $5)$ \textit{COptiDICE} \citep{lee2022coptidice}: A DICE (distribution correction estimation) based safe offline RL method that builds on OptiDICE \citep{lee2021optidice}. $6)$ \textit{TREBI} \citep{lin2023safe}: A diffusion-based method using classifier-guidance to sample safe trajectory. See Appendix~\ref{ap:exp} for more experimental details.
\begin{figure}[t]
\vspace{-20pt}
\centering
\includegraphics[width=0.95\textwidth]{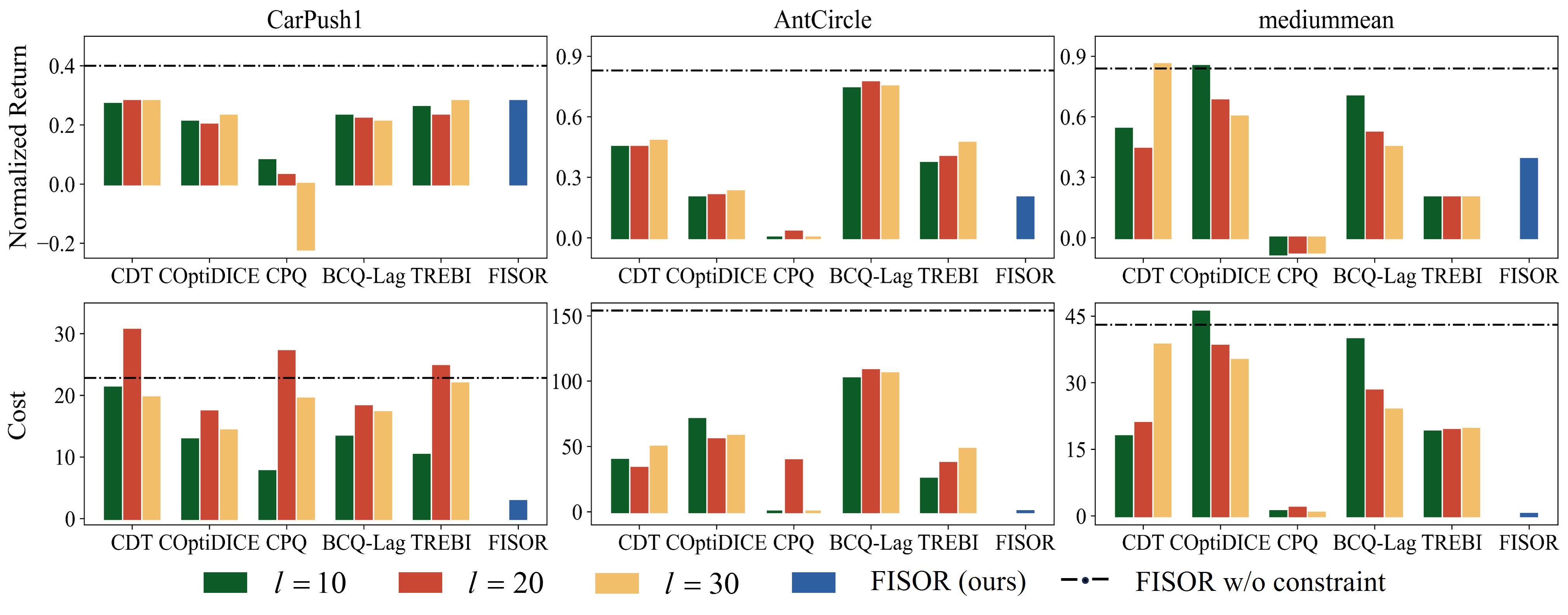}
\vspace{-8pt}
\caption{\small Soft constraint sensitivity experiments for cost limit $l$ in three environments.}
\label{fig:cost_limit}
\includegraphics[width=0.95\textwidth]{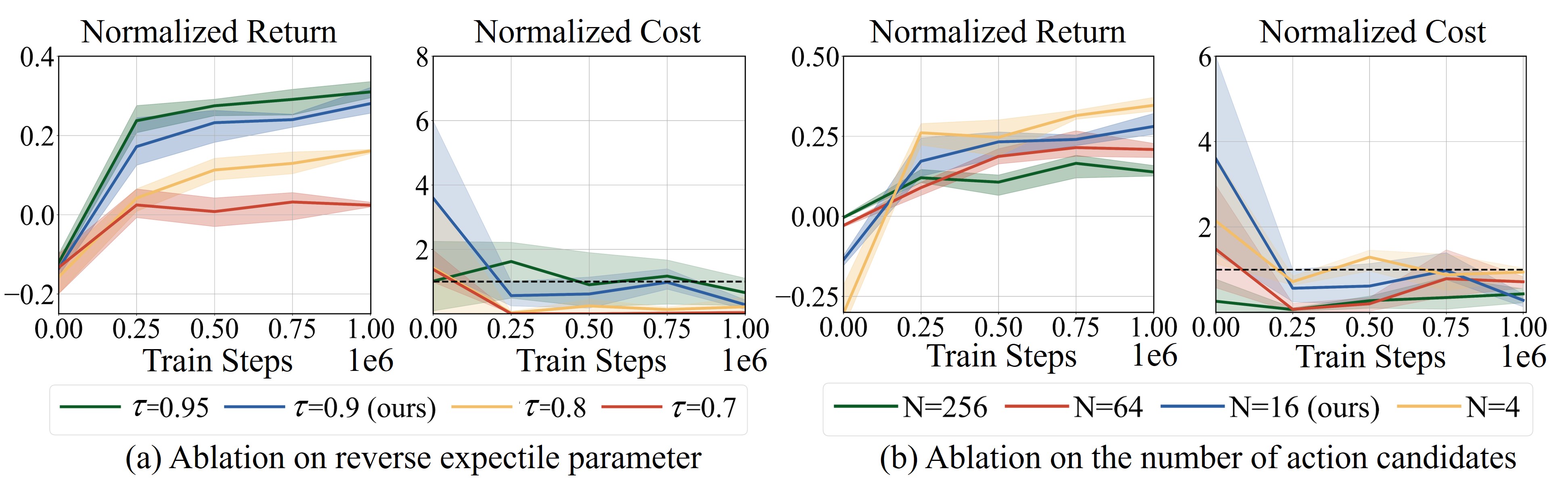}
\vspace{-10pt}
\caption{\small Ablations on hyperparameter choices in CarPush1.}
\vspace{-17pt}
\label{fig:ablation_sample}
\end{figure} 

\textbf{Main Results.}
Evaluation results are presented in Table \ref{tab:main-result}. FISOR is the only method that achieves satisfactory safety performance in all tasks and obtains the highest return in most tasks, demonstrating its superiority in achieving both safety and high rewards. Other methods either suffer from severe constraint violations or suboptimal returns. \textit{BCQ-Lag} tries to balance performance and safety using the Lagrangian method, but the coupled training procedure makes it unstable and difficult to find the right balance, leading to inferior safety and reward results. \textit{CPQ} takes a more conservative approach by only updating the value function on safe state-action pairs, which can result in very low returns when constraints are enforced. \textit{CDT} and \textit{TREBI} use complex network architectures, which can meet the constraints and achieve high returns on some tasks, but their overall safety performance is not very impressive since they use soft constraints.

\vspace{-6pt}
\subsection{Ablation Study and Analysis}
\vspace{-2pt}
\textbf{Soft Constraint Under Different Cost Limits}. 
We evaluate the sensitivity of cost limit selection $l$ for soft-constraint-based methods to demonstrate the effectiveness of hard constraint. We evaluate three cost limits and present the results  in Figure~\ref{fig:cost_limit}. \textit{FISOR w/o constraint} refers to our algorithm without safety constraints, focusing solely on maximizing rewards. Figure~\ref{fig:cost_limit} shows that most soft-constraint-based methods are highly sensitive to the value of cost limit. In some cases, choosing a small cost limit even leads to an increase in the final cost. This shows that it is difficult for these methods to select the right cost limit to achieve the best performance, which requires task-specific tuning. 
In contrast, our algorithm, which considers hard constraints, does not encounter this issue and achieves superior results using only one group of hyperparameters.

\begin{wraptable}{r}{5.cm}
\vspace{-10pt}
    \centering
    \scriptsize
    \setlength{\tabcolsep}{2pt}
    \caption{\small Ablations on HJ reachability.}
    \vspace{-5pt}
    \begin{tabular}{ccccccccccc}
    \toprule
        & \multicolumn{2}{c}{w/o HJ}  & \multicolumn{2}{c}{FISOR}\\
        \cmidrule(r){2-3} \cmidrule(r){4-5}
    \multirow{-2}{*}{Task} & reward $\uparrow$ & cost $\downarrow$ & reward $\uparrow$ & cost $\downarrow$ \\ \midrule
AntRun
& 0.30  & 0.44
& \colorbox{mine}{0.45}  & \colorbox{mine}{0.03}
\\
BallRun
& 0.08  & 0.14
& \colorbox{mine}{0.18}  & \colorbox{mine}{0.00}
\\
CarRun
& -0.33  & \colorbox{mine}{0.00}
& \colorbox{mine}{0.73}  & 0.14
\\
DroneRun
& -0.11  & 5.12
& \colorbox{mine}{0.30}  & \colorbox{mine}{0.55}
\\
AntCircle
& 0.00  & \colorbox{mine}{0.00}
& \colorbox{mine}{0.20}  & \colorbox{mine}{0.00}
\\
BallCircle
& 0.02  & 0.81
& \colorbox{mine}{0.34}  & \colorbox{mine}{0.00}
\\
CarCircle
& 0.01  & 2.16
& \colorbox{mine}{0.40}  & \colorbox{mine}{0.11}
\\
DroneCircle
& -0.21  & 0.26
& \colorbox{mine}{0.48}  & \colorbox{mine}{0.00}
\\
    \bottomrule
    \end{tabular}
    \vspace{-12pt}
        \label{tab:ablation_hj}
\end{wraptable}

\textbf{Hyperparameter Choices}. We sweep two hyperparameters: the expectile value $\tau$ and the number of sampled action candidates $N$ during evaluation. Figure \ref{fig:ablation_sample} shows that $\tau$ and $N$ primarily affect conservatism level, but do not significantly impact safety satisfaction.
A larger $\tau$ and smaller $N$ result in a more aggressive policy, whereas a smaller $\tau$ and larger $N$ lead to a more conservative behavior. We find that setting $\tau=0.9$ and $N=16$ consistently yields good results in terms of both safety and returns. Hence, we evaluate FISOR on all 26 tasks using these same hyperparameters. Please refer to Appendix~\ref{ap:param} for detailed hyperparameter setups.

\textbf{Ablations on Each Design Choice.} 
We demonstrate the effectiveness of individual components of our method: HJ reachability, feasibility-dependent objective, and diffusion model.
$1)$ To verify the advantage of using HJ reachability, we ablate on the choice of determining the largest feasible region using cost value function instead of feasible value function. This variant is denoted as \textit{w/o HJ}. 
The results are summarized in Figure \ref{fig:toycase} and Table \ref{tab:ablation_hj}. For the variant \textit{w/o HJ}, 
the presence of approximation errors renders the estimated infeasible region pretty large and inaccurate, which leads to severe policy conservatism and hurt final returns. By contrast, the feasible value function in FISOR can accurately identify the largest feasible region, which greatly increases optimized returns. 
$2)$ We train FISOR by setting the weight for the infeasible region in Eq.~(\ref{eq:weight}) to 0
(\textit{w/o infeasible}) to examine the necessity of minimizing constraint violation in infeasible regions. $3)$ We also ablate on replacing diffusion policy with conventional Gaussian policy (\textit{w/o diffusion}). 
The results are presented in Table \ref{tab:ablation}, where \textit{w/o infeasible} suffers from severe constraint violations without learning recovery behavior in infeasible regions. 
Meanwhile, \textit{w/o diffusion} struggles to effectively fit the data distribution both inside and outside the feasible region, resulting in high constraint violations.



\begin{figure*}[t]
\vspace{-20pt}
\begin{minipage}[h]{.4\textwidth}
   \captionof{table}{\small Ablations on infeasible objective and diffusion policies (normalized cost).}
   \vspace{-5pt}
    \centering
    \scriptsize
    \setlength{\tabcolsep}{2pt}
    \begin{tabular}{lccc}
    \toprule
        Task  &w/o infeasible &w/o diffusion &FISOR \\
    \midrule
        CarButton1  & 4.61 & 0.72  & \colorbox{mine}{0.26} \\
        CarButton2  & 6.53 & 1.73  & \colorbox{mine}{0.58} \\
        CarPush1  & 0.86  & 1.21 &\colorbox{mine}{0.28} \\
        CarPush2  & 3.10  & 4.45 &\colorbox{mine}{0.89} \\
        CarGoal1  & 4.00  & 2.71 &\colorbox{mine}{0.83} \\
        CarGoal2  & 5.46  & 2.82 &\colorbox{mine}{0.33} \\
    \bottomrule
    \label{tab:ablation}
    \end{tabular}
\end{minipage}
\begin{minipage}[h]{.6\textwidth}
\centering
\includegraphics[width=0.98\textwidth]{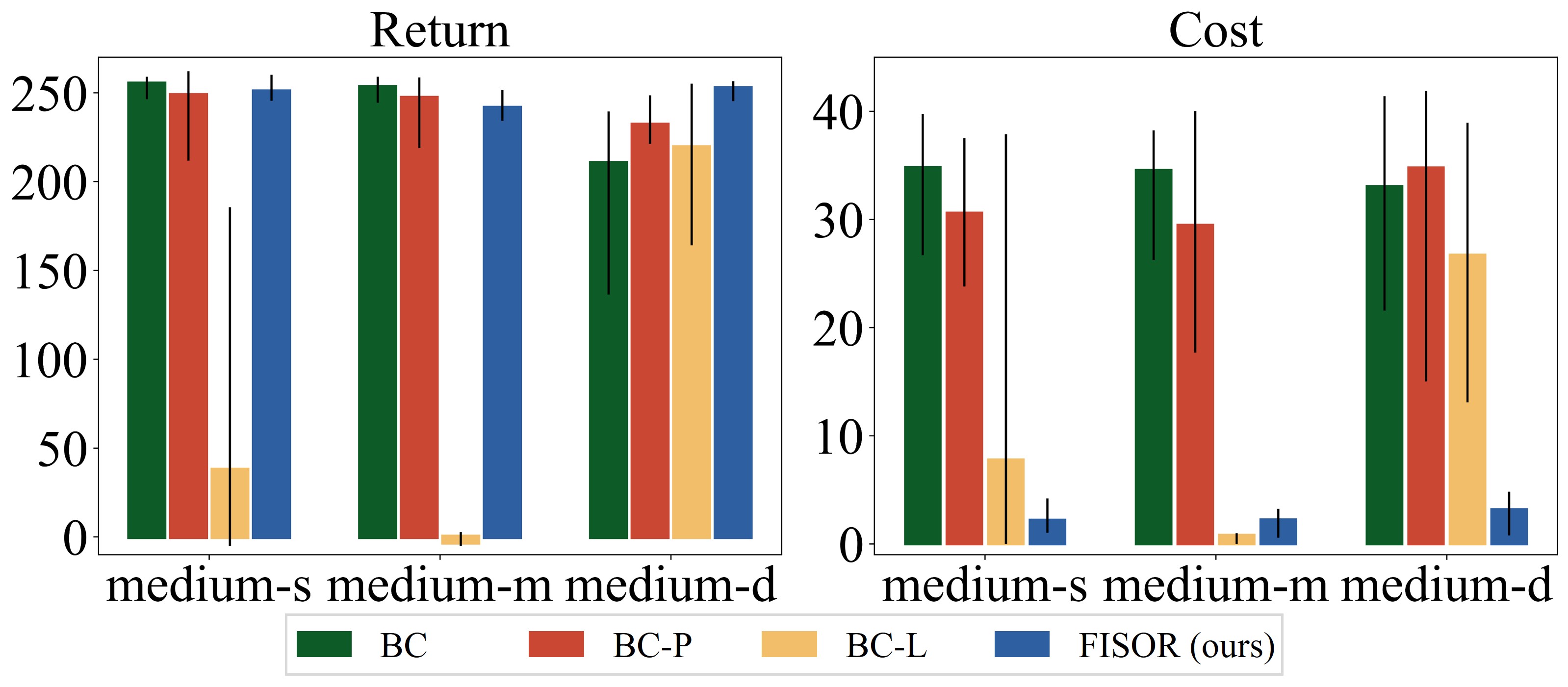}
\vspace{-5pt}
\caption{\small Safe offline IL results in MetaDrive.}
\label{fig:il}
\end{minipage}
\hfill
\vspace{-10pt}
\end{figure*}

\textbf{Extension to Safe Offline Imitation Learning.} 
We also show the versatility of FISOR by extending it to the safe offline Imitation Learning (IL) setting, where we aim to learn a safe policy given both safe and unsafe high-reward trajectories without reward labels.
In comparison to $1)$ BC, $2)$ BC with fixed safety penalty (\textit{BC-P}), and $3)$ Lagrangian penalty (\textit{BC-L}), FISOR can imitate expert behavior and meanwhile avoid unsafe outcomes, as illustrated in Figure \ref{fig:il}. See Appendix \ref{ap:il} for details.

\vspace{-5pt}
\section{Related Work}
\vspace{-5pt}
Most existing safe RL studies focus on the online setting, which typically use the Lagrangian method to solve the constrained problem~\citep{chow2017risk,tessler2018reward,ding2020natural}, thus resulting in an intertwined problem with stability issues~\citep{saxena2021generative}. CPO~\citep{achiam2017constrained} utilizes trust region technique and theoretically guarantees safety during training. However, all these methods cannot guarantee safety during the entire online training phase. By contrast, safe offline RL learns policies from offline datasets without risky online interactions. CPQ~\citep{xu2022constraints} is the first practical safe offline RL method that assigns high-cost values to both OOD and unsafe actions, which distorts the value function and may cause poor generalizability~\citep{li2023when}. COptiDICE~\citep{lee2022coptidice} is a DICE-based method~\citep{lee2021optidice} with safety constraints, but the residual learning of DICE may lead to inferior results~\citep{baird1995residual, mao2023revealing}. Recently, some methods~\citep{liu2023constrained, lin2023safe} incorporate safety into Decision Transformer~\citep{chen2021decision} or Diffuser~\citep{janner2022planning} architecture, but are highly computational inefficient. Moreover, all these safe offline RL methods only consider soft constraint that enforces constraint violations in expectation, which lacks strict safety assurance. Some studies~\citep{choi2020reinforcement,yang2023feasible,wabersich2023data} utilize safety certificates from control theory, such as control barrier function (CBF)~\citep{ames2019control,lee2023data,kim2023safety} and HJ reachability~\citep{bansal2017hamilton,fisac2019bridging,yu2022reachability}, to ensure state-wise zero violations (hard constraint), but only applicable to online RL.

\vspace{-5pt}
\section{Conclusion and Future Work}
\vspace{-5pt}
We propose FISOR, which enables safe offline policy learning with hard safety constraints.
We introduce an offline version of the optimal feasible value function in HJ reachability to characterize the feasibility of states.
This leads to a feasibility-dependent objective which can be further solved using three simple decoupled learning objectives.
Experiments show that FISOR can achieve superior performance and stability while guaranteeing safety requirements. It can also be extended to safe offline IL problems. One caveat is that limited offline data size could hurt the algorithm's performance.
Therefore, extending FISOR to a safe offline-to-online RL framework can be a viable future direction to improve performance with online interactions, while retaining safety.

\section*{Acknowledgement}
This work is supported by National Key Research and Development Program of China under Grant (2022YFB2502904), and funding from Haomo.AI.

\bibliography{iclr2024_conference}
\bibliographystyle{iclr2024_conference}

\newpage
\appendix

\section{Extended Discussions on Related Works}
\label{ap:related_works}

\subsection{Offline RL}
\label{ap:offline_rl}

To address the issue of distributional shift when learning RL policies solely from offline datasets, a straightforward approach is to include policy constraints that enforce the learned policy to remain close to the behavior policy~\citep{fujimoto2019off,kumar2019stabilizing, fujimoto2021minimalist,li2023proto} or generalizable regions in data~\citep{li2023when,cheng2023look}. Alternatively, value regularization methods do not impose direct policy constraints; instead, they penalize the value of OOD actions to mitigate distributional shift~\citep{kumar2020conservative,f-brc,niu2022trust,yang2022rorl,lyu2022mildly}. These methods require jointly training actor and critic networks, which can be unstable in practice. Recently, in-sample learning methods~\citep{kostrikov2021offline, xu2023offline, wang2023offline, garg2023extreme, xu2022policy,ma2021offline} exclusively leverage state-action pairs from the dataset for value and policy learning with decoupled losses, greatly enhancing stability. In addition, some recent studies employ expressive diffusion models as policies to capture complex distributions~\citep{janner2022planning,lu2023contrastive,hansen2023idql}, also achieving impressive performances.

\subsection{Additional Discussion on Hard Constraint and Soft Constraint}
\label{ap:constraint}

Based on different safety requirements, we divide existing works into two categories: soft constraint and hard constraint.

\textbf{Hard Constraint}. 
The hard constraint requires the policy to achieve state-wise zero constraint violation:
\begin{equation*}
 h(s_t) \le 0, a \sim \pi, \forall t \in \mathbb{N}.
\end{equation*}
It can also be represented by cost function:
\begin{equation*}
 c(s_t) = 0, a \sim \pi, \forall t \in \mathbb{N}.
\end{equation*}

\textbf{Soft Constraint}. 
The most commonly used soft constraint restricts the expected cumulative costs below the cost limit $l$:
\begin{equation*}
\mathbb{E}_{\tau \sim \pi}\left[ {\textstyle \sum_{t=0}^{\infty}}c(s_t)  \right] \leq l, \mathbb{E}_{\tau \sim \pi}\left[ {\textstyle \sum_{t=0}^{\infty}}\gamma^{t}c(s_t)  \right] \leq l.
\end{equation*}
There are a bunch of works focus on the state-wise constraint~\citep{ma2021feasible,zhao2023state}:
\begin{equation*}
 c(s_t) \le w, a \sim \pi, \forall t \in \mathbb{N}.
\end{equation*}
Since this type of problem still allows for a certain degree of constraint violation, we classify it as a soft constraint.

\renewcommand{\arraystretch}{1.4}
\begin{table}[h]
    \centering
    \caption{Detailed comparisons with related safe RL methods.}
    \begin{tabular}[2.0]{lllllll}
    \toprule
          & Online RL & Offline RL \\
    \midrule
    \multirow{5}*{Soft Constraint}
          & CPO~\citep{achiam2017constrained}& CBPL~\citep{le2019batch}\\
          & RCRL~\citep{chow2017risk} & CPQ~\citep{xu2022constraints} \\
          &RCPO~\citep{tessler2018reward} & COptiDICE~\citep{lee2022coptidice} \\
          &FOCOPS~\citep{zhang2020first} & CDT~\citep{liu2023constrained} \\
          & PCPO~\citep{yang2020projection}&TREBI~\citep{lin2023safe} \\
          & {Saut{\'e}~\citep{sootla2022saute}} & \\
          \midrule
    \multirow{5}*{Hard Constraint}
          &RL-CBF-CLF-QP~\citep{choi2021robust} & \\
          &ISSA~\citep{zhao2021model}& \\
          & FAC-SIS~\citep{ma2022joint} &  \\
          & RCRL~\citep{yu2022reachability} & FISOR (ours)\\
          &SRLNBC~\citep{yang2023model} & \\
          & {SNO-MDP~\citep{wachi2020safe}} & \\
          & \cite{wang2023enforcing} & \\
    \bottomrule
    \end{tabular}

    \label{tab:classify}
\end{table}

{\textbf{Discussions on Related Works (Hard Constraint).} There is a group of safe online RL works considering hard constraints that have inspired us. \cite{wang2023enforcing} proposed a method for learning safe policy with joint soft barrier function learning, generative modeling, and policy optimization. However, the intertwined learning process is not suitable for offline scenarios. SNO-MDP~\citep{wachi2020safe} primarily focuses on the expansion of safe regions in online exploration. Additionally, the Gaussian process modeling approach may lead to considerable computational expense. In contrast, FISOR can learn the feasible region directly from the dataset without policy evaluation. ISSA~\citep{zhao2021model} can achieve zero constraint violations, but its need for task-specific prior knowledge to design safety constraints makes it unsuitable for general cases. RCRL~\citep{yu2022reachability} employs reachability theory in an online setting, but its use of Lagrangian iterative solving methods hinders algorithm stability. FISOR achieves stable training across multiple tasks through its decoupled optimization objectives.}

\textbf{Algorithm Classification}
In order to better distinguish the safety constraints used by existing methods and consider their training methods (offline or online), we summarize them as shown in Table \ref{tab:classify}. Many safe-control works pretrain safety certificates using offline methods~\citep{robey2020learning,zhao2021model}, but use them for online control without an offline trained policy. These methods are not considered as safe offline RL algorithms.

\section{Theoretical Analysis}
\label{ap:proof}
We first analyze the relationship between the feasible-dependent optimization problem in Eq.~(\ref{eq:feasible_hard_constraint_obj_abs}) and the commonly used safe RL optimization objective in Eq.~(\ref{eq:safe_offline}). And we provide the proof of key Theorems.

\subsection{Feasibility-Dependent Optimization}
\label{ap:feasdep}
This section analyzes the relationship between feasibility-dependent optimization and commonly used safe RL optimization objectives from a safety perspective. Therefore, we overlook the KL divergence constraint. The widely used safe RL optimization objective can be written as:
\begin{equation}
\label{eq:safe_offline_ap}
\max_{\pi} ~\mathbb{E}_{s}\left[V^{\pi}_r(s)\right]  \quad\quad
\mathrm{s.t.}~ \mathbb{E}_{s}\left[V^{\pi}_c(s)\right] \leq l.
\end{equation}
When we consider the hard constraint situation, we can set the cost limit to zero and drop the expectation in the constraints:
\begin{equation}
\label{eq:hard_ap}
\max_{\pi} ~\mathbb{E}_{s}\left[V^{\pi}_r(s)\right]  \quad\quad
\mathrm{s.t.}~ V^{\pi}_c(s) \leq 0.
\end{equation}
First, we divide the state space into two parts: the feasible region and the infeasible region. In the feasible region, where states are denoted as $s \in \mathcal{S}_f$, we can find a policy that satisfies hard constraint in Eq.~(\ref{eq:hard_ap}). Conversely, we cannot find a policy that satisfies hard constraints in the infeasible region $s \notin \mathcal{S}_f$. Based on above notations, we introduce Lemma \ref{lm:equivalent} (Proposition 4.2 in ~\cite{yu2022reachability}):
\begin{lemma}
\label{lm:equivalent}
Eq.~(\ref{eq:hard_ap}) is equivalent to:
\begin{equation}
\label{eq:qc_feasible}
\begin{aligned}
\max_{\pi} ~&\mathbb{E}_{s}\left[V^{\pi}_r(s)\cdot \mathbb{I}_{s \in \mathcal{S}_f}- V^{\pi}_c(s)\cdot \mathbb{I}_{s \notin \mathcal{S}_f}\right] \\
\mathrm{s.t.}~ &V^{\pi}_c(s) \leq 0, s \in \mathcal{S}_f.
\end{aligned}
\end{equation}
\end{lemma}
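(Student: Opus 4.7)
The plan is to decompose the state space into the feasible region $\mathcal{S}_f$ and its complement, split the expectation over $s$ accordingly in both problems, and then match the two problems region by region. The key observation I would exploit is that the indicator functions in Eq.~(\ref{eq:qc_feasible}) cleanly partition the contribution of each state, so a policy's score in the reformulation can be computed independently on $\{s:s\in\mathcal{S}_f\}$ and $\{s:s\notin\mathcal{S}_f\}$. Since the policy is state-conditional, the argmax problem likewise decouples across these two regions, which is what will let me argue equivalence piecewise.

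First, I would verify the equivalence on $\mathcal{S}_f$. On this region the reformulated objective collapses to $\mathbb{E}_s[V_r^{\pi}(s)\mathbb{I}_{s\in\mathcal{S}_f}]$ (the second term vanishes), and its constraint is exactly $V_c^{\pi}(s)\leq 0$; this is pointwise identical to Eq.~(\ref{eq:hard_ap}) restricted to $\mathcal{S}_f$. Hence any maximizer of one problem on $\mathcal{S}_f$ also maximizes the other. Next, on $\{s:s\notin\mathcal{S}_f\}$, the original hard constraint $V_c^{\pi}(s)\leq 0$ cannot be satisfied by any policy (by Definition~\ref{def:feasible_region}), so the constraint there is intrinsically unsatisfiable. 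The reformulation accommodates this by dropping the constraint on infeasible states and instead adding $-V_c^{\pi}(s)$ to the objective, which forces minimizing-violation behavior; I would argue this is the natural extension of the original problem to infeasible states, since Eq.~(\ref{eq:hard_ap}) is otherwise vacuous there.

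The main obstacle is making the notion of ``equivalence'' rigorous given that Eq.~(\ref{eq:hard_ap}) admits no feasible policy whenever infeasible states are reachable. My plan is to address this through a limiting-penalty argument: regard Eq.~(\ref{eq:hard_ap}) as the $\lambda\to\infty$ limit of the Lagrangian $\max_{\pi}\mathbb{E}_s[V_r^{\pi}(s)-\lambda V_c^{\pi}(s)]$. As $\lambda$ grows, the penalty dominates $V_r^{\pi}$ on states where $V_c^{\pi}>0$ is unavoidable (i.e., $s\notin\mathcal{S}_f$), collapsing the objective there to pure $-V_c^{\pi}$ minimization; meanwhile on $\mathcal{S}_f$ the penalty activates only upon constraint violation, recovering hard-constrained $V_r^{\pi}$ maximization. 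After the appropriate rescaling per region, this yields precisely Eq.~(\ref{eq:qc_feasible}). A small technical point to confirm is that, because $V_c^{\pi}\geq 0$ always and the policy may be chosen independently on the feasible and infeasible parts of the state distribution, the region-wise optimizers combine into a single global optimizer, closing the equivalence.
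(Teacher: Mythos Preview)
The paper does not actually prove this lemma: in Appendix~\ref{ap:feasdep} it is simply stated with a citation to Proposition~4.2 of \citet{yu2022reachability}, and no argument is supplied. So there is no in-paper proof to compare your proposal against, and the assessment below is of your sketch on its own merits.

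Your region-wise decomposition and the observation that the two problems coincide on $\mathcal{S}_f$ are the right starting points. The gap is in the infeasible-region step. You correctly identify that Eq.~(\ref{eq:hard_ap}) has an empty feasible set whenever the state distribution touches $\mathcal{S}\setminus\mathcal{S}_f$, so ``equivalence'' cannot mean ``same feasible set and same optimal value.'' But your proposed repair---send the Lagrange multiplier $\lambda\to\infty$ and then apply ``appropriate rescaling per region''---is not a well-defined operation: applying \emph{different} normalizations to the feasible and infeasible pieces of a single scalar objective is not an equivalence-preserving transformation, and at best argues that Eq.~(\ref{eq:qc_feasible}) is a principled \emph{relaxation} of Eq.~(\ref{eq:hard_ap}). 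To make this rigorous you would need to state precisely what notion of solution you are using for the infeasible problem (e.g., lexicographic optimality: first minimize $V_c^\pi$, then maximize $V_r^\pi$ among minimizers) and show the two formulations share optimizers under that notion.

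A second, smaller point: your closing claim that ``the policy may be chosen independently on the feasible and infeasible parts'' is not automatic. The quantity $V_c^\pi(s)$ for $s\notin\mathcal{S}_f$ depends on what $\pi$ does \emph{after} the trajectory enters $\mathcal{S}_f$, so the two region-wise subproblems are coupled through the dynamics. The coupling is in fact benign---any $\pi$ with $V_c^\pi=0$ on $\mathcal{S}_f$ contributes zero further cost once the trajectory lands there, so it is compatible with cost-minimization from infeasible starts---but that compatibility is precisely what must be argued, not assumed.
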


For practical implementation, $V^{\pi}_c(s) \leq 0$ is extremely hard to satisfy due to the approximation error of neural network as shown in the toy case in Figure~\ref{fig:toycase}. So we hope to replace $V^\pi_c(s) \le 0$ with a new safe policy set ${\Pi}_f(s)$ that is conductive for practical solving. After adding the KL divergence constraint back, we got the feasibility-dependent optimization objective as shown in Eq.~(\ref{eq:feasible_hard_constraint_obj_abs}).

\subsection{Proof of Lemma \ref{lemma:transformation}}
\label{ap:proof_lemma_trans}
\begin{proof}

Due to the involvement of two different optimal policies, one for maximizing rewards ($\max_{\pi} ~\mathbb{E}_{s}\left[V^{\pi}_r(s)\right]$) and the other for minimizing constraint violations ($\max_{\pi}-V^{\pi}_h(s)$), in the proof process, in order to avoid confusion, we will use $\pi^{\ast}_r$ and $\pi^{\ast}_h$ to represent them respectively:
\begin{equation}
    \begin{aligned}
        &\pi_r^*\leftarrow \arg\max_{\pi} \mathbb{E}_s [V_r^\pi(s)],\\
        &\pi_h^*\leftarrow \arg\max_{\pi} -V_h^\pi(s), \forall s.
    \end{aligned}
\end{equation}

Next, we will proceed with the proofs of the three subproblems separately.

\textbf{Optimization Objective in the Feasible Region}. The expected return of policy $\pi$ in terms of the advantage over another policy $\tilde{\pi}$ can be expressed as~\citep{schulman2015trust,kakade2002approximately}:
\begin{equation}
\label{eq:advantage}
\eta(\pi)=\eta(\tilde{\pi})+\mathbb{E}_{s \sim d^{\pi}(s)}\mathbb{E}_{a \sim \pi}\left[A^{\tilde{\pi}}_r(s,a)\right],
\end{equation}
where $\eta(\pi):=\mathbb{E}_s\left[V_r^{\pi}(s)\right]$ and $d^{\pi}(s)$ is the discounted visitation frequencies. Considering the case where $\tilde{\pi}$ is the optimal policy $\pi_r^{\ast}$. Eq.~(\ref{eq:advantage}) can be rewritten as:
\begin{equation}
\label{eq:advantage_star}
\eta(\pi)=\eta(\pi_r^{\ast})+\mathbb{E}_{s \sim d^{\pi}(s)}\mathbb{E}_{a \sim \pi}\left[A^{\ast}_r(s,a)\right],
\end{equation}
According to Eq.~(\ref{eq:advantage_star}), we can deduce:
\begin{equation}
\begin{aligned}
&\max_{\pi} ~\mathbb{E}_{a \sim \pi}\left[A^{\ast}_r(s,a)\right] \nonumber \\
\overset{\rm i}{\Rightarrow}  &\max_{\pi} ~\mathbb{E}_{s \sim d^{\pi}(s)}\mathbb{E}_{a \sim \pi}\left[A^{\ast}_r(s,a)\right]  \nonumber \\
\overset{\rm ii}{\Leftrightarrow}  &\max_{\pi}~\eta(\pi_r^{\ast}) + \mathbb{E}_{s \sim d^{\pi}(s)}\mathbb{E}_{a \sim \pi}\left[A^{\ast}_r(s,a)\right]  \nonumber \\
\overset{\rm iii}\Leftrightarrow &\max_{\pi}~\eta(\pi) \nonumber \\
\overset{\rm iv}\Leftrightarrow &\max_{\pi}~\mathbb{E}_s\left[V_r^{\pi}(s)\right]. \nonumber \\
\end{aligned}
\end{equation}
Transformation (i) holds since the optimal policy for $\max_{\pi} ~\mathbb{E}_{a \sim \pi}\left[A^{\ast}_r(s,a)\right]$ guarantees the maximization of expected advantage in any state and thus for the states from the state visitation distribution $d^\pi$. It does not affect the optimality of taking the expected value over states. Transformation (ii) is achieved by adding $\eta(\pi_r^{\ast})$, which is unrelated to the optimization variable $\pi$. Transformation (iii) holds by using Eq.~(\ref{eq:advantage_star}). Transformation (iv) holds by using the definition of $\eta(\pi)$. 

\textbf{Optimization Objective in the Infeasible Region}.
\begin{equation}
\begin{aligned}
&\max_{\pi} ~\mathbb{E}_{a \sim \pi}\left[-A^{\ast}_h(s,a)\right] \nonumber \\
\overset{\rm i}{\Leftrightarrow}  &\min_{\pi} ~\mathbb{E}_{a \sim \pi}\left[Q^{\ast}_h(s,a)\right]  \nonumber \\
\overset{\rm ii}{\Leftrightarrow}  &\min_{\pi} ~\mathbb{E}_{a \sim \pi}\left[\max_{t \in \mathbb{N}}h(s_t),s_0=s,a_0=a,a_{t+1}\sim \pi^{\ast}_h(\cdot \mid s_{t+1})\right]  \nonumber \\
\overset{\rm iii}{\Leftrightarrow}  &\max_{\pi}-V^{\pi}_h(s)  \nonumber \\
{\Rightarrow}  &\max_{\pi} ~\mathbb{E}_{s}\left[-V^{\pi}_h(s)\right]  \nonumber \\
\end{aligned}
\end{equation}

Transformation (i) is achieved by using the definition of $A^{\ast}_h(s,a):=Q^{\ast}_h(s,a)-V^{\ast}_h(s)$, where $V^{\ast}_h(s)$ is unrelated to the action. Transformation (ii) is achieved by using the Definition ~\ref{def:feasible_value_function}. The definition of $Q^{\ast}_h(s,a)$ is that in the first step, action $a$ is taken, and subsequently, the safest policy is followed. Transformation (iii): According to the definition of $\pi^{\ast}_h$, $\mathbb{E}_{a \sim \pi}\left[\max_{t \in \mathbb{N}}h(s_t),s_0=s,a_0=a,a_{t+1}\sim \pi^{\ast}_h(\cdot \mid s_{t+1})\right]$ is minimized if and only if $\pi$ is the optimal policy $\pi_h^*$.

\textbf{Safety Constraint in the Feasible Region}.
\begin{equation}
\begin{aligned}
 &\int_{\{a|Q^{\ast}_h(s,a) \leq 0\}}^{}\pi(\cdot | s){\rm d}a=1 \nonumber \\
  \overset{\rm i}{\Leftrightarrow} &Q^{\ast}_h(s,a)\pi(a|s)\le0 \\
  \overset{\rm ii}{\Rightarrow} &Q^{\ast}_h(s,a)d^{\mathcal{\pi}}(s)\pi(a|s)\le0 \nonumber \\
  {\Leftrightarrow} &V^{\pi}_h(s)\le0 \nonumber
\end{aligned}
\end{equation}
Transformation (i): $\int_{\{a|Q^{\ast}_h(s,a) \leq 0\}}^{}\pi(\cdot | s){\rm d}a=1$ means that all possible output action $a$ of the policy need to satisfy $Q^{\ast}_h(s,a) \leq 0$. Specifically, for action $a$, if $\pi(a|s)$ is greater than 0, then $Q^{\ast}$ must be less than or equal to 0. For actions with a probability density of 0, we no longer require a specific value for $Q^{\ast}_h$ , meaning that $Q^{\ast}_h(s,a)\pi(a|s)\le0$. Transformation (ii) is achieved by multiplying both sides of the inequality by a positive number $d^{\mathcal{\pi}}(s)$. This means that for all states, when following policy $\pi$, the value of $h(s_t)$ at any given time is less than 0. This is exactly the meaning of $V_h^{\pi} \le 0$ based on Definition \ref{def:feasible_value_function}.
\end{proof}

\subsection{Proof of Theorem \ref{trm:solution}}
\label{ap:proof2}
\begin{proof}
    We first consider the situation when states within feasible region, i.e., $V_h^{\ast} \le 0$. The analytic solution for the constrained optimization problem can be obtained by enforcing the KKT conditions. Then construct the Lagrange function as:
\begin{equation*}
    L_1(\pi, \lambda_1, \mu_1) = \mathbb{E}_{a \sim \pi}\left[A^{\ast}_r(s,a) \right] -\lambda_1 \left( \int_{\{a|Q^{\ast}_h(s,a) \leq 0\}}^{}\pi(\cdot | s){\rm d}a-1\right) - \mu_1 \left({\textbf{\rm D}_\textbf{\rm KL}}(\pi \| \pi_\beta) - \epsilon \right) 
\end{equation*}
Differentiating with respect to $\pi$ gives:
\begin{equation*}
    \frac{\partial L_1}{\partial \pi} = A^{\ast}_r(s,a) - \mu_1 \log \pi_\beta(a|s) + \mu_1 \log \pi (a |s) + \mu_1 - \overline{\lambda}_1
\end{equation*}
where $\overline{\lambda}_1 = \lambda_1 \cdot \mathbb{I}_{Q^{\ast}_h(s,a) \leq 0} $. Setting $\frac{\partial L_1}{\partial \pi}$ to zero and solving for $\pi$ gives the closed-form solution:
\begin{equation*}
\pi^{\ast}(a|s)=\frac{1}{Z_1}\pi_\beta(a|s)\exp\left(\alpha_1 A_r^{\ast}(s,a)\right)\cdot \mathbb{I}_{Q^{\ast}_h(s,a) \leq 0}
\end{equation*}
where $\alpha_1 = 1/\mu_1$, $Z_1$ is a normalizing constant to make sure that $\pi^{\ast}$ is a valid distribution.

For states in the infeasible region, i.e., $V_h^{\ast} > 0$, we can use the same method to derivation the closed-form solution. The Lagrange function of the infeasible part is:
\begin{equation*}
    L_2(\pi, \lambda_2, \mu_2) = \mathbb{E}_{a \sim \pi}\left[-A^{\ast}_h(s,a) \right] -\lambda_2 \left( \int_{a}^{}\pi(\cdot | s){\rm d}a-1\right)  - \mu_2 \left({\textbf{\rm D}_\textbf{\rm KL}}(\pi \| \pi_\beta) - \epsilon \right) 
\end{equation*}
 Setting $\frac{\partial L_2}{\partial \pi}$ to zero and solving for $\pi$ gives the closed form solution:
 \begin{equation*}
\pi^{\ast}(a|s)=\frac{1}{Z_2}\pi_\beta(a|s)\exp\left(-\alpha_2 A_h^{\ast}(s,a)\right)
\end{equation*}
where $\alpha_2 = 1/\mu_2$, the effect of $Z_2$ is equivalent to that of $Z_1$. 
\end{proof}

\section{Guided Diffusion Policy Learning Without Time-Dependent Classifier}
\label{ap:diffusion_proof}

 We first briefly review diffusion models, then present the inherent connections between weighted regression and exact energy guided diffusion sampling~\citep{lu2023contrastive}.
 
\subsection{A Recap on Diffusion Model}

Diffusion models~\citep{ho2020denoising, song2021score} are powerful generative models that show expressive ability at modeling complex distributions. Following the most notations in~\citep{lu2023contrastive}, given an dataset $\mathcal{D}=\{\boldsymbol{x}_0^{i}\}_{i=1}^D$ with $D$ samples $\boldsymbol{x}_0$ from an unknown data distribution $q_0(\boldsymbol{x}_0)$, diffusion models aim to find an approximation of $q_0(\boldsymbol{x}_0)$ and then generate samples from this approximated distribution. 

Diffusion models consist of two processes: forward noising process and reverse denoising process. During the forward nosing process, diffusion models gradually add Gaussian noise to the original data samples $\boldsymbol{x}_0$ from time stamp $0$ to $T>0$, resulting in the noisy samples $\boldsymbol{x}_T$. The forward noising process from $\boldsymbol{x}_0$ to $\boldsymbol{x}_T$ satisfies the following transition distribution $q_{t0}(\boldsymbol{x}_t|\boldsymbol{x}_0), t\in[0, T]$:
\begin{equation}
    \label{eq:forward_diffusion}
    q_{t0}(\boldsymbol{x}_t|\boldsymbol{x}_0)=\mathcal{N}(\boldsymbol{x}_t|\alpha_t \boldsymbol{x}_0, \sigma_t^2\boldsymbol{I}), t\in[0, T], 
\end{equation}
where $\alpha_t, \sigma_t>0$ are fixed noise schedules, which are human-designed to meet the requirements that $q_T(\boldsymbol{x}_T|\boldsymbol{x}_0)\approx q_T(\boldsymbol{x}_T)\approx\mathcal{N}(\boldsymbol{x}_T|0,\tilde{\sigma}^2\boldsymbol{I})$ for some $\tilde{\sigma}>0$ that is independent of $\boldsymbol{x}_0$, such as Variance Preserving (VP)~\citep{song2021score, ho2020denoising} or Variance-Exploding (VE) schedules~\citep{song2019generative, song2021score}. Given the forward noising process, one can start from the marginal distribution at $q_T(\boldsymbol{x}_T)$, i.e., $\boldsymbol{x}_T\sim\mathcal{N}(\boldsymbol{x}_T|0,\tilde{\sigma}^2\boldsymbol{I})$ and then reverse the forward process to recover the original data $\boldsymbol{x}_0\sim q_0(\boldsymbol{x}_0)$. This reverse denoising process can be equivalently solved by solving a diffusion ODE or SDE~\citep{song2021score}:
\begin{align}
    &({\rm Diffusion \ ODE}) \ \ {{\rm d}\boldsymbol{x}_t}=\left[f(t)\boldsymbol{x}_t-\frac{1}{2}g^2(t)\nabla_{\boldsymbol{x}_t}\log q_t(\boldsymbol{x}_t)\right]{{\rm d}t} \label{eq:ode}\\
    &({\rm Diffusion \ SDE}) \ \ \ {{\rm d}\boldsymbol{x}_t}=\left[f(t)\boldsymbol{x}_t-g^2(t)\nabla_{\boldsymbol{x}_t}\log q_t(\boldsymbol{x}_t)\right]{{\rm d}t}+g(t){\rm d}\bar{\boldsymbol{w}}\label{eq:sde}
\end{align}
where $f(t)=\frac{{\rm d}\log \alpha_t}{{\rm d}t}, g^2(t)=\frac{{\rm d\sigma_t^2}}{{\rm d}t}-2\frac{{\rm d}\log \alpha_t}{{\rm d}t}\sigma_t^2$ are determined by the fixed noise schedules $\alpha_t, \sigma_t$, $\bar{\boldsymbol{w}}$ is a standard Wiener process when time flows backwards from $T$ to $0$. The only unknown element in Eq.~(\ref{eq:ode}-\ref{eq:sde}) is the score function $\nabla_{\boldsymbol{x}_t}\log q_t(\boldsymbol{x}_t)$, which can be simply approximated by a neural network $\boldsymbol{z}_\theta(\boldsymbol{x}_t, t)$ via minimizing a MSE loss~\citep{ho2020denoising, song2021score}:
\begin{equation}
\label{eq:diffusion_objective}
\begin{aligned}
    &\min_\theta \mathbb{E}_{\boldsymbol{x}_t, t\sim\mathcal{U}([0, T])}\left[\|\boldsymbol{z}_\theta(\boldsymbol{x}_t, t)+\sigma_t\nabla_{\boldsymbol{x}_t}\log q_t(\boldsymbol{x}_t)\|_2^2\right]\\
    \Leftrightarrow&\min_\theta \mathbb{E}_{\boldsymbol{x}_t, t\sim\mathcal{U}([0, T]), \boldsymbol{z}\sim\mathcal{N}(\boldsymbol{0}, \boldsymbol{I})}\left[\|\boldsymbol{z}-\boldsymbol{z}_\theta(\boldsymbol{x}_t, t)\|_2^2\right]
\end{aligned}
\end{equation}
where $\boldsymbol{x}_0\sim q_0(\boldsymbol{x}_0)$, $\boldsymbol{x}_t=\alpha_t\boldsymbol{x}_0+\sigma_t\boldsymbol{z}$. Given unlimited model capacity and data, the esitimated $\theta$ satisfies $\boldsymbol{z}_\theta(\boldsymbol{x}_t, t)\approx-\sigma_t\nabla_{\boldsymbol{x}_t}\log q_t(\boldsymbol{x}_t)$. Then, $\boldsymbol{z}_\theta$ can be substituted into Eq.~(\ref{eq:ode}-\ref{eq:sde}) and solves the ODEs/SDEs to sample $\boldsymbol{x}_0\sim q_{0}(\boldsymbol{x}_0)$.

\subsection{Connections Between Weighted Regression and Exact Energy Guidance}
In our paper, we aim to extract the optimal policy $\pi^*$ in Eq.~(\ref{eq:close-form}), which behaves as a weighted distribution form as:
\begin{equation}
    \pi^*(a|s)\propto\pi_\beta(a|s)w(s,a),
\end{equation}
where $\pi_\beta$ is the behavior policy that generates the offline dataset $\mathcal{D}$ and $w(s,a)$ is a feasibility-dependent weight function that assigns high values on safe and high-reward $(s,a)$ pairs. This can be abstracted as a standard energy guided sampling problem in diffusion models~\citep{lu2023contrastive}:

\begin{equation}
    p_0(\boldsymbol{x}_0)\propto q_0(\boldsymbol{x}_0)f(\mathcal{E}(\boldsymbol{x}_0)),
\end{equation}

where $p_0(\boldsymbol{x}_0)$ is the interested distribution which we want to sample from, $q_0(\boldsymbol{x}_0)$ is parameterized by a pretrained diffusion model trained by Eq.~(\ref{eq:diffusion_objective}), $\mathcal{E}(\boldsymbol{x}_0)$ is any form of energy function that encodes human preferences (e.g. cumulative rewards, cost value functions), $f(x)\ge0$ can be any non-negative function. To sample $\boldsymbol{x}_0\sim p_0(\boldsymbol{x}_0)$ with the pretrained $q_0(\boldsymbol{x}_0)$ and the energy function $\mathcal{E}$, previous works typically train a separate time-dependent classifier $\mathcal{E}_t$ and then calculating a modified score function~\citep{lu2023contrastive, janner2022planning, lin2023safe}: 

\begin{equation}
\label{eq:log_pt}
    \nabla_{\boldsymbol{x}_t}\log p_t(\boldsymbol{x}_t)=\nabla_{\boldsymbol{x}_t}\log q_t(\boldsymbol{x}_t)-\nabla_{\boldsymbol{x}_t}f^*(\mathcal{E}_t(\boldsymbol{x}_t)),
\end{equation}

where $f^*(x)$ is determined by the choice of $f(x)$. Then, one can sample $\boldsymbol{x}_0\sim p_0(\boldsymbol{x}_0)$ by solving a modified ODEs/SDEs via replacing $\nabla_{\boldsymbol{x}_t}\log q_t(\boldsymbol{x}_t)$ with $\nabla_{\boldsymbol{x}_t}\log p_t(\boldsymbol{x}_t)$ in Eq.~(\ref{eq:ode}-\ref{eq:sde}). This kind of guided sampling method requires the training of an additional time-dependent classifier $\mathcal{E}_t$, which is typically hard to train~\citep{lu2023contrastive} and introduces additional training errors.

\textbf{Weighted Regression as Exact Energy Guidance.} In this paper, we find that by simply augmenting the energy function into Eq.~(\ref{eq:diffusion_objective}) and forming a weighted regression loss in Eq.~(\ref{eq:weighted_regression_appendix}), we can obtain the $\nabla_{\boldsymbol{x}_t}\log p_t(\boldsymbol{x}_t)$ in Eq.~(\ref{eq:log_pt}) without the training of any time-dependent classifier, which greatly simplifies the training difficulty:

\begin{equation}
    \label{eq:weighted_regression_appendix}
    \min_\theta \mathbb{E}_{\boldsymbol{x}_t, t\sim\mathcal{U}([0, T]), \boldsymbol{z}\sim\mathcal{N}(\boldsymbol{0}, \boldsymbol{I})}\left[f(\mathcal{E}(\boldsymbol{x}_0))\|\boldsymbol{z}-\boldsymbol{z}_\theta(\boldsymbol{x}_t, t)\|_2^2\right]
\end{equation}

\begin{theorem}
    \label{theorem:weighted_energy_equal}
    We can sample $\boldsymbol{x}_0\sim p_0(\boldsymbol{x}_0)$ by optimizing the weighted regression loss in Eq.~(\ref{eq:weighted_regression_appendix}) and substituting the obtained $\boldsymbol{z}_\theta$ into the diffusion ODEs/SDEs in Eq.~(\ref{eq:ode}-\ref{eq:sde}).
\end{theorem}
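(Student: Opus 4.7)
\textbf{Proof Proposal for Theorem~\ref{theorem:weighted_energy_equal}.}

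The plan is to show that the pointwise minimizer $\boldsymbol{z}_\theta^*(\boldsymbol{x}_t,t)$ of the weighted regression objective in Eq.~(\ref{eq:weighted_regression_appendix}) coincides exactly with $-\sigma_t\nabla_{\boldsymbol{x}_t}\log p_t(\boldsymbol{x}_t)$, where $p_t(\boldsymbol{x}_t):=\int q_{t0}(\boldsymbol{x}_t|\boldsymbol{x}_0)p_0(\boldsymbol{x}_0)d\boldsymbol{x}_0$ is the diffused version of the target distribution $p_0\propto q_0 f(\mathcal{E})$. Once this is established, substituting $\boldsymbol{z}_\theta^*$ into the diffusion ODE/SDE in Eq.~(\ref{eq:ode}-\ref{eq:sde}) is exactly the reverse process associated with $p_t$, and thus by standard results on score-based sampling the marginal at $t=0$ is $p_0$.

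First, I would rewrite the expectation in Eq.~(\ref{eq:weighted_regression_appendix}) as an integral over $(\boldsymbol{x}_0,\boldsymbol{x}_t)$ using the change of variables $\boldsymbol{z}=(\boldsymbol{x}_t-\alpha_t\boldsymbol{x}_0)/\sigma_t$, so that for every fixed $(\boldsymbol{x}_t,t)$ the loss is a quadratic functional of $\boldsymbol{z}_\theta(\boldsymbol{x}_t,t)$ with positive weights. Setting the functional derivative to zero yields the closed form
\begin{equation*}
\boldsymbol{z}_\theta^*(\boldsymbol{x}_t,t)=\frac{\int q_{t0}(\boldsymbol{x}_t|\boldsymbol{x}_0)\,q_0(\boldsymbol{x}_0)\,f(\mathcal{E}(\boldsymbol{x}_0))\,\frac{\boldsymbol{x}_t-\alpha_t\boldsymbol{x}_0}{\sigma_t}\,d\boldsymbol{x}_0}{\int q_{t0}(\boldsymbol{x}_t|\boldsymbol{x}_0)\,q_0(\boldsymbol{x}_0)\,f(\mathcal{E}(\boldsymbol{x}_0))\,d\boldsymbol{x}_0}.
\end{equation*}

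Next, I would match this ratio to a score function. Using the Gaussian form $q_{t0}(\boldsymbol{x}_t|\boldsymbol{x}_0)=\mathcal{N}(\boldsymbol{x}_t|\alpha_t\boldsymbol{x}_0,\sigma_t^2 I)$ from Eq.~(\ref{eq:forward_diffusion}), a direct computation gives $\nabla_{\boldsymbol{x}_t}q_{t0}(\boldsymbol{x}_t|\boldsymbol{x}_0)=-\sigma_t^{-2}(\boldsymbol{x}_t-\alpha_t\boldsymbol{x}_0)\,q_{t0}(\boldsymbol{x}_t|\boldsymbol{x}_0)$. Differentiating $p_t(\boldsymbol{x}_t)\propto\int q_{t0}(\boldsymbol{x}_t|\boldsymbol{x}_0)q_0(\boldsymbol{x}_0)f(\mathcal{E}(\boldsymbol{x}_0))d\boldsymbol{x}_0$ under the integral sign (justified by dominated convergence since $f\ge 0$ and $q_{t0}$ is smooth in $\boldsymbol{x}_t$) and dividing by $p_t(\boldsymbol{x}_t)$, I obtain exactly $-\sigma_t\nabla_{\boldsymbol{x}_t}\log p_t(\boldsymbol{x}_t)$ equal to the expression for $\boldsymbol{z}_\theta^*$ displayed above. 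The unknown normalization constant of $p_0$ drops out of the ratio, which is precisely why no time-dependent classifier is needed.

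Finally, having identified $\boldsymbol{z}_\theta^*(\boldsymbol{x}_t,t)=-\sigma_t\nabla_{\boldsymbol{x}_t}\log p_t(\boldsymbol{x}_t)$, I would invoke the standard fact~\citep{song2021score} that the ODE/SDE in Eq.~(\ref{eq:ode}-\ref{eq:sde}) with $\nabla_{\boldsymbol{x}_t}\log q_t$ replaced by $\nabla_{\boldsymbol{x}_t}\log p_t$ preserves the marginals of the forward diffusion with initial distribution $p_0$, so solving backward from $\boldsymbol{x}_T\sim\mathcal{N}(0,\tilde{\sigma}^2 I)\approx p_T$ yields samples from $p_0$. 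The main obstacle I anticipate is purely bookkeeping: making sure the pointwise minimizer coincides with the global minimizer (which holds provided the network class is sufficiently expressive so that $\boldsymbol{z}_\theta$ can represent the optimal function, a standard assumption in the diffusion literature), and carefully verifying that the weighting $f(\mathcal{E}(\boldsymbol{x}_0))$ does not disrupt the interchange of differentiation and integration. Neither involves new ideas beyond the standard Tweedie-style identity, so the argument should go through cleanly.
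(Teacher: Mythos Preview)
Your proposal is correct and follows essentially the same route as the paper: both arguments hinge on writing $p_t(\boldsymbol{x}_t)\propto\int q_{t0}(\boldsymbol{x}_t|\boldsymbol{x}_0)q_0(\boldsymbol{x}_0)f(\mathcal{E}(\boldsymbol{x}_0))\,d\boldsymbol{x}_0$, differentiating under the integral using the Gaussian identity $\nabla_{\boldsymbol{x}_t}q_{t0}=-\sigma_t^{-2}(\boldsymbol{x}_t-\alpha_t\boldsymbol{x}_0)q_{t0}$, and identifying the result with the (minimizer of the) weighted regression loss. The only cosmetic difference is direction: the paper starts from the score-matching objective $\mathbb{E}_{p_t}\|\boldsymbol{z}_\theta+\sigma_t\nabla\log p_t\|^2$ and expands it into the weighted regression form, whereas you start from the weighted regression loss and compute its pointwise minimizer directly (Tweedie-style); the underlying computation is identical.
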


\begin{proof}
    We denote $q_t(\boldsymbol{x}_t)$ and $p_t(\boldsymbol{x}_t)$ as the marginal distribution of the forward noising process at time $t$ starting from $q_0(\boldsymbol{x}_0)$ and $p_0(\boldsymbol{x}_0)$, respectively. 
    We let the forward noising processes of $q_{t0}(\boldsymbol{x}_t|\boldsymbol{x}_0)$ and $p_{t0}(\boldsymbol{x}_t|\boldsymbol{x}_0)$ share the same transition distribution, that is
    \begin{equation}
        q_{t0}(\boldsymbol{x}_t|\boldsymbol{x}_0)=p_{t0}(\boldsymbol{x}_t|\boldsymbol{x}_0)=\mathcal{N}(\boldsymbol{x}_t|\alpha_t \boldsymbol{x}_0, \sigma_t^2\boldsymbol{I}), t\in[0, T], 
    \end{equation}

    Then, we have

    \begin{equation}
    \label{eq:pt_xt}
        \begin{aligned}
            p_t(\boldsymbol{x}_t)&=\int p_{t0}(\boldsymbol{x}_t|\boldsymbol{x}_0)p_0(\boldsymbol{x}_0){\rm d}{\boldsymbol{x}_0}\\
            &=\int q_{t0}(\boldsymbol{x}_t|\boldsymbol{x}_0)\frac{q_0(\boldsymbol{x}_0)f(\mathcal{E}(\boldsymbol{x}_0))}{Z}{\rm d}{\boldsymbol{x}_0}\\
        \end{aligned}
    \end{equation}

where the first equation holds for the definition of marginal distribution, the second equation holds for the definition of $p_0(\boldsymbol{x}_0)\propto q_0(\boldsymbol{x}_0)f(\mathcal{E}(\boldsymbol{x}_0))$ and $Z=\int q_0(\boldsymbol{x}_0)f(\mathcal{E}(\boldsymbol{x}_0)) {\rm d}\boldsymbol{x}_0$ is a normalization constant to make $p_0$ a valid distribution.

Then, taking the derivation w.r.t $\boldsymbol{x}_t$ on the logarithm of each side of Eq.~(\ref{eq:pt_xt}), we have

\begin{equation}
\label{eq:nabla_log_pt}
\begin{aligned}
    \nabla_{\boldsymbol{x}_t}\log p_t(\boldsymbol{x}_t)&=\nabla_{\boldsymbol{x}_t}\log \int q_{t0}(\boldsymbol{x}_t|\boldsymbol{x}_0){q_0(\boldsymbol{x}_0)f(\mathcal{E}(\boldsymbol{x}_0))}{\rm d}\boldsymbol{x}_0\\
    &=\frac{\int \nabla_{\boldsymbol{x}_t}q_{t0}(\boldsymbol{x}_t|\boldsymbol{x}_0){q_0(\boldsymbol{x}_0)f(\mathcal{E}(\boldsymbol{x}_0))}{\rm d}{\boldsymbol{x}_0}}{\int q_{t0}(\boldsymbol{x}_t|\boldsymbol{x}_0){q_0(\boldsymbol{x}_0)f(\mathcal{E}(\boldsymbol{x}_0))}{\rm d}{\boldsymbol{x}_0}}\\
    &=\frac{\int \nabla_{\boldsymbol{x}_t}q_{t0}(\boldsymbol{x}_t|\boldsymbol{x}_0){q_0(\boldsymbol{x}_0)f(\mathcal{E}(\boldsymbol{x}_0))}{\rm d}{\boldsymbol{x}_0}}{p_{t}(\boldsymbol{x}_t)}\\
    &=\frac{\int q_{t0}(\boldsymbol{x}_t|\boldsymbol{x}_0){q_0(\boldsymbol{x}_0)f(\mathcal{E}(\boldsymbol{x}_0))}\nabla_{\boldsymbol{x}_t}\log q_{t0}(\boldsymbol{x}_t|\boldsymbol{x}_0){\rm d}{\boldsymbol{x}_0}}{p_{t}(\boldsymbol{x}_t)},
\end{aligned}
\end{equation}

where the first equation holds since $Z$ is a constant w.r.t $\boldsymbol{x}_t$ and thus can be dropped. To approximate the score function $\nabla_{\boldsymbol{x}_t}\log p_t(\boldsymbol{x}_t)$, we can resort to denoising score matching~\citep{song2019generative} via optimizing the following objective:

\begin{equation}
\label{eq:proof_done}
    \begin{aligned}
        &\min_\theta \int p_t(\boldsymbol{x}_t)\left[\left\|\boldsymbol{z}_{\theta}(\boldsymbol{x}_t, t)+\sigma_t\nabla_{\boldsymbol{x}_t}\log p_t(\boldsymbol{x}_t)\right\|_2^2\right]{\rm d}\boldsymbol{x}_t\\
        \overset{\rm i}{\Leftrightarrow}&\min_\theta \int p_t(\boldsymbol{x}_t)\left[\left\|{\boldsymbol{z}_{\theta}(\boldsymbol{x}_t, t)}+{\sigma_t \frac{\int q_{t0}(\boldsymbol{x}_t|\boldsymbol{x}_0){q_0(\boldsymbol{x}_0)f(\mathcal{E}(\boldsymbol{x}_0))}\nabla_{\boldsymbol{x}_t}\log q_{t0}(\boldsymbol{x}_t|\boldsymbol{x}_0){\rm d}{\boldsymbol{x}_0}}{p_{t}(\boldsymbol{x}_t)}}\right\|_2^2\right]{\rm d}\boldsymbol{x}_t\\
         \overset{\rm ii}{\Leftrightarrow}&\min_\theta\int p_t(\boldsymbol{x}_t)\left[{\left\|\boldsymbol{z}_{\theta}(\boldsymbol{x}_t, t)\right\|_2^2} + {2\sigma_t\boldsymbol{z}_{\theta}(\boldsymbol{x}_t, t) \frac{\int q_{t0}(\boldsymbol{x}_t|\boldsymbol{x}_0){q_0(\boldsymbol{x}_0)f(\mathcal{E}(\boldsymbol{x}_0))}\nabla_{\boldsymbol{x}_t}\log q_{t0}(\boldsymbol{x}_t|\boldsymbol{x}_0){\rm d}{\boldsymbol{x}_0}}{p_{t}(\boldsymbol{x}_t)}}\right]{\rm d}\boldsymbol{x}_t\\
         \overset{\rm iii}{\Leftrightarrow}&\min_\theta\int\int\int q_0(\boldsymbol{x}_0)q_{t0}(\boldsymbol{x}_t|\boldsymbol{x}_0)f(\mathcal{E}(\boldsymbol{x}_0))\left[\left\|\boldsymbol{z}_\theta(\boldsymbol{x}_t, t)+\sigma_t\nabla_{\boldsymbol{x}_t}\log q_{t0}(\boldsymbol{x}_t|\boldsymbol{x}_0)\right\|_2^2\right] {\rm d}\boldsymbol{x}_0 {\rm d}t {\rm d}\boldsymbol{z}\\
         \overset{\rm iv}{\Leftrightarrow}&\min_\theta\mathbb{E}_{\boldsymbol{x}_0\sim q_0(\boldsymbol{x}_0), t\sim\mathcal{U}([0, T]), \boldsymbol{z}\sim\mathcal{N}(\boldsymbol{0, I})} \left[f(\mathcal{E}(\boldsymbol{x}_0))\left\|\boldsymbol{z}_\theta(\boldsymbol{x}_t, t)-\boldsymbol{z}\right\|_2^2\right]
    \end{aligned}
\end{equation}

where $\boldsymbol{x}_t=\alpha_t\boldsymbol{x}_0+\sigma_t\boldsymbol{z}$. Transformation (i) holds by substituting Eq.~(\ref{eq:nabla_log_pt}) into $\nabla_{\boldsymbol{x}_t}\log p_t(\boldsymbol{x}_t)$. Transformation (ii) holds by expanding the L2 norm according to its definition and drop the terms that are independent to $\theta$. Transformation (iii) holds by rearranging (ii). Transformation (iv) holds since $q_{t0}(\boldsymbol{x}_t|\boldsymbol{x}_0)=\mathcal{N}(\boldsymbol{x}_t|\alpha_t\boldsymbol{x}_0, \sigma_t^2\boldsymbol{I})$, which is re-parameterized as $\boldsymbol{x}_t=\alpha_t \boldsymbol{x}_0+\sigma_t \boldsymbol{z}, \boldsymbol{z}\sim\mathcal{N}(\boldsymbol{0, I})$. Therefore, $\sigma_t\nabla_{\boldsymbol{x}_t}\log q_{t0}(\boldsymbol{x}_t|\boldsymbol{x}_0)=-\boldsymbol{z}$. Eq.~(\ref{eq:proof_done}) is exactly the weighed regression loss in Eq.~(\ref{eq:weighted_regression_appendix}), meaning that the exact $p_0(\boldsymbol{x}_0)$ can be obtained by training this simple weighted regression loss and completes the proof.
\end{proof}

Theorem~\ref{theorem:weighted_energy_equal_maintext} can be easily obtained via replacing $p_0$ with $\pi^*$, replacing $q_0$ with $\pi_\beta$ and replacing $f(\mathcal{E})$ with $w$. Some recent works also mention the weighted regression form~\citep{hansen2023idql, kang2023efficient}, but they do not identify these inherent connections between weighted regression and exact energy guidance. Based on Theorem~\ref{theorem:weighted_energy_equal}, we show that we can extract $\pi^*$ using the weighted regression loss in Eq.~(\ref{eq:pi_loss}).


\subsection{Comparisons with Other Guided Sampling Methods}
\label{subsec:guided_sampling_related_works}
Diffusion models are powerful generative models. The guided sampling methods in diffusion models allow humans to embed their preferences to guide the diffusion models to generate desired outputs~\citep{lu2023contrastive, ajay2023conditional, janner2022planning, ho2022classifier, dhariwal2021diffusion}, such as the policy that obtains the highest rewards and adheres to zero constraint violations. 

\textbf{Classifier-Free Guidance}. One of the most popular guided sampling methods is the classifier-free guided sampling method~\citep{ajay2023conditional, ho2022classifier}, which trains the diffusion model with the condition information assistance to implicitly achieve classifier guidance. However, the conditioned variables may not be always easy to acquire. In contrast, it may be easy for us to obtain a scalar function that encodes our preferences such as the cost value or reward value functions.

\textbf{Classifier Guided Sampling}. The other most popular guided sampling method is classifier guided sampling~\citep{lu2023contrastive, janner2022planning, dhariwal2021diffusion}, which trains a separate time-dependent classifier and uses its gradient to modify the score function during sampling like Eq.~(\ref{eq:log_pt}) does. This guided sampling method requires the training of an additional time-dependent classifier, which is typically hard to train and may suffer from large approximation errors~\citep{lu2023contrastive}. In contrast, we smartly encode the classifier information into the diffusion model in the training stage through the weighted regression loss in Eq.~(\ref{eq:weighted_regression_appendix}), which avoids the training of the complicated time-dependent classifier, greatly simplifying the training.

\textbf{Direct Guided Sampling}. Some diffusion-based offline RL methods directly use diffusion models to parameterize the policy and update diffusion models to directly maximize the Q value~\citep{wang2023diffusion}. This requires the gradient backward through the entire reverse denoising process, inevitably introducing tremendous computational costs.

\textbf{Sample-based Guided Sampling}. Some diffusion-based offline RL methods directly use diffusion models to fit the behavior policy $\pi_\beta$ and then sample a lot of action candidates from the approximated $\pi_\beta$ and then select the best one based on the classifier signal such as value functions~\citep{hansen2023idql, chen2023offline}. Although its simplicity, these methods show surprisingly good performances. Inspired by this, we also generate some action candidates and select the safest one to enhance safety.

\section{Experimental Details}
\label{ap:exp}
This section outlines the experimental details to reproduce the main results in our papers.
\subsection{Details on Toy Case Experiments}
\label{ap:toy}
The toy case experiments in this paper primarily focus on the reach-avoid control task~\citep{yang2023model,ma2021feasible}, where the agent is required to navigate to the target location while avoiding hazards. The state space of the agent can be represented as: $\mathcal{S}:=(x,y,v,\theta)$, where $x$ and $y$ correspond to the agent's coordinates, $v$ represents velocity, and $\theta$ represents the yaw angle. The action space of the agent can be represented as: $\mathcal{A}:=(\dot{v},\dot{\theta})$, where $\dot{v}$ corresponds to the acceleration, $\dot{\theta}$ represents angular acceleration. The reward function $r$ is defined as the difference between the distance to the target position at the previous time step and the distance at the current time step. A larger change in distance implies a higher velocity and results in a higher reward. The constraint violation function $h$ is defined as:
\begin{equation*}
    h:=R_{\rm hazard} - \min\left\{d_{\rm hazard1},d_{\rm hazard2}\right\}
\end{equation*}
where $R_{\rm hazard}$ is the radius of the hazard. $d_{\rm hazard1}$, $d_{\rm hazard2}$ are the distances between the agent and two hazards. When $h\le0$, the agent collides with the hazard, otherwise the agent is in a safe state. Correspondingly, $c$ is defined as $c := \max(h,0)$.

To obtain the ground truth largest feasible region, based on the current agent's state, we use the safest policy (maximum deceleration, maximum angular acceleration away from hazards). If no collision occurs, the current state is within the largest feasible region.

We use the converged DRPO~\citep{yu2022safe} algorithm to collect 50k of interactive data, and use a randomly initialized policy to collect 50k of data, totaling 100k, for FISOR training. The data distribution is illustrated in Figure~\ref{fig:toycase} (b).

{In the toy case environment, we also conduct ablation experiments focusing on the selection of parameter $\tau$ in Eq.~(\ref{eq:expectile_v}) and visualized the feasible region in Figure~\ref{fig:visulize tau}. See from the results that with a small $\tau$ value, the learned feasible region will become smaller and more conservative. This is acceptable since a smaller feasible region will not likely induce false negative infeasible regions and is beneficial to induce safe policies.}

\begin{figure}[h]
    \centering
 \includegraphics[width=1\textwidth]{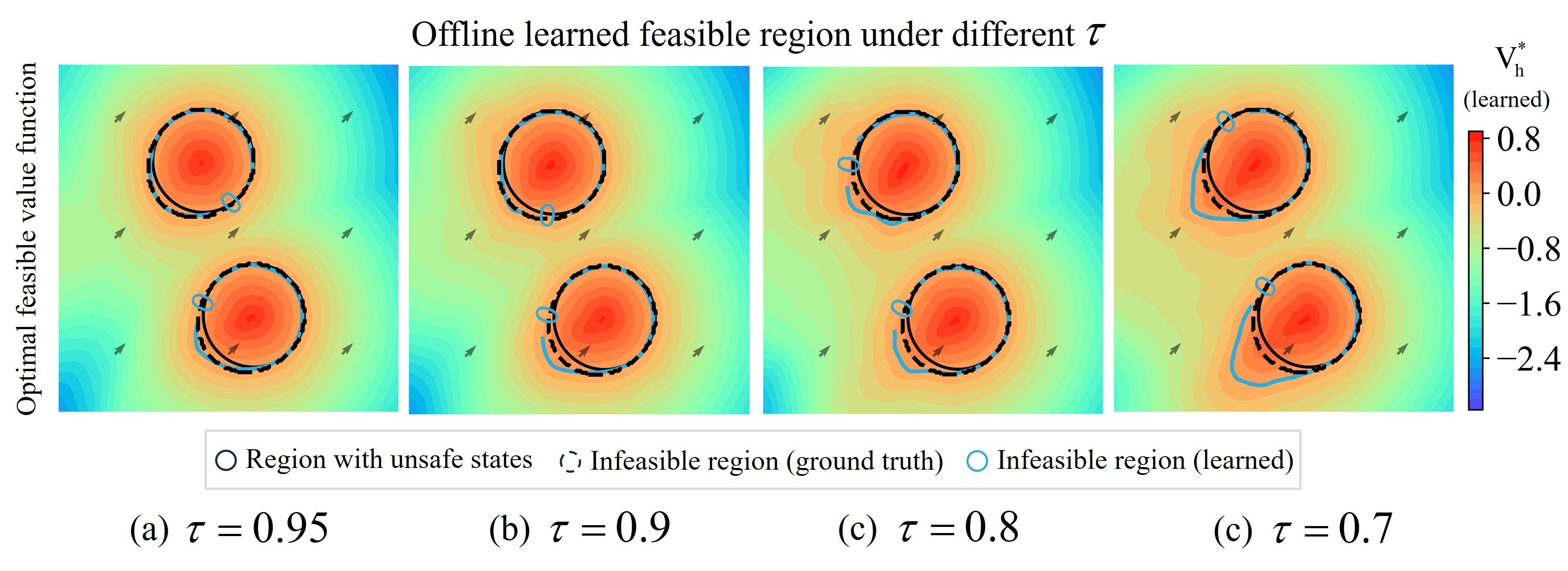}
 \caption{{Visualization of feasible region under different $\tau$.}}
 \label{fig:visulize tau}
\end{figure}

\subsection{Task Description}
\label{ap:task}
\textbf{Safety-Gymnasium}~\citep{Ray2019,ji2023omnisafe}. Environments based on the Mujoco physics simulator. For the agent \texttt{Car} there are three tasks: \texttt{Button}, \texttt{Push}, \texttt{Goal}. \texttt{1} and \texttt{2} are used to represent task difficulty. In these tasks, agents need to reach the goal while avoiding hazards. The tasks are named as \{\texttt{Agent}\}\{\texttt{Task}\}\{\texttt{Difficulty}\}. Safety-Gymnasium also provides three velocity constraint tasks based on \texttt{Ant}, \texttt{HalfCheetah}, and \texttt{Swimmer}. Figure \ref{fig:safetygym} visualizes the tasks in Safety-Gymnasium\footnote{\url{https://www.safety-gymnasium.com/en/latest/}}.

\begin{figure}[h]
    \centering
 \includegraphics[width=0.5\textwidth]{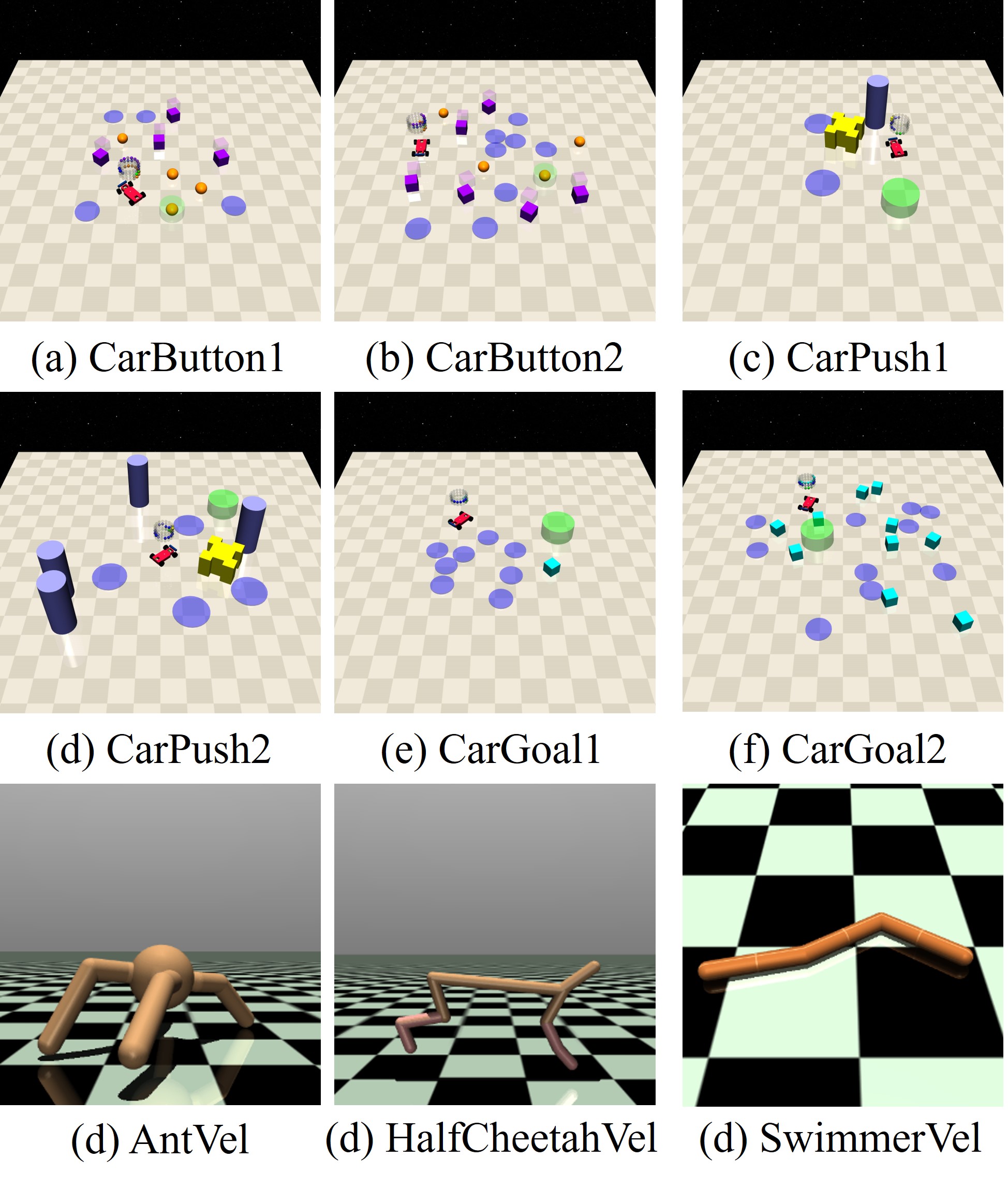}
 \caption{ Visualization of the Safety-Gymnasium environments.}
 \label{fig:safetygym}
\end{figure}
\textbf{Bullet-Safety-Gym}~\citep{gronauer2022bullet}. Environments based on the PyBullet physics simulator. There are four types of agents: \texttt{Ball}, \texttt{Car}, \texttt{Drone}, \texttt{Ant}, and two types of tasks: \texttt{Circle}, \texttt{Run}. The tasks are named as \{\texttt{Agent}\}\{\texttt{Task}\}. Figure \ref{fig:bullet} (a) visualizes the tasks in Bullet-Safety-Gym\footnote{\url{https://github.com/liuzuxin/Bullet-Safety-Gym/}}.

\textbf{MetaDrive}~\citep{li2022metadrive}. Environments based on the Panda3D game engine simulate real-world driving scenarios. The tasks are named as \{\texttt{Road}\}\{\texttt{Vehicle}\}. \texttt{Road} includes three different levels for self-driving cars: \texttt{easy}, \texttt{medium}, \texttt{hard}. \texttt{Vehicle} includes four different levels of surrounding traffic: \texttt{sparse}, \texttt{mean}, \texttt{dense}. Figure \ref{fig:bullet} (b) visualizes the tasks in MetaDrive\footnote{\url{https://github.com/liuzuxin/DSRL}}.
\begin{figure}[h]
    \centering
 \includegraphics[width=0.75\textwidth]{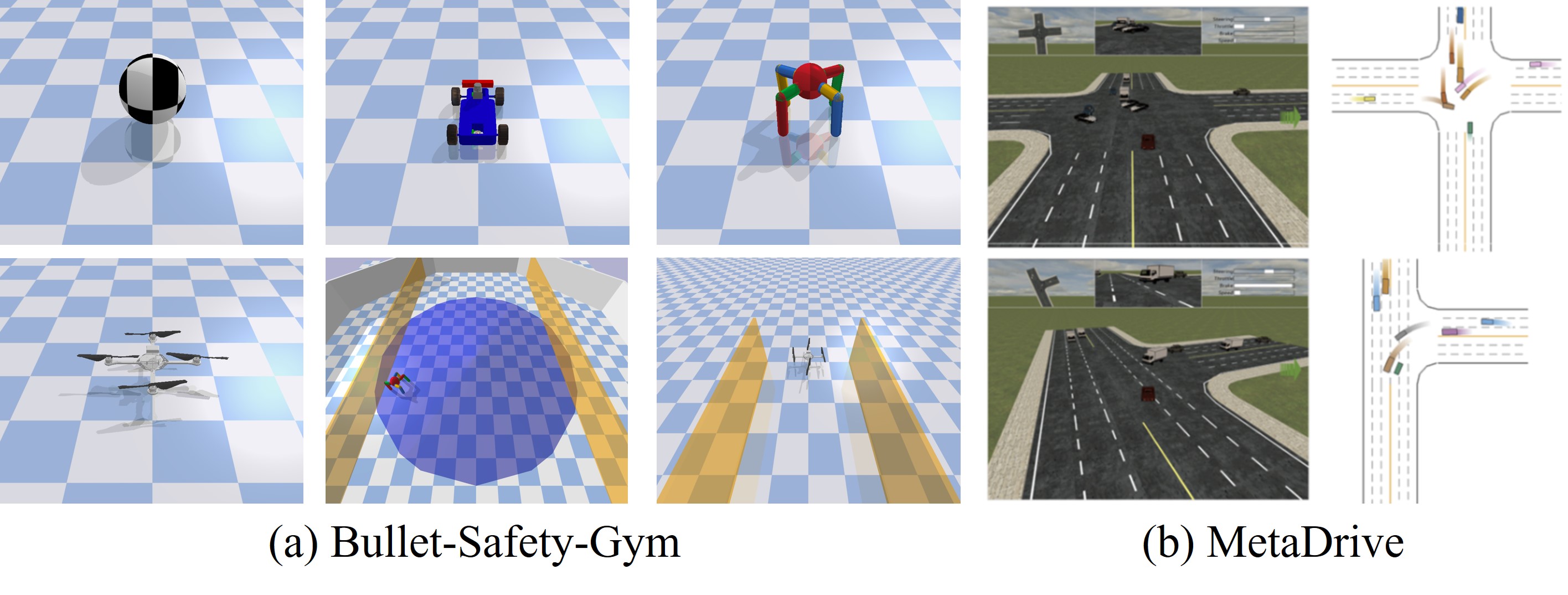}
 \caption{ Visualization of the Bullet-Safety-Gym and MetaDrive environments.}
 \label{fig:bullet}
\end{figure}

\subsection{Additional Experimental Details}
\label{ap:ad_exp_details}
{\textbf{Evaluation metrics}. 
Our evaluation protocol strictly follows the recent safe offline RL benchmark DSRL~\citep{liu2023datasets}, we use the normalized reward return and the normalized cost returns as the evaluation metrics: $R_{\rm normalized} = \frac{R_{\pi}-r_{\rm min}}{r_{\rm max}-r_{\rm min}}$, $C_{\rm normalized} = \frac{C_{\pi}+\epsilon}{l+\epsilon}$, where $R_{\pi}=\sum_t r_t$ is the policy's reward return and $C_{\pi}=\sum_t c_t$ is its cost return. In all the benchmark tasks, the per-step cost $c_t=0$ if the state is safe and $c_t>0$ for unsafe states~\citep{Ray2019,gronauer2022bullet,li2022metadrive}. $r_{\rm min}$ and $r_{\rm max}$ are the minimum and maximum reward return in the offline data, respectively. $l$ is a human-defined cost limit. $\epsilon$ is a positive number to ensure numerical stability if $l=0$. Following the DSRL benchmark, the task is considered safe when $C_{\pi}$ does not exceed the cost limit $l$, or in other words $C_{\rm normalized}$ is no larger than 1.}

\textbf{Feasible Value Function Learning}. 
According to the loss function of the feasible value functions in Eq.~(\ref{eq:qh_loss}), we need to perform function learning using the constraint violation function $h(s)$. However, the DSRL~\citep{liu2023datasets} benchmark used in the experiment does not provide it, but only provides a cost function $c(s)$. Therefore, we simply employ a sparse $h(s)$ function design. Specifically, when $c(s)=0$, the state is safe, $h(s)=-1$; when $c>0$, the state is unsafe, $h=M(M>0)$. For optimal algorithm performance, we observe that it works best when the predicted mean of the feasible value function is around 0. Based on this, we consistently chose $M=25$ across all experiments. Note that determining the value of $M$ doesn't need online evaluations for hyperparameter selection since we only need to monitor the mean value of $V_h^*$ without any online interactions.

\textbf{Network Architecture and Training Details}. We implement the feasible value functions and value functions with 2-layer MLPs with ReLU activation functions and 256 hidden units for all networks. During the training, we set the $\tau$ for expectile regression in Eq.~(\ref{eq:expectile_v}) and Eq.~(\ref{eq:vr_loss}) to 0.9. And we use clipped double Q-learning~\citep{fujimoto2018addressing}, taking a minimum of two $Q_r$ and a maximum of two $Q_h$. We update the target network with $\alpha=0.001$. Following \cite{kostrikov2021offline}, we clip exponential advantages to $(-\infty , 100]$ in feasible part and $(-\infty , 150]$ in infeasible part. And we set the temperatures in Eq.~(\ref{eq:weight}) with $\alpha_1=3$ and $\alpha_2=5$.
In our paper, we use the diffusion model in IDQL~\citep{hansen2023idql}, which is implemented by JAXRL\footnote{\url{https://github.com/ikostrikov/jaxrl}}.

\subsection{Pseudocode and computational cost}
\label{ap:pseu}
The pseudocode of FISOR is presented in Algorithm~\ref{alg:siql}. We implement our approach using the JAX framework~\citep{bradbury2018jax}. On a single RTX 3090 GPU, we can perform 1 million gradient steps in approximately 45 minutes for all tasks.

\begin{algorithm}[H]
    \caption{Feasibility-Guided Safe Offline RL (FISOR)}
    \begin{algorithmic}
    
        \State Initialize networks $Q_h$, $V_h$, $Q_r$, $V_c$, $z_\theta$.
        \State {\color{blue}Optimal feasible value function learning}:
        \For{each gradient step}
        \State Update $V_h$ using Eq. (\ref{eq:expectile_v})
        \State Update $Q_h$ using Eq. (\ref{eq:qh_loss})
        \EndFor
        \State {\color{red}Optimal value function learning (IQL):}
        \For{each gradient step}
        \State Update $V_r$ using Eq. (\ref{eq:vr_loss})
        \State Update $Q_r$ using Eq. (\ref{eq:qr_loss})
        \EndFor
        \State {\color{orange}Guided diffusion policy learning:}
        \For{each gradient step}
        \State Update $z_\theta$ using Eq. (\ref{eq:pi_loss})
        \EndFor
    \end{algorithmic}
    \label{alg:siql}
\end{algorithm}

\subsection{Hyperparameters}
\label{ap:param}
We use Adam optimizer with a learning rate $3e^{-4}$ for all networks. The batch size is set to 256 for value networks and 2048 for the diffusion model. We report the detailed setup in Table \ref{tab:param}.
\begin{table}[h]
    \centering
    \caption{\small Hyperparameters of FISOR}
    \begin{tabular}{lllllll}
    \toprule
        Parameter  &Value\\
    \midrule
        Optimizer & Adam \\
        Learning rate  & 3e-4  \\
        Value function batch size   & 256  \\
        Diffusion model batch size  & 2048  \\
        Number of hidden layers (value function) & 2\\
        Number of neurons in a hidden layer & 256 \\
        Activation function & ReLU \\
        Expectile $\tau$  & 0.9  \\
        Feasible temperature $\alpha_1$  & 3  \\
        Infeasible temperature $\alpha_2$  & 5  \\
        Discount factor $\gamma$ & 0.99\\
        Soft update $\alpha$ & 0.001 \\
        Exponential advantages clip (feasible) & $(-\infty , 100]$ \\
        Exponential advantages clip (infeasible) & $(-\infty , 150]$ \\
        Number of times Gaussian noise is added $T$  & 5  \\
        Number of action candidates $N$ &16 \\
        Training steps & 1e6\\
    \bottomrule
    \end{tabular}
    \label{tab:param}
\end{table}

\subsection{Details on Safe Offline Imitation Learning}
\label{ap:il}
We can also extend FISOR to the domain of safe offline imitation learning (IL), showcasing the strong adaptability of the algorithm. 

\textbf{Task Description}. We test FISOR in three MetaDrive~\citep{li2022metadrive} environments: \texttt{mediumsparse}, \texttt{mediummean}, \texttt{mediumdense}. Based on the dataset provided by DSRL~\citep{liu2023datasets}, we select trajectories with high rewards that are larger than 220 (expert dataset), including both safe and unsafe trajectories, as shown in Figure \ref{fig:imitdata}. The dataset does not include reward information but the cost function is retained.
\begin{figure}[h]
    \centering
 \includegraphics[width=1\textwidth]{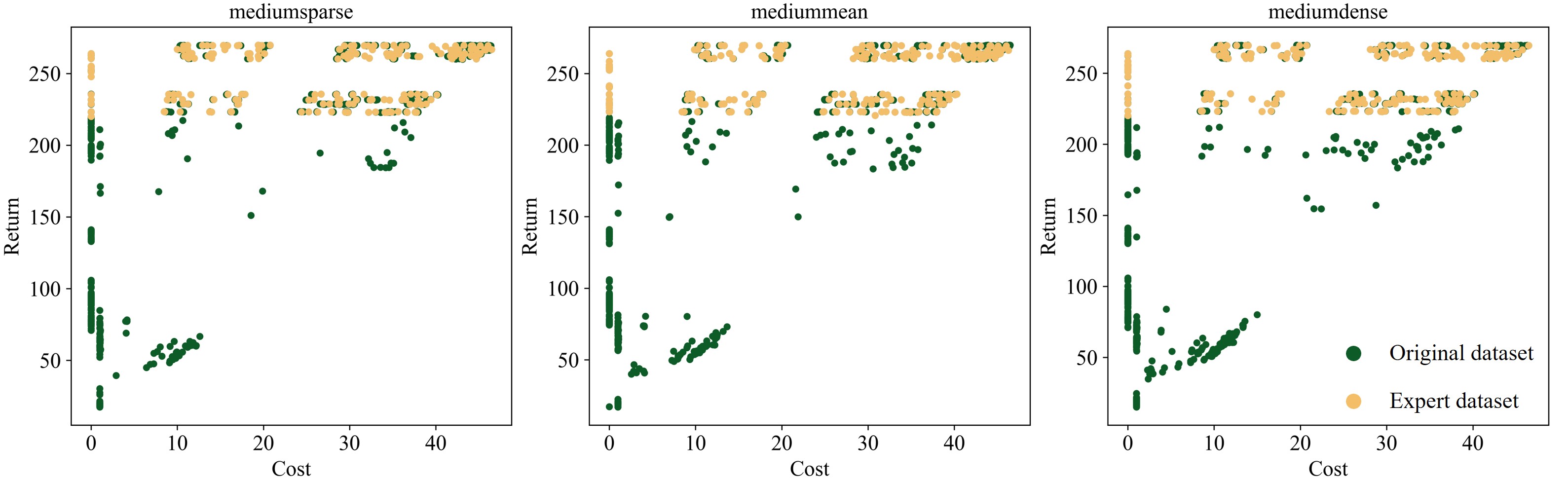}
 \caption{\small Visualization of the expert dataset trajectories on the cost-return space.}
 \label{fig:imitdata}
\end{figure}

\textbf{Baselines}. To apply FISOR in the safe offline IL setting, it suffices to modify Eq.~(\ref{eq:weight}) to:
\begin{equation*}
 w_{IL}(s,a) = \begin{cases}{
\begin{aligned}
& \mathbb{I}_{Q^{\ast}_h(s,a) \leq 0} &&V^{\ast}_h(s) \leq 0 ~({\rm Feasible}) \\
& \exp\left(-\alpha_2 A_h^{\ast}(s,a)\right)&& V^{\ast}_h(s) > 0 ~({\rm Infeasible})
\end{aligned}
}\end{cases}
\end{equation*}
In the IL experiment, we adjust the batch size of the diffusion model to 1024, set the constraint Violation scale $M$ to 1, and keep the rest consistent with Table \ref{tab:param}.

BC-P adds a fixed safety penalty on top of BC (akin to TD3+BC~\citep{fujimoto2021minimalist}), and its loss can be written as:
\begin{equation*}
\textbf{{\rm BC-P}}:\quad \min_\theta \mathcal{L}_{\pi}(\theta) = \min_\theta \mathbb{E}_{(s,a)\sim\mathcal{D}}\left[ \left( \pi_{\theta}(s)-a\right)^2 + k\cdot Q_c^{\pi}(s,a) \right],
\end{equation*}
where $k$ is a constant used to control the level of punishment and is set as $k=5$. 

Compared to BC-P, BC-L replaces $k$ with an adaptive PID-based Lagrangian multiplier~\citep{stooke2020responsive}, and introduces a cost limit $l=10$. Its loss can be written as:
\begin{equation*}
\textbf{{\rm BC-L}}:\quad \min_\theta \mathcal{L}_{\pi}(\theta) = \min_\theta \mathbb{E}_{(s,a)\sim\mathcal{D}}\left[ \left( \pi_{\theta}(s)-a\right)^2 + \lambda \cdot \left(Q_c^{\pi}(s,a) - l \right)\right].
\end{equation*}
Each experiment result is averaged over 20 evaluation episodes and 5 seeds.

{

\subsection{Experiments of Extending Safe Online RL to Offline Settings}
Besides the safe offline RL baselines, we adapted some safe online RL methods to offline settings. Specifically, we consider Saut{\'e}~\citep{sootla2022saute} and RCRL~\citep{yu2022reachability}, as Saut{\'e} has good versatility and RCRL also utilizes HJ Reachability to enforce hard constraints.

However, directly applying safe online RL methods in offline settings faces the distributional shift challenge. Moreover, it is also difficult to strike the right balance among three highly intricate and correlated aspects: safety constraint satisfaction, reward maximization, and behavior regularization in offline settings. Considering these challenges, we added a behavior regularization term like TD3+BC~\citep{fujimoto2021minimalist} into Sauté and RCRL to combat the distributional shift. Also, we carefully tuned the conservatism strength to find a good balance that can obtain good results.

In particular, Saut{\'e}~\citep{sootla2022saute} handles the safety constraints by integrating them into the state space and modifying the objective accordingly, which can be easily applied to both
online RL methods. We incorporated this method into the state-of-the-art offline RL algorithm, TD3+BC~\citep{fujimoto2021minimalist} (\textit{Saut{\'e}-TD3BC}). The policy loss can be written as:
\begin{equation*}
\textbf{{\rm Saut{\'e}-TD3BC}}:\quad \min_\theta \mathcal{L}_{\pi}(\theta) = \min_\theta \mathbb{E}_{(s,a)\sim\mathcal{D}}\left[ \left( \pi_{\theta}(s)-a\right)^2 - \lambda \cdot \acute{Q}_r^{\pi} (s,a)\right],
\end{equation*}
where $\acute{Q}_r^{\pi}$ is the Saut{\'e}d action-value function.

RCRL~\citep{yu2022reachability}, based on reachability theory (hard constraint) and using Lagrangian iterative solving, can be combined with TD3+BC to mitigate potential distributional shift issues in offline settings (\textit{RC-TD3BC}). Its loss can be written as:
\begin{equation*}
\textbf{{\rm RC-TD3BC}}:\quad \min_\theta \mathcal{L}_{\pi}(\theta) = \min_\theta \mathbb{E}_{(s,a)\sim\mathcal{D}}\left[ \left( \pi_{\theta}(s)-a\right)^2 - \lambda_1 \cdot {Q}_r^{\pi} (s,a) + \lambda_2(s) Q_h^{\pi}(s,a)\right].
\end{equation*}

\renewcommand{\arraystretch}{1}
\begin{table}[t]
\centering
\caption{{Results of extending safe online RL to offline settings.}
$\uparrow$ means the higher the better. $\downarrow$ means the lower the better. 
Each value is averaged over 20 evaluation episodes and 3 seeds. {\color[HTML]{656565} Gray}: Unsafe agents.
\textbf{Bold}: Safe agents whose normalized cost is smaller than 1. 
{\color[HTML]{0000FF}}: Safe agent with the highest reward.} 
\label{tab:rcrl-saute}
\resizebox{0.8\linewidth}{!}{
\begin{tabular}{ccccccccccccccccc}
\toprule[1pt]
& \multicolumn{2}{c}{Saut{\'e}-TD3BC}  & \multicolumn{2}{c}{RC-TD3BC}  & \multicolumn{2}{c}{FISOR (ours)} \\ 
\cmidrule(r){2-3} \cmidrule(r){4-5} \cmidrule(r){6-7} 
\multirow{-2}{*}{Task} & reward $\uparrow$ & cost $\downarrow$ & reward $\uparrow$ & cost $\downarrow$ & reward $\uparrow$  & cost $\downarrow$   \\ \midrule
CarButton1  
& {\color[HTML]{656565} -0.03}  & {\color[HTML]{656565} 3.48}
& {\color[HTML]{656565} -0.10}  & {\color[HTML]{656565} 3.89}
& {\color[HTML]{0000FF} \textbf{-0.02}}  & {\color[HTML]{0000FF} \textbf{0.26}}
\\
CarButton2
& {\color[HTML]{656565} -0.06}  & {\color[HTML]{656565} 5.48}
& {\color[HTML]{656565} -0.12}  & {\color[HTML]{656565} 2.30}
& {\color[HTML]{0000FF} \textbf{0.01}}  & {\color[HTML]{0000FF} \textbf{0.58}}
\\
CarPush1
& \textbf{0.19}  & \textbf{0.97}
& {\color[HTML]{656565} 0.24}  & {\color[HTML]{656565} 1.81}
& {\color[HTML]{0000FF} \textbf{0.28}}  & {\color[HTML]{0000FF} \textbf{0.28}}
\\
CarPush2
& {\color[HTML]{656565} 0.05}  & {\color[HTML]{656565} 3.03}
& {\color[HTML]{656565} 0.16}  & {\color[HTML]{656565} 3.48}
& {\color[HTML]{0000FF} \textbf{0.14}}  & {\color[HTML]{0000FF} \textbf{0.89}}
\\
CarGoal1
& {\color[HTML]{656565} 0.34}  & {\color[HTML]{656565} 1.56}
& {\color[HTML]{656565} 0.44}  & {\color[HTML]{656565} 3.84}
& {\color[HTML]{0000FF} \textbf{0.49}}  & {\color[HTML]{0000FF} \textbf{0.83}}
\\
CarGoal2
& {\color[HTML]{656565} 0.23}  & {\color[HTML]{656565} 3.36}
& {\color[HTML]{656565} 0.33}  & {\color[HTML]{656565} 4.18}
& {\color[HTML]{0000FF} \textbf{0.06}}  & {\color[HTML]{0000FF} \textbf{0.33}}
\\
AntVel
& {\color[HTML]{656565} 0.80}  & {\color[HTML]{656565} 1.12}
& \textbf{-1.01}  & \textbf{0.00}
& {\color[HTML]{0000FF} \textbf{0.89}}  & {\color[HTML]{0000FF} \textbf{0.00}}
\\
HalfCheetahVel
& {\color[HTML]{656565} 0.95}  & {\color[HTML]{656565} 11.87}
& \textbf{-0.37}  & \textbf{0.21}
& {\color[HTML]{0000FF} \textbf{0.89}}  & {\color[HTML]{0000FF} \textbf{0.00}}
\\
SwimmerVel
& {\color[HTML]{656565} 0.44}  & {\color[HTML]{656565} 8.95}
& {\color[HTML]{656565} -0.03}  & {\color[HTML]{656565} 1.09}
& {\color[HTML]{0000FF} \textbf{-0.04}}  & {\color[HTML]{0000FF} \textbf{0.00}}
\\ \midrule
\begin{tabular}[c]{@{}c@{}}\textbf{SafetyGym}\\ \textbf{Average}\end{tabular}       
& {\color[HTML]{656565} 0.32}  & {\color[HTML]{656565} 4.42}
& {\color[HTML]{656565} -0.05}  & {\color[HTML]{656565} 2.31}
& {\color[HTML]{0000FF} \textbf{0.30}}  & {\color[HTML]{0000FF} \textbf{0.35}}
\\ \midrule
AntRun
& {\color[HTML]{656565} 0.48}  & {\color[HTML]{656565} 1.56}
& \textbf{0.02}  & \textbf{0.00}
& {\color[HTML]{0000FF} \textbf{0.45}}  & {\color[HTML]{0000FF} \textbf{0.03}}
\\
BallRun
& {\color[HTML]{656565} 0.24}  & {\color[HTML]{656565} 3.43}
& {\color[HTML]{656565} -0.08}  & {\color[HTML]{656565} 16.55}
& {\color[HTML]{0000FF} \textbf{0.18}}  & {\color[HTML]{0000FF} \textbf{0.00}}
\\
CarRun
& {\color[HTML]{656565} 0.87}  & {\color[HTML]{656565} 1.50}
& {\color[HTML]{656565} -0.24}  & {\color[HTML]{656565} 33.35}
& {\color[HTML]{0000FF} \textbf{0.73}}  & {\color[HTML]{0000FF} \textbf{0.14}}
\\
DroneRun
& {\color[HTML]{656565} 0.42}  & {\color[HTML]{656565} 13.38}
& {\color[HTML]{656565} -0.19}  & {\color[HTML]{656565} 9.8}
& {\color[HTML]{0000FF} \textbf{0.30}}  & {\color[HTML]{0000FF} \textbf{0.55}}
\\
AntCircle
& {\color[HTML]{656565} 0.22}  & {\color[HTML]{656565} 17.57}
& \textbf{0.00}  & \textbf{0.00}
& {\color[HTML]{0000FF} \textbf{0.20}}  & {\color[HTML]{0000FF} \textbf{0.00}}
\\
BallCircle
& {\color[HTML]{656565} 0.61}  & {\color[HTML]{656565} 1.13}
& {\color[HTML]{656565} 0.10}  & {\color[HTML]{656565} 29.18}
& {\color[HTML]{0000FF} \textbf{0.34}}  & {\color[HTML]{0000FF} \textbf{0.00}}
\\
CarCircle
& {\color[HTML]{656565} 0.61}  & {\color[HTML]{656565} 3.81}
& {\color[HTML]{656565} 0.01}  & {\color[HTML]{656565} 45.11}
& {\color[HTML]{0000FF} \textbf{0.40}}  & {\color[HTML]{0000FF} \textbf{0.11}}
\\
DroneCircle
& {\color[HTML]{656565} -0.11}  & {\color[HTML]{656565} 1.22}
& {\color[HTML]{656565} -0.25}  & {\color[HTML]{656565} 3.04}
& {\color[HTML]{0000FF} \textbf{0.48}}  & {\color[HTML]{0000FF} \textbf{0.00}}
\\ \midrule
\begin{tabular}[c]{@{}c@{}}\textbf{BulletGym}\\ \textbf{Average}\end{tabular}       
& {\color[HTML]{656565} 0.42}  & {\color[HTML]{656565} 5.45}
& {\color[HTML]{656565} -0.08}  & {\color[HTML]{656565} 17.13}
& {\color[HTML]{0000FF} \textbf{0.39}}  & {\color[HTML]{0000FF} \textbf{0.10}}
\\ \midrule
easysparse
& {\color[HTML]{656565} 0.01}  & {\color[HTML]{656565} 6.19}
& \textbf{-0.06}  & \textbf{0.24}
& {\color[HTML]{0000FF} \textbf{0.38}}  & {\color[HTML]{0000FF} \textbf{0.53}}
\\
easymean
& {\color[HTML]{656565} 0.31}  & {\color[HTML]{656565} 3.61}
& \textbf{-0.06}  & \textbf{0.20}
& {\color[HTML]{0000FF} \textbf{0.38}}  & {\color[HTML]{0000FF} \textbf{0.25}}
\\
easydense 
& \textbf{-0.03}  & \textbf{0.49}
& \textbf{-0.05}  & \textbf{0.12}
& {\color[HTML]{0000FF} \textbf{0.36}}  & {\color[HTML]{0000FF} \textbf{0.25}}
\\
mediumsparse
& {\color[HTML]{656565} 0.37}  & {\color[HTML]{656565} 2.25}
& \textbf{-0.08}  & \textbf{0.33}
& {\color[HTML]{0000FF} \textbf{0.42}}  & {\color[HTML]{0000FF} \textbf{0.22}}
\\
mediummean
& {\color[HTML]{656565} 0.01}  & {\color[HTML]{656565} 6.19}
& \textbf{-0.07}  & \textbf{0.10}
& {\color[HTML]{0000FF} \textbf{0.39}}  & {\color[HTML]{0000FF} \textbf{0.08}}
\\
mediumdense 
& {\color[HTML]{656565} 0.11}  & {\color[HTML]{656565} 1.62}
& \textbf{-0.07}  & \textbf{0.20}
& {\color[HTML]{0000FF} \textbf{0.49}}  & {\color[HTML]{0000FF} \textbf{0.44}}
\\
hardsparse
& {\color[HTML]{656565} 0.26}  & {\color[HTML]{656565} 4.37}
& \textbf{-0.04}  & \textbf{0.22}
& {\color[HTML]{0000FF} \textbf{0.30}}  & {\color[HTML]{0000FF} \textbf{0.01}}
\\
hardmean
& {\color[HTML]{656565} 0.11}  & {\color[HTML]{656565} 2.00}
& \textbf{-0.03}  & \textbf{0.56}
& {\color[HTML]{0000FF} \textbf{0.26}}  & {\color[HTML]{0000FF} \textbf{0.09}}
\\
harddense 
& \textbf{-0.02}  & \textbf{0.42}
& \textbf{-0.03}  & \textbf{0.48}
& {\color[HTML]{0000FF} \textbf{0.30}}  & {\color[HTML]{0000FF} \textbf{0.34}}
\\ \midrule
\begin{tabular}[c]{@{}c@{}}\textbf{MetaDrive}\\ \textbf{Average}\end{tabular}       
& {\color[HTML]{656565} 0.13}  & {\color[HTML]{656565} 3.02}
& \textbf{-0.05}  & \textbf{0.27}
& {\color[HTML]{0000FF} \textbf{0.36}}  & {\color[HTML]{0000FF} \textbf{0.25}}
\\ \bottomrule[1pt]
\end{tabular}
}
\end{table}

We test the Saut{\'e}-TD3BC and RC-TD3BC on DSRL datasets and carefully tune the hyper-parameters to find good results, the results are presented in Table~\ref{tab:rcrl-saute}. However, these methods fail to obtain good results even with carefully swept hyper-parameters due to the intricate coupling objectives including reward maximization, safety satisfaction, and distributional shift. In detail, Saut{\'e}-TD3BC still permits some constraint violations, failing to effectively balance maximizing rewards and meeting safety constraints. RC-TD3BC uses hard constraints, but its coupled solving approach tends to make the algorithm overly conservative. Despite adopting the TD3+BC solving method, RC-TD3BC still faces distributional shift issues. Therefore, naively extending safe online RL algorithms directly to offline settings can lead to reduced algorithm performance and issues like distributional shifts.
}

{
\subsection{Data Quantity Sensitivity Experiments.}

We select some competitive baselines that achieve relatively good safety and reward performances in Table~\ref{tab:main-result} and train them with $1/2$ and $1/10$ of the data volume. Figure~\ref{fig:data} shows that all baselines fail miserably under the small data setting that only contains $1/10$ of the original data. FISOR, however, still meets safety requirements and demonstrates more stable performance compared to other methods, although a reduction in data volume weakens FISOR's safety a little. We believe FISOR enjoys such good stability as it decouples the intricate training processes of safe offline RL, which greatly enhances training performances.}


\begin{figure}[h]
    \centering
 \includegraphics[width=1\textwidth]{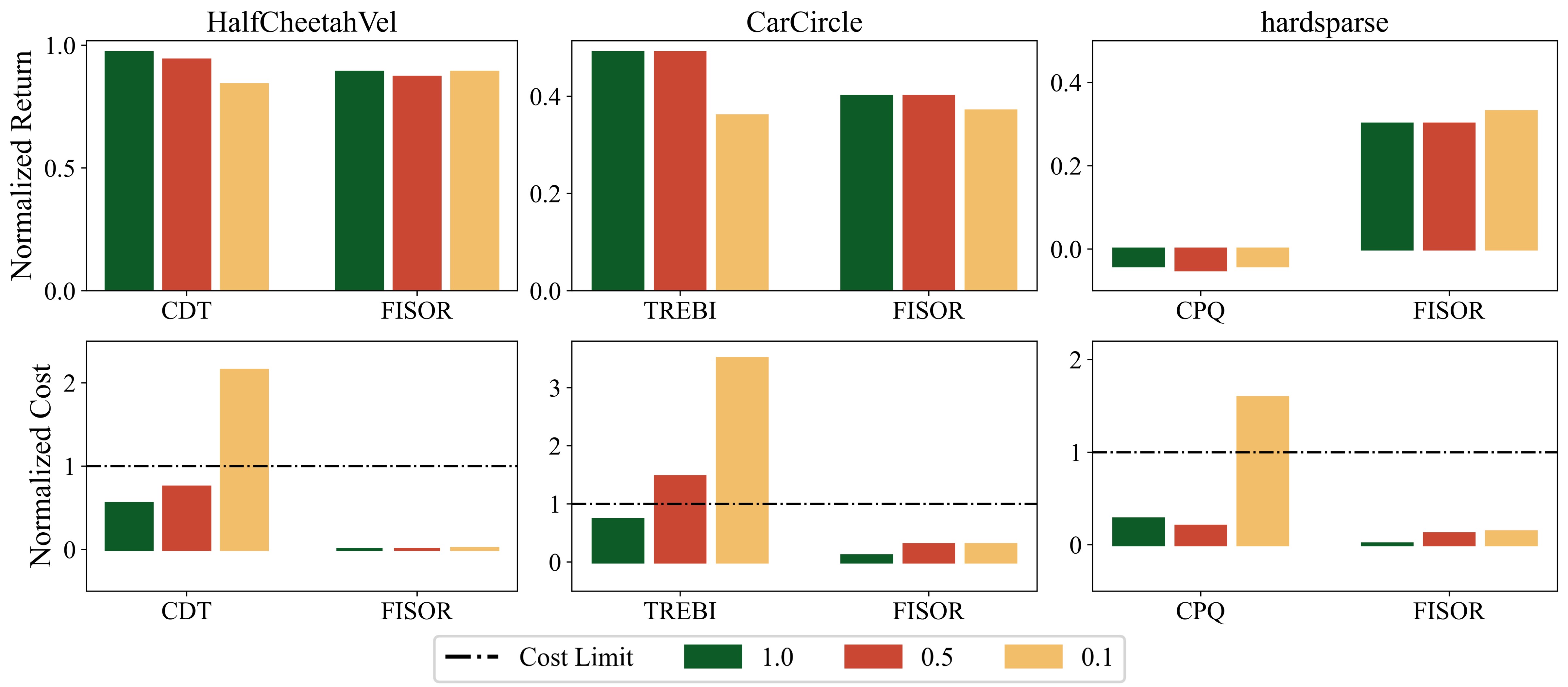}
 \caption{{Data quantity sensitivity experiment results}}
 \label{fig:data}
\end{figure}

\section{Limitations and Discussions}
{A limitation of FISOR is its requirement for more hyper-parameter tuning, such as the forward/reverse expectile and policy extraction temperatures. However, our experiments demonstrate that the algorithm's performance is not sensitive to hyper-parameter variations. Using only a single set of hyper-parameters, FISOR consistently outperforms baselines and obtains great safety/reward performances on 26 tasks. Moreover, safety constraints with disturbances or probabilistic constraint can be challenging for FISOR. These constraints may affect network estimations and impact the algorithm's performance.}

While FISOR performs well on most tasks, it still struggles to achieve zero constraint violations in some cases. We attempt to analyze this issue and identify the following potential reasons:
\begin{itemize}[leftmargin=*] 
    \item We use $Q_h^{\ast}$ to construct safety constraint, which in theory can transform to $V_h^{\pi}$ as stated in Appendix~\ref{ap:proof_lemma_trans}. However, in the offline setting, we can only conduct constraints on the state distribution $d^\gD$ induced by the offline dataset $\gD$ rather than the whole state space. Therefore, the transfer from $V_h^{\pi}$ to $Q_h^{\ast}$ requires another assumption on the data coverage that $0\le\frac{d^\pi(s)}{d^\gD(s)}\le C$ since $Q_h^*(s,a)d^\gD(s)\pi(a|s)\Rightarrow Q_h^*(s,a)d^\pi(s)\pi(a|s)$ only when the offline data has a good coverage on the policy distribution $d^\pi$. Strictly enforcing the policy distribution $d^\pi$ staying within the support of offline data distribution, however, remains a longstanding challenge for offline RL~\citep{lee2021optidice, levine2020offline}. One can use $V_h^\pi$ to enforce safety constraints. 
    However, training the $V_h^{\pi}$ is coupled with its policy training, which hinders algorithm stability as discussed before.
    \item In the offline setting, it is difficult to obtain the true value of $V_h^{\ast}$ and $V_r^{\ast}$. Instead, we only have access to a near-optimal value function within the data distribution. Learning the optimal solution with limited data also remains a challenging problem~\citep{kostrikov2021offline, xu2023offline, garg2023extreme, xiao2023the}.
\end{itemize}

Despite the inherent challenges of offline policy learning, to the best of our knowledge, our work represents a pioneering effort in considering hard constraints within the offline setting. Moreover, we disentangle and separately address the complex objectives of reward maximization, safety constraint adherence, and behavior regularization for safe offline RL. This approach involves training these objectives independently, which enhances training stability and makes it particularly suitable for offline scenarios. In our empirical evaluations, we demonstrate that, compared to soft constraints, our approach consistently achieves the lowest constraint violations without the need for selecting a cost limit and attains the highest returns across most tasks, showcasing the promise of applying hard constraints and decoupled training in safe offline RL.


\section{Learning Curves}
We test FISOR with the same set of parameters in Table \ref{tab:param} in 26 tasks of DSRL~\citep{liu2023datasets}, and the learning curve is shown in Figure \ref{fig:main_result_curve}.

\begin{figure}[h]
    \centering
    \includegraphics[width=0.45\textwidth]{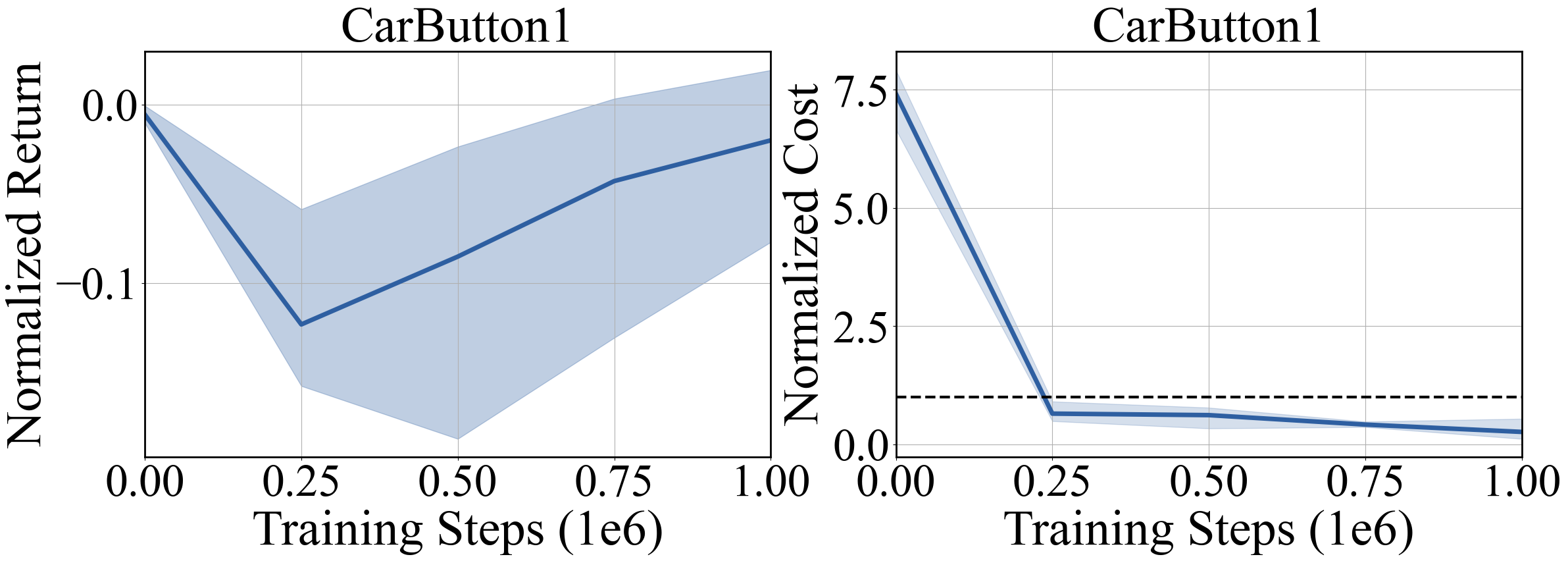}
    \includegraphics[width=0.45\textwidth]{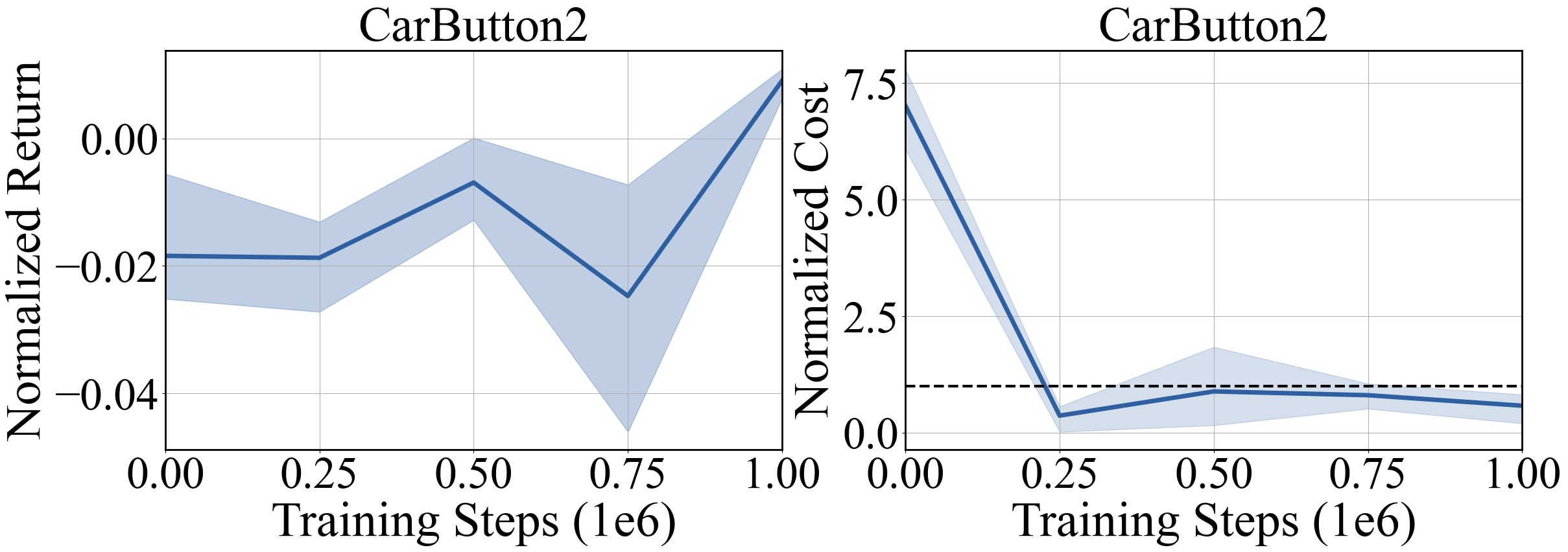}
    \includegraphics[width=0.45\textwidth]{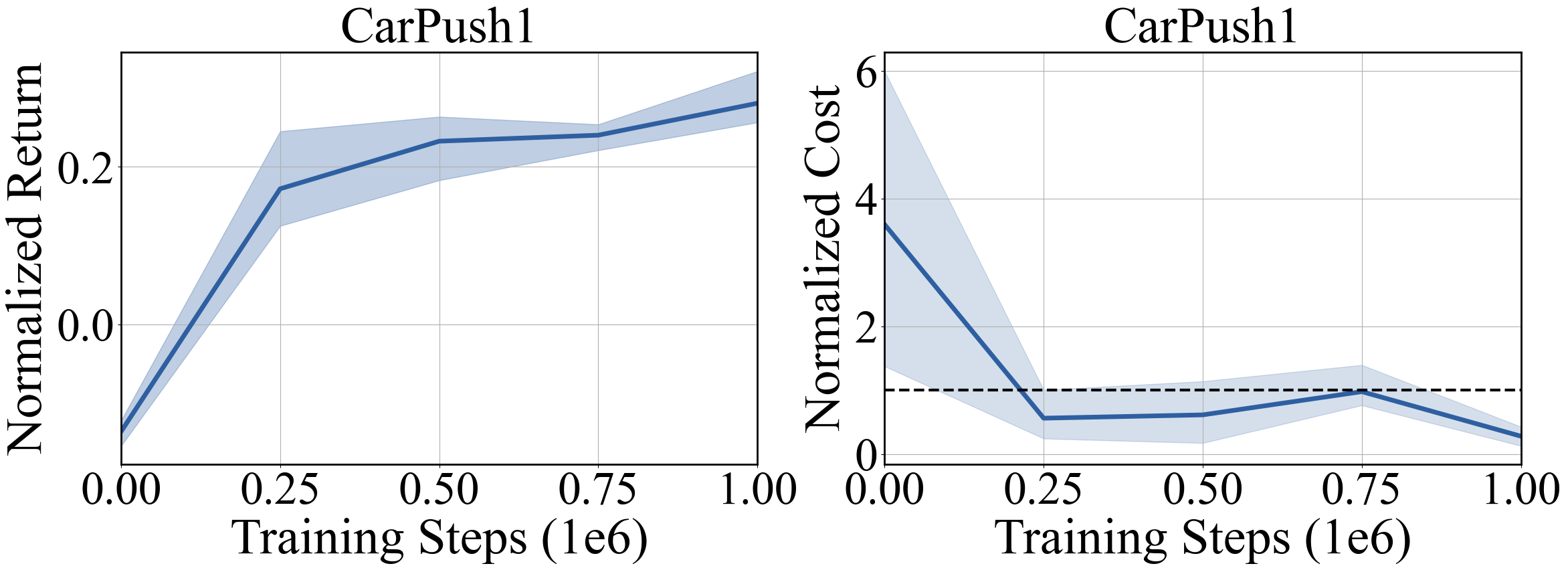}
    \includegraphics[width=0.45\textwidth]{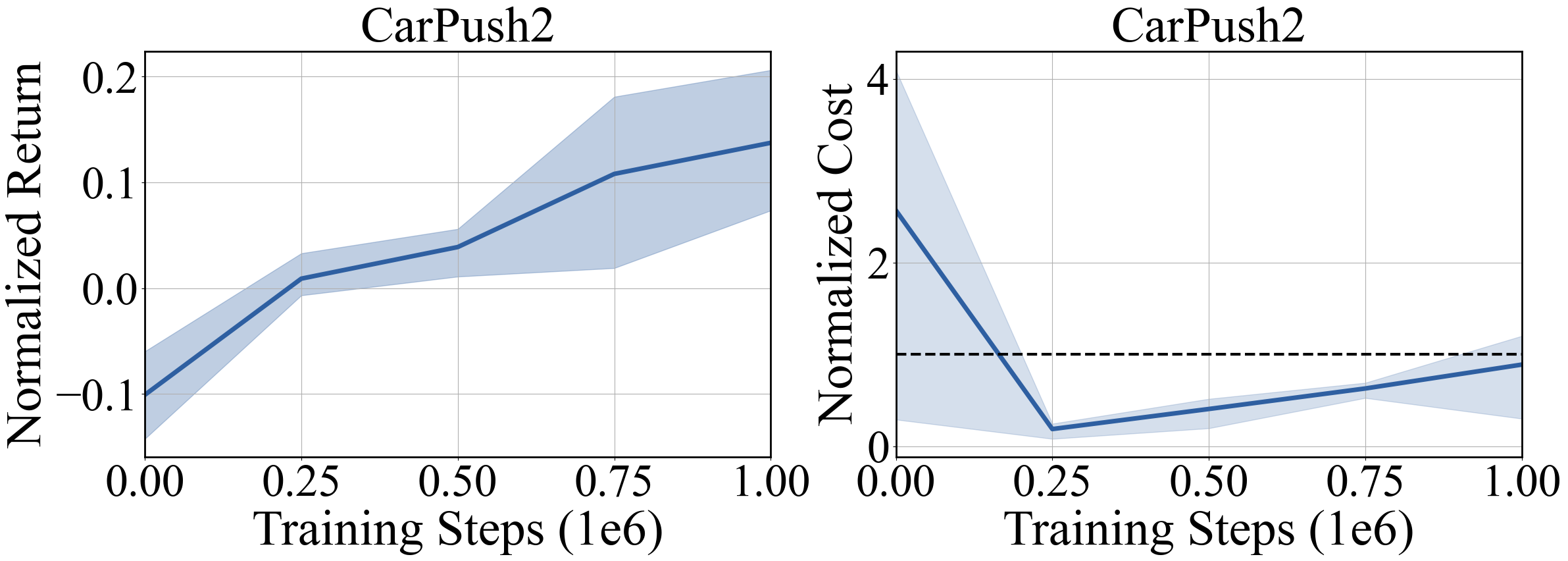}
    \includegraphics[width=0.45\textwidth]{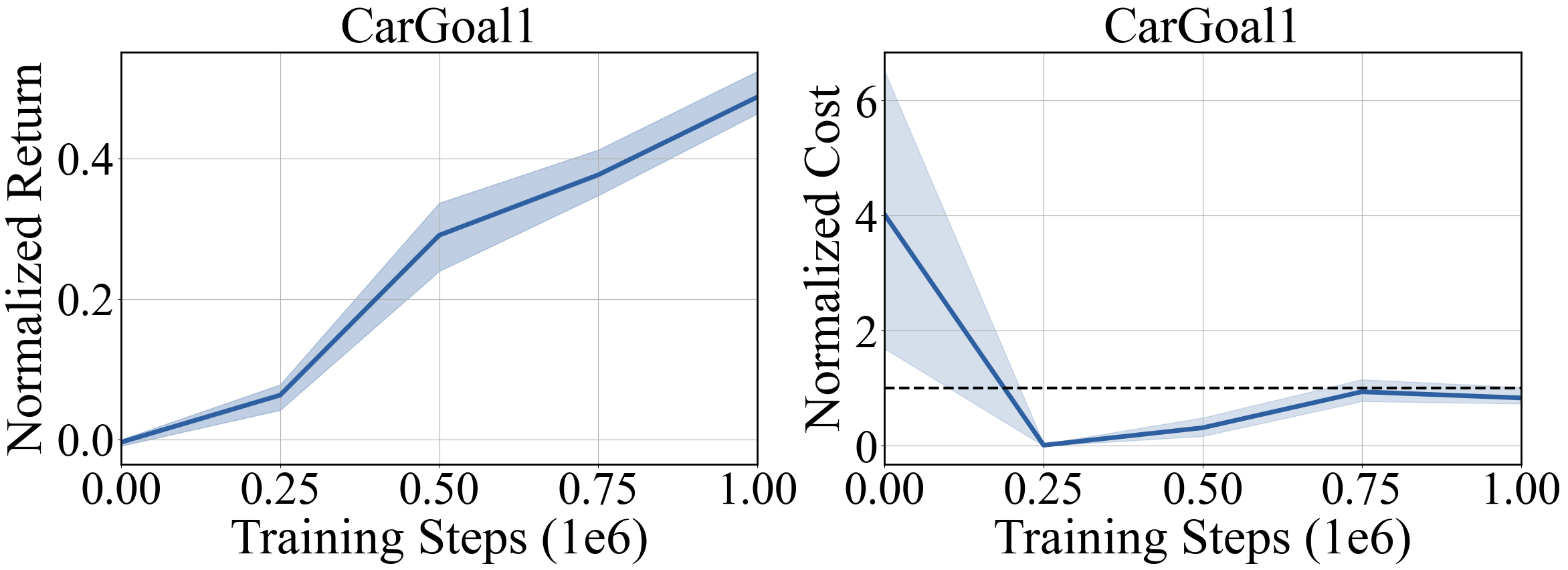}
    \includegraphics[width=0.45\textwidth]{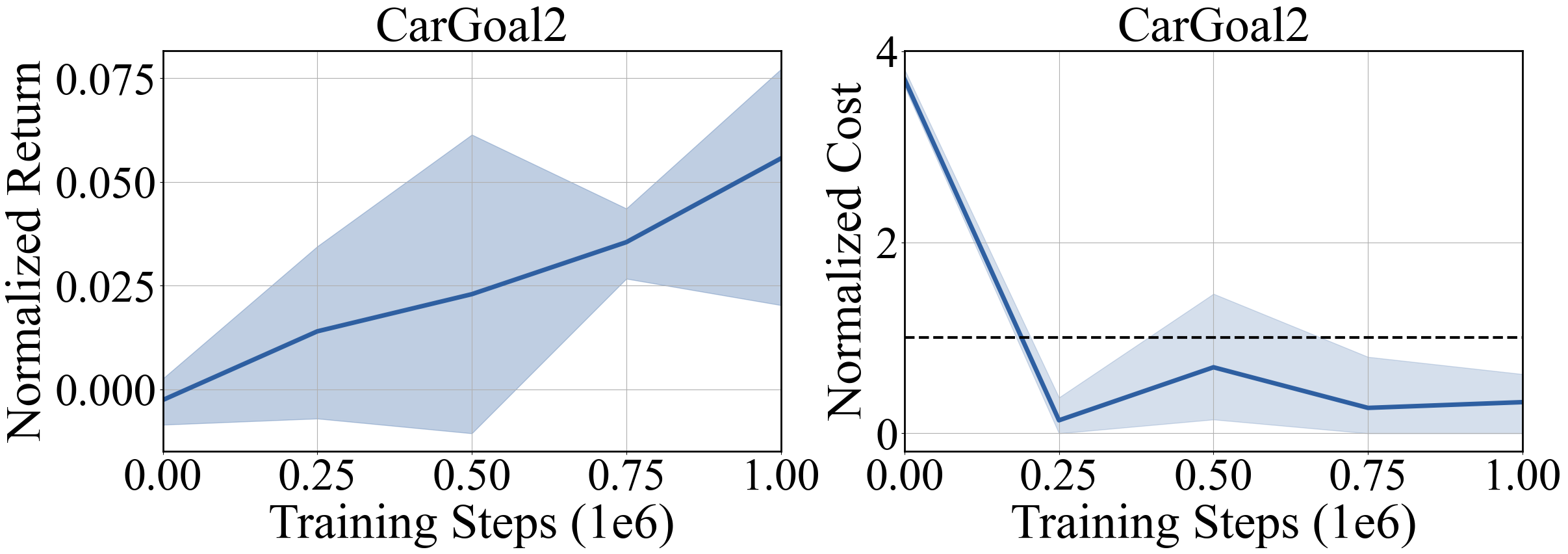}
    \includegraphics[width=0.45\textwidth]{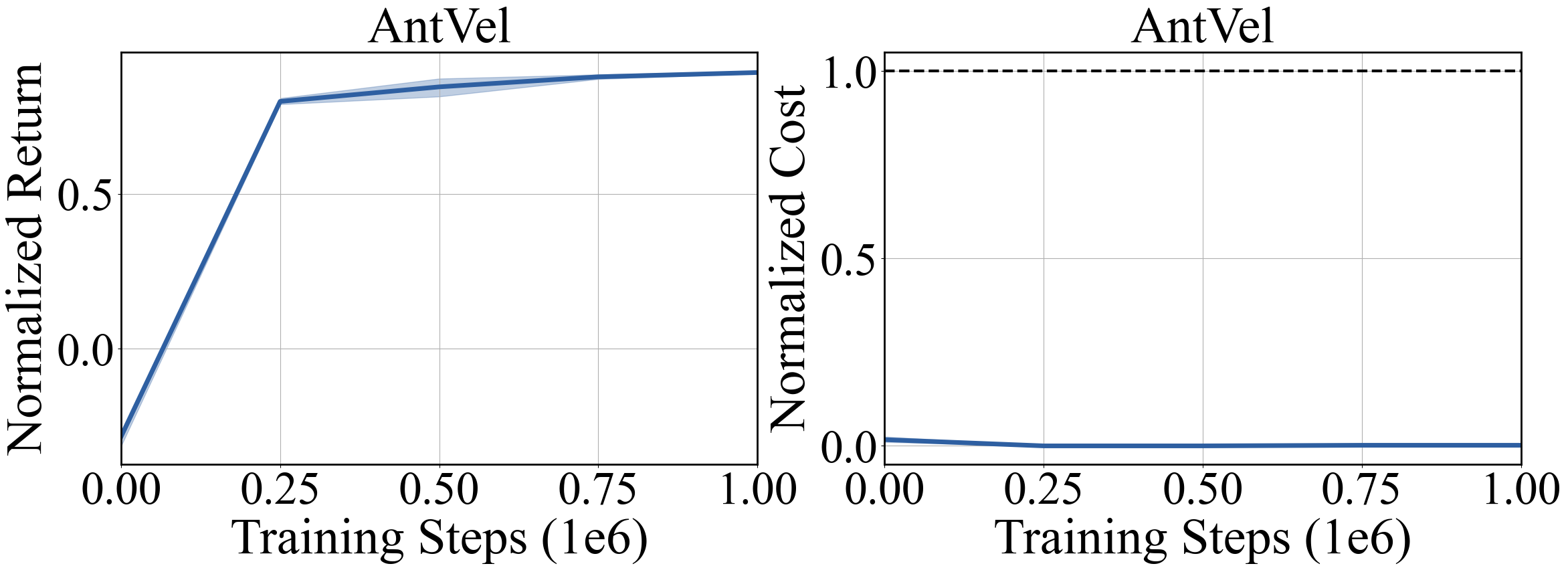}
    \includegraphics[width=0.45\textwidth]{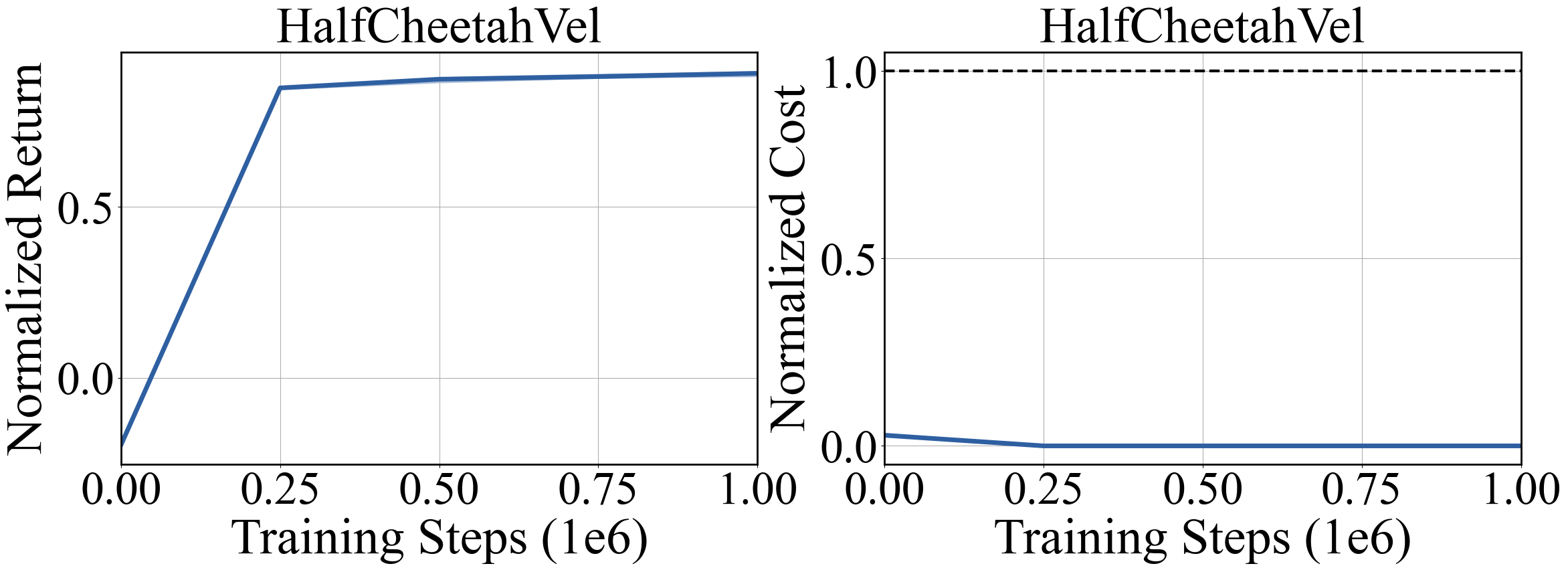}
    \includegraphics[width=0.45\textwidth]{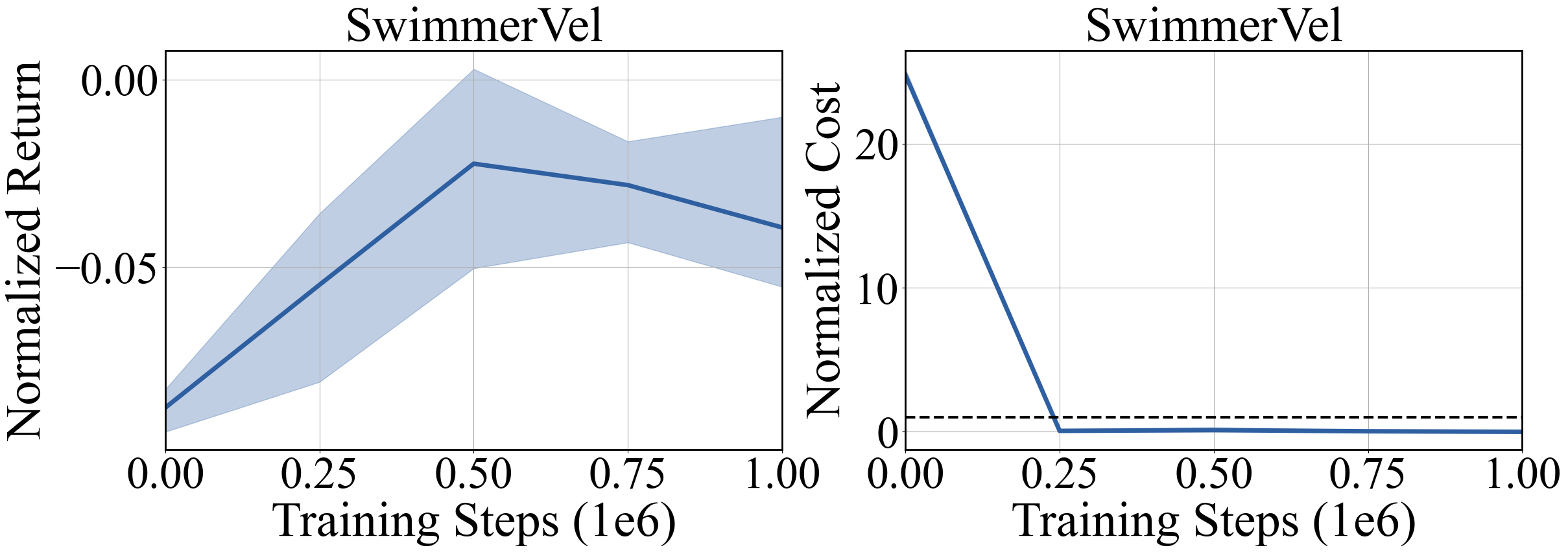}
    \includegraphics[width=0.45\textwidth]{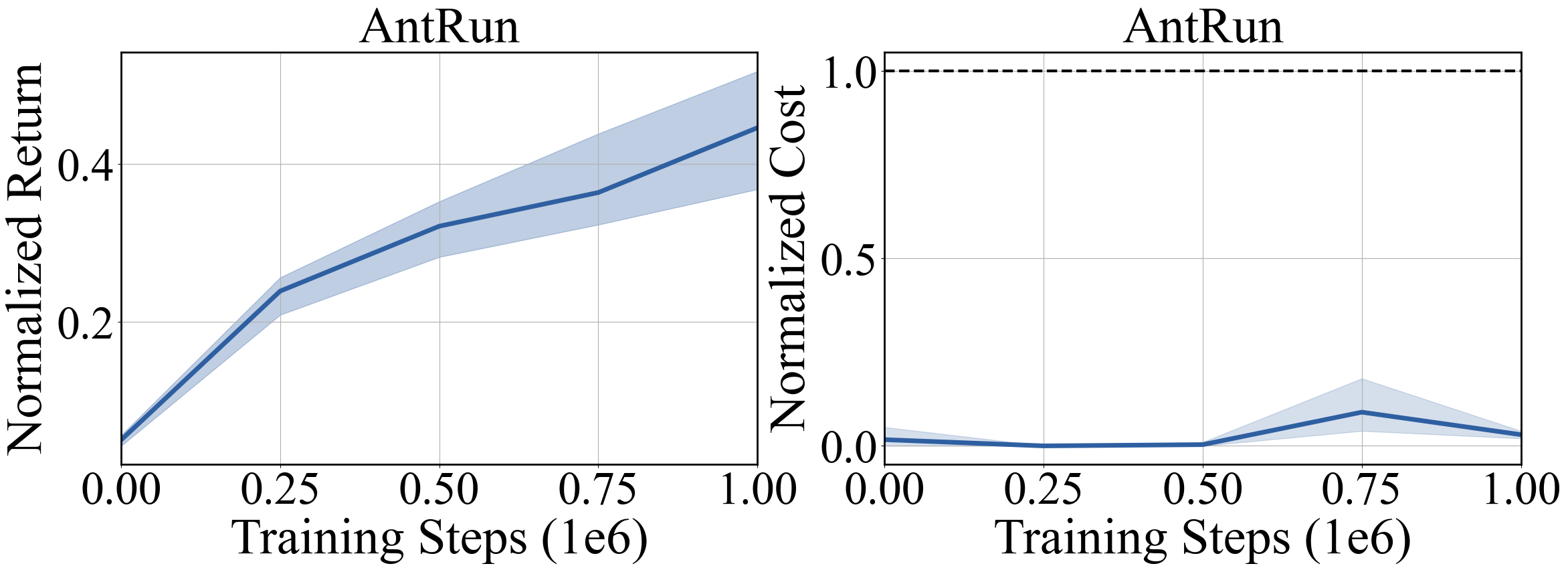}
    \includegraphics[width=0.45\textwidth]{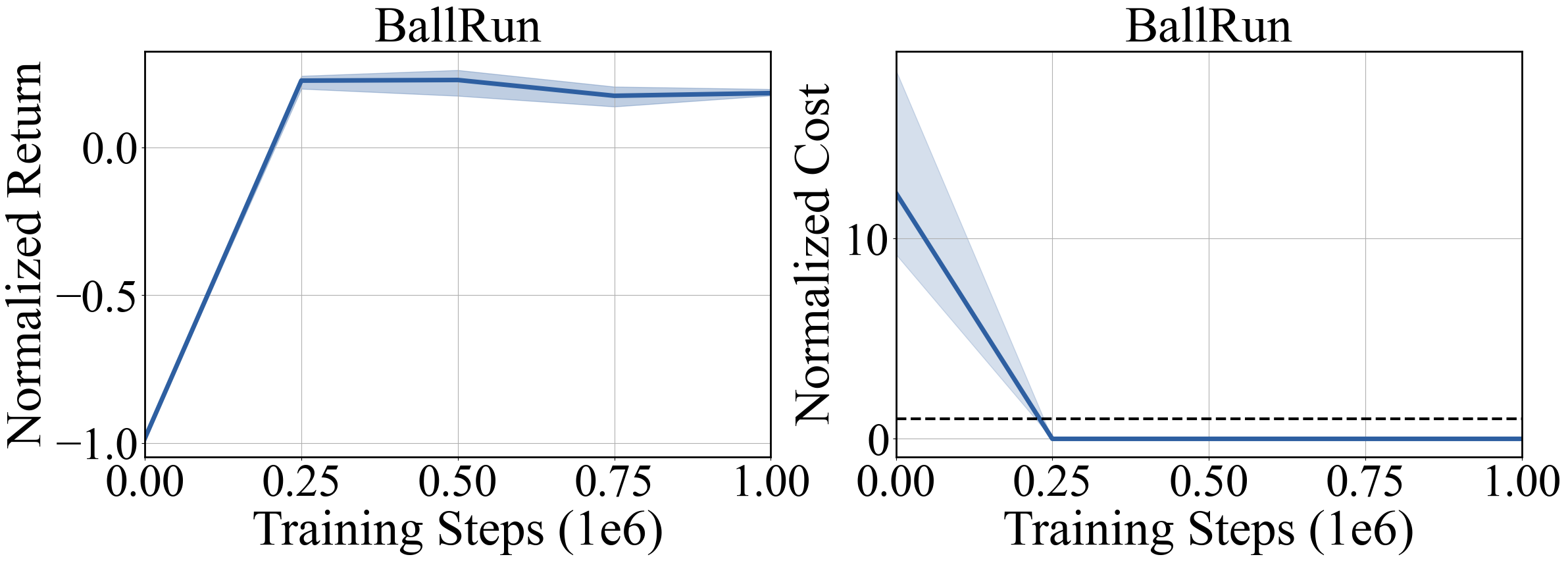}
    \includegraphics[width=0.45\textwidth]{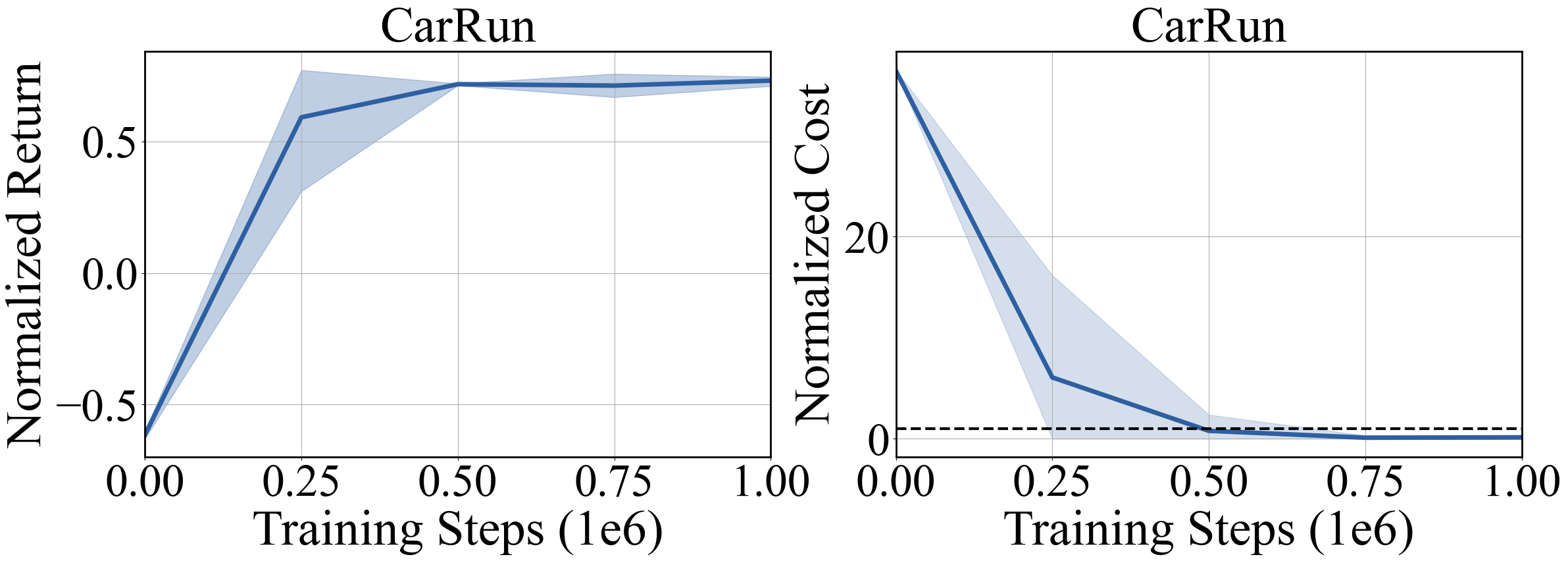}
    \includegraphics[width=0.45\textwidth]{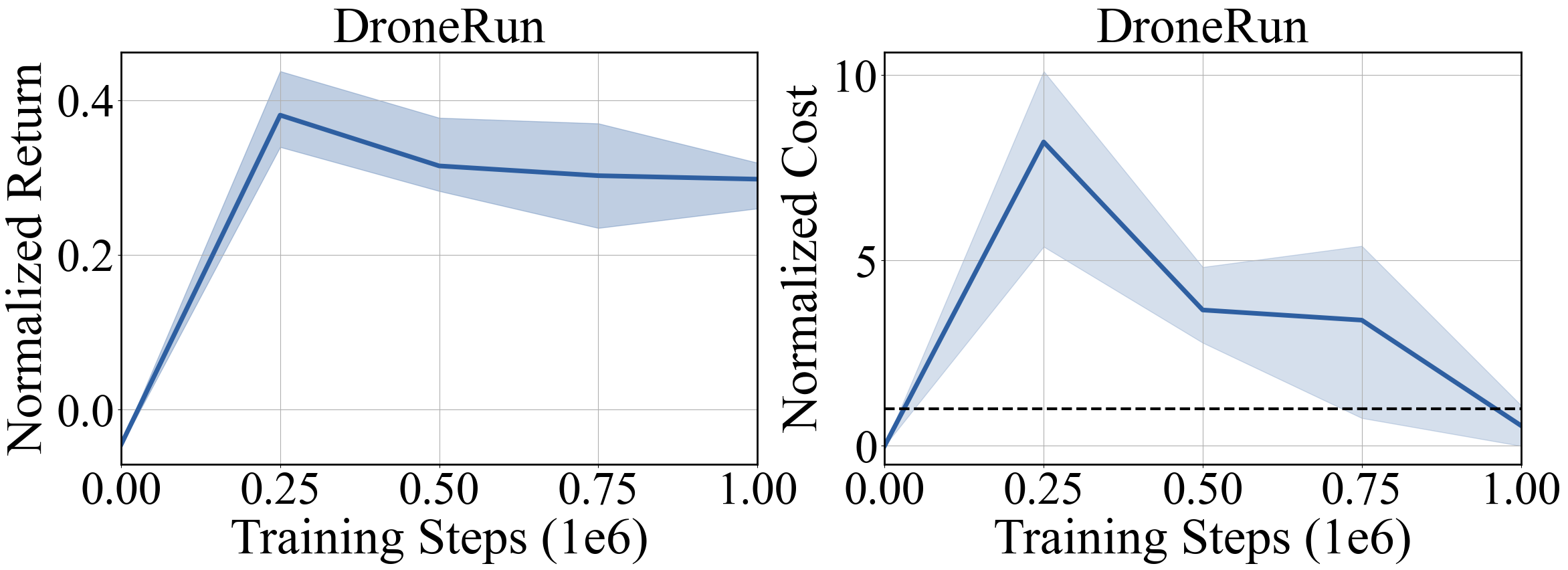}
    \includegraphics[width=0.45\textwidth]{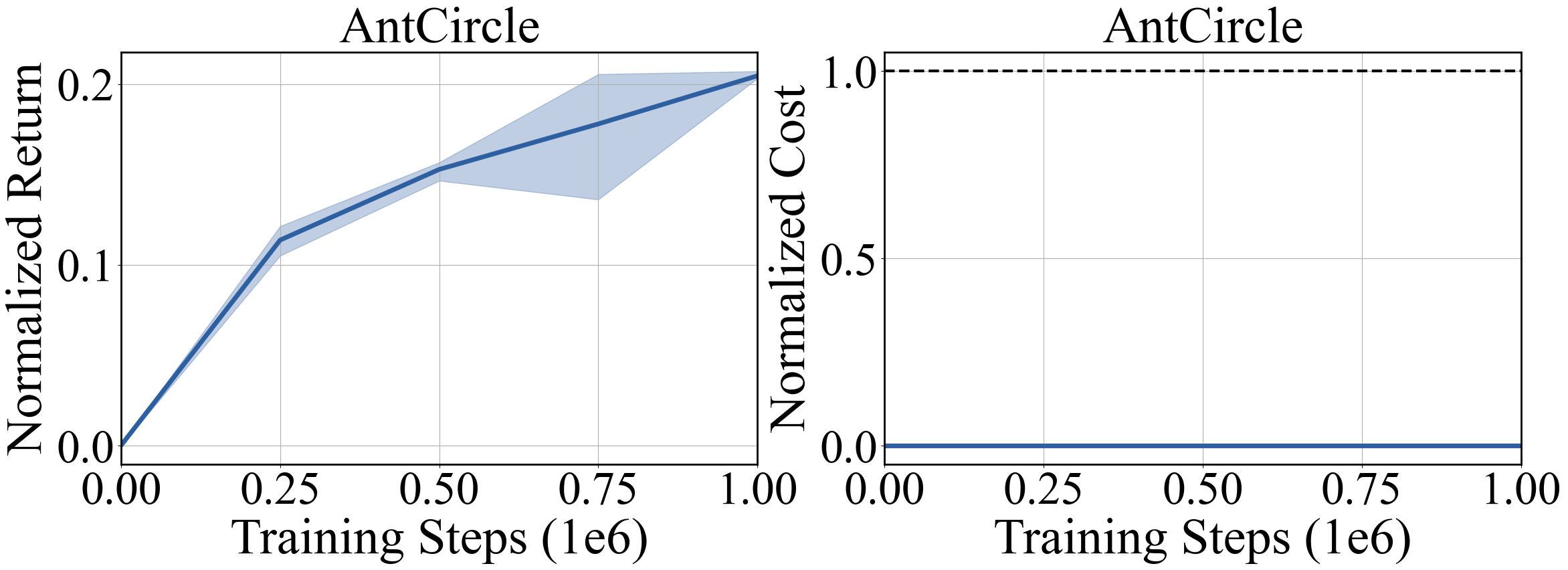}
    \includegraphics[width=0.45\textwidth]{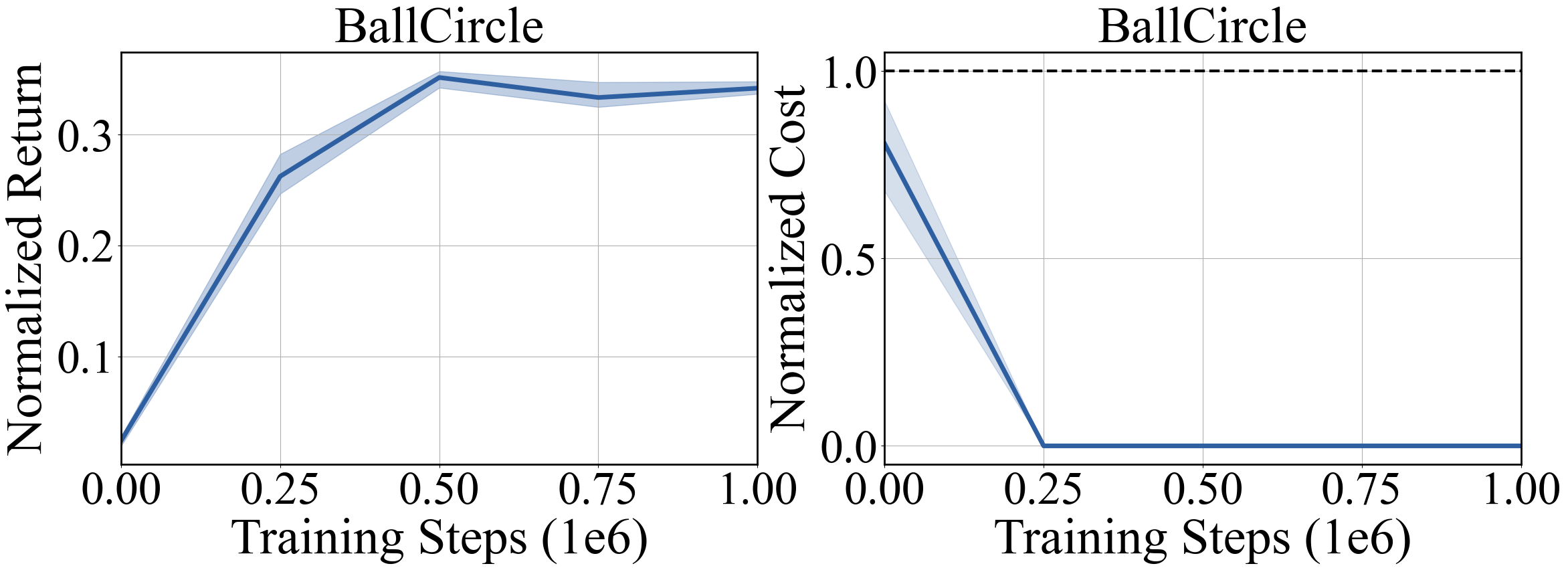}
    \includegraphics[width=0.45\textwidth]{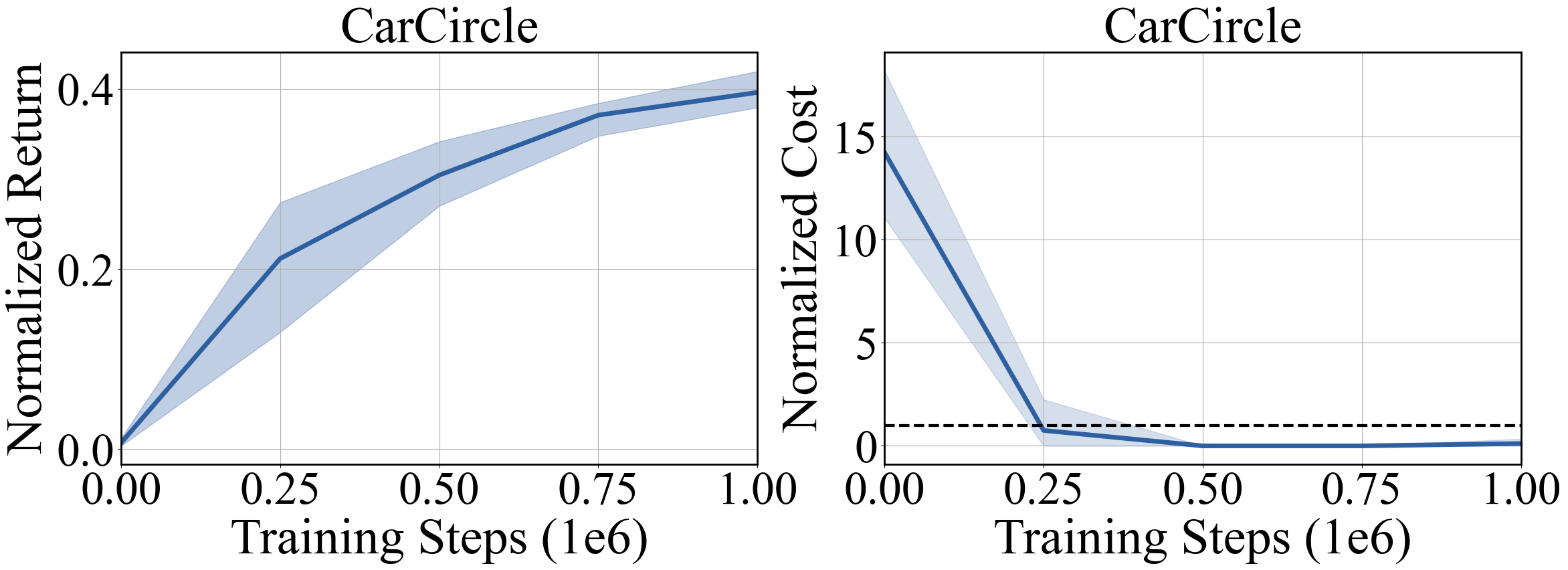}
    \includegraphics[width=0.45\textwidth]{fig/ap/main_results/DroneRun.png}
    \includegraphics[width=0.45\textwidth]{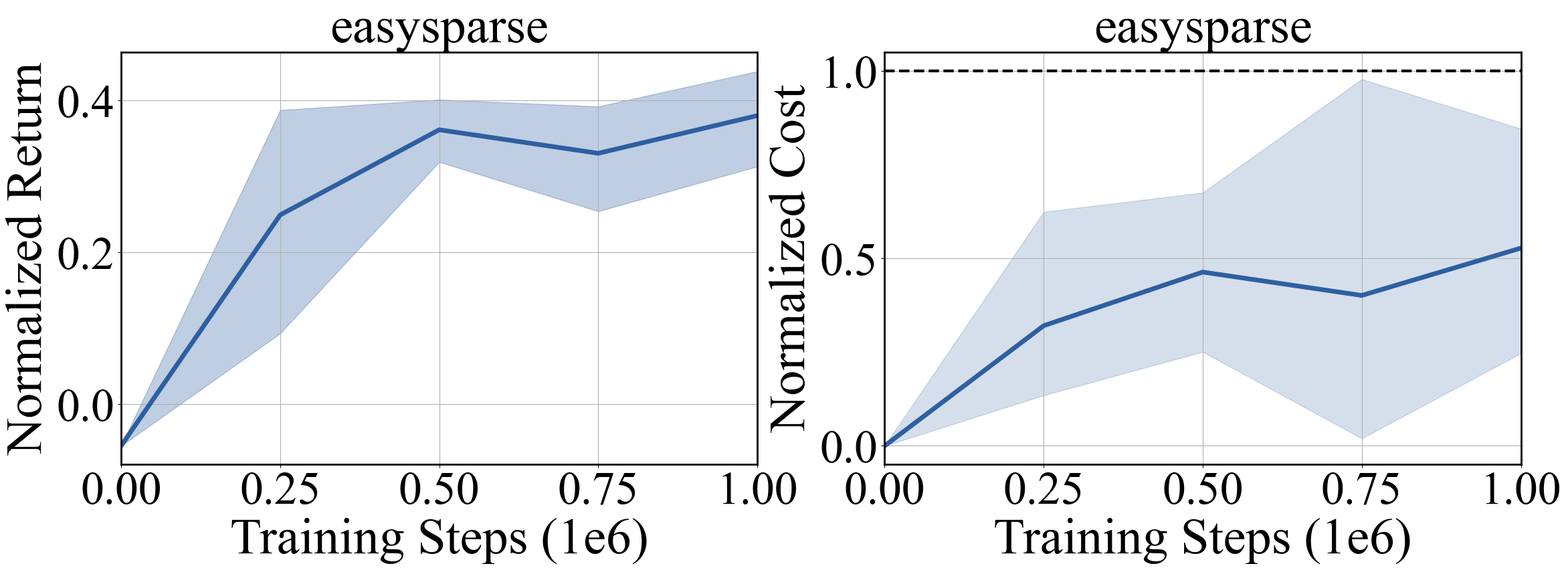}
\end{figure}
\begin{figure}[t]
    \centering
    \includegraphics[width=0.45\textwidth]{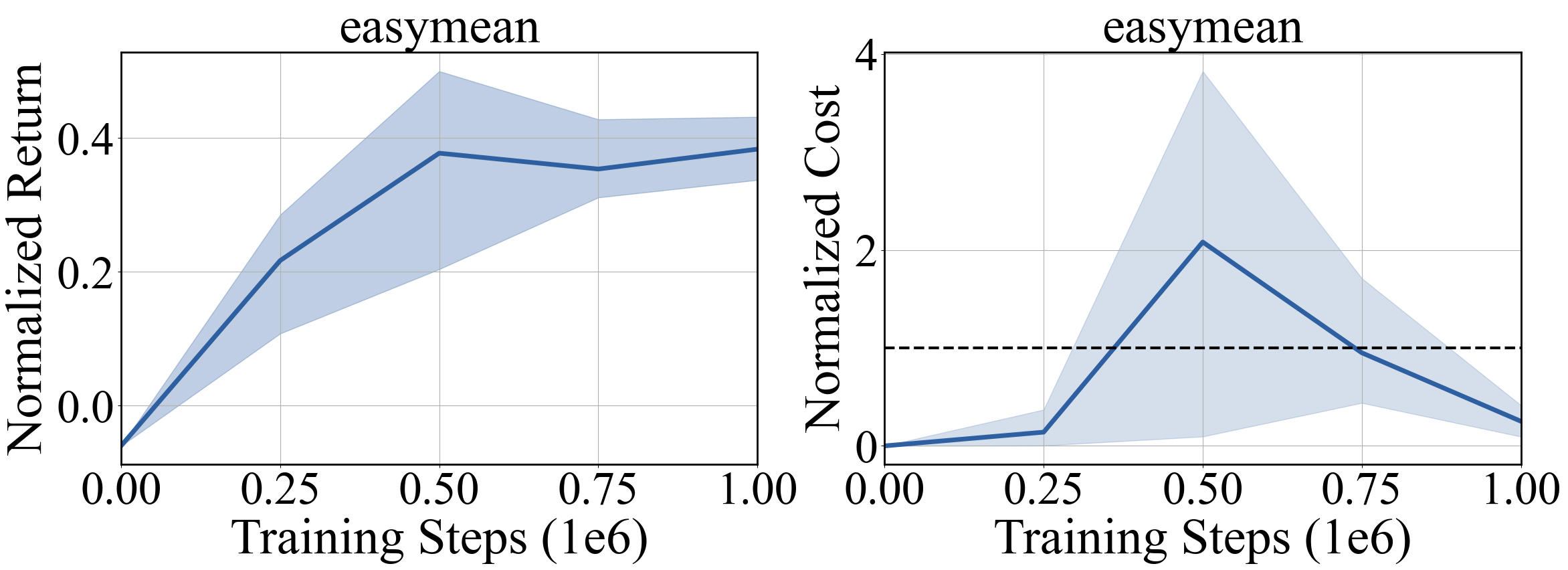}
    \includegraphics[width=0.45\textwidth]{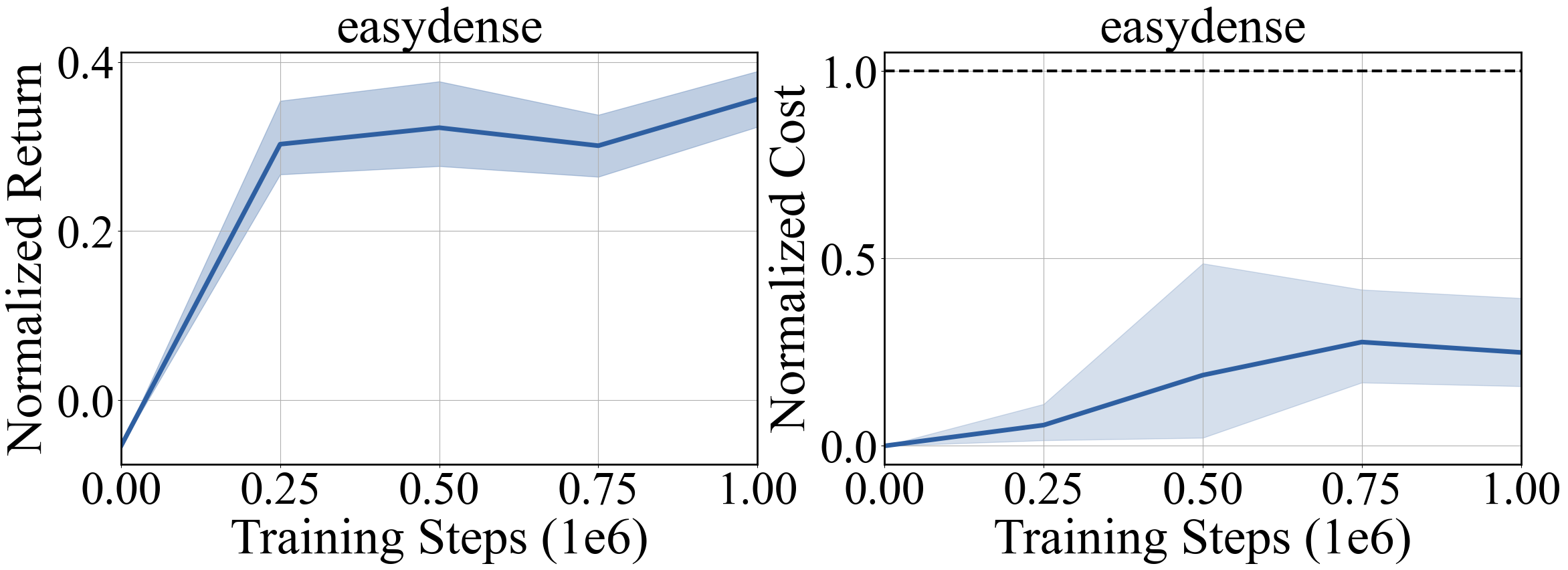}
    \includegraphics[width=0.45\textwidth]{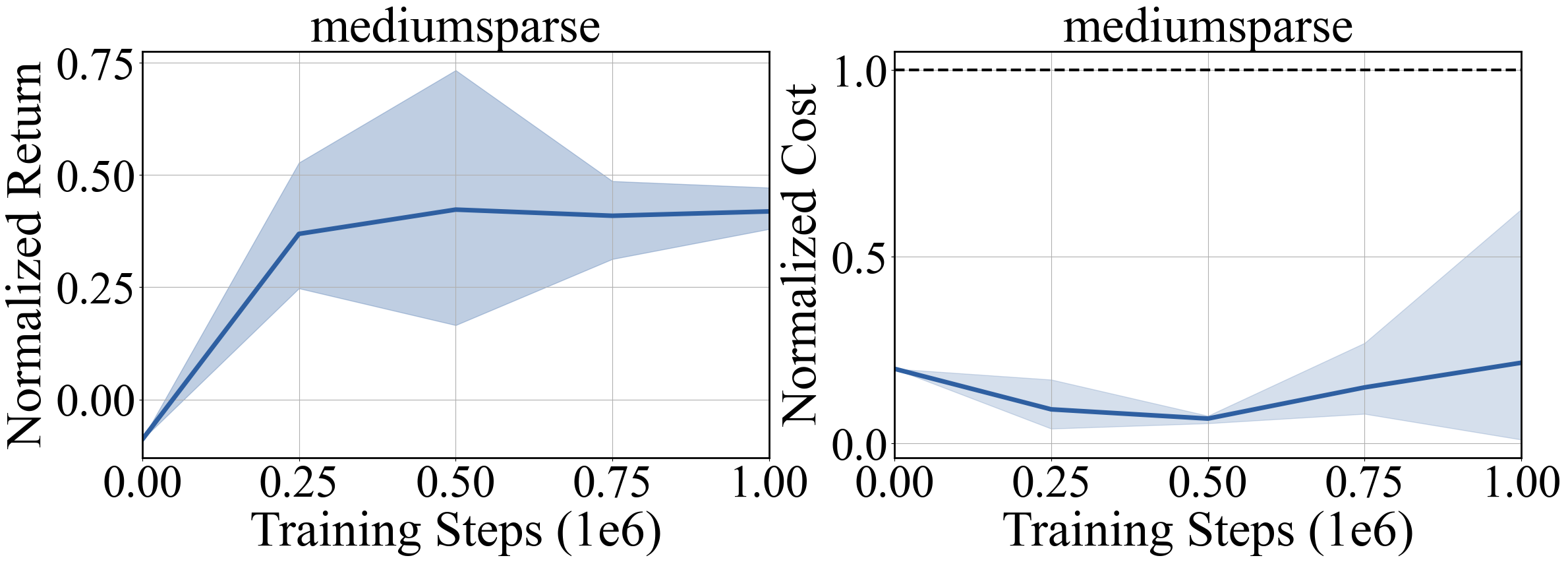}
    \includegraphics[width=0.45\textwidth]{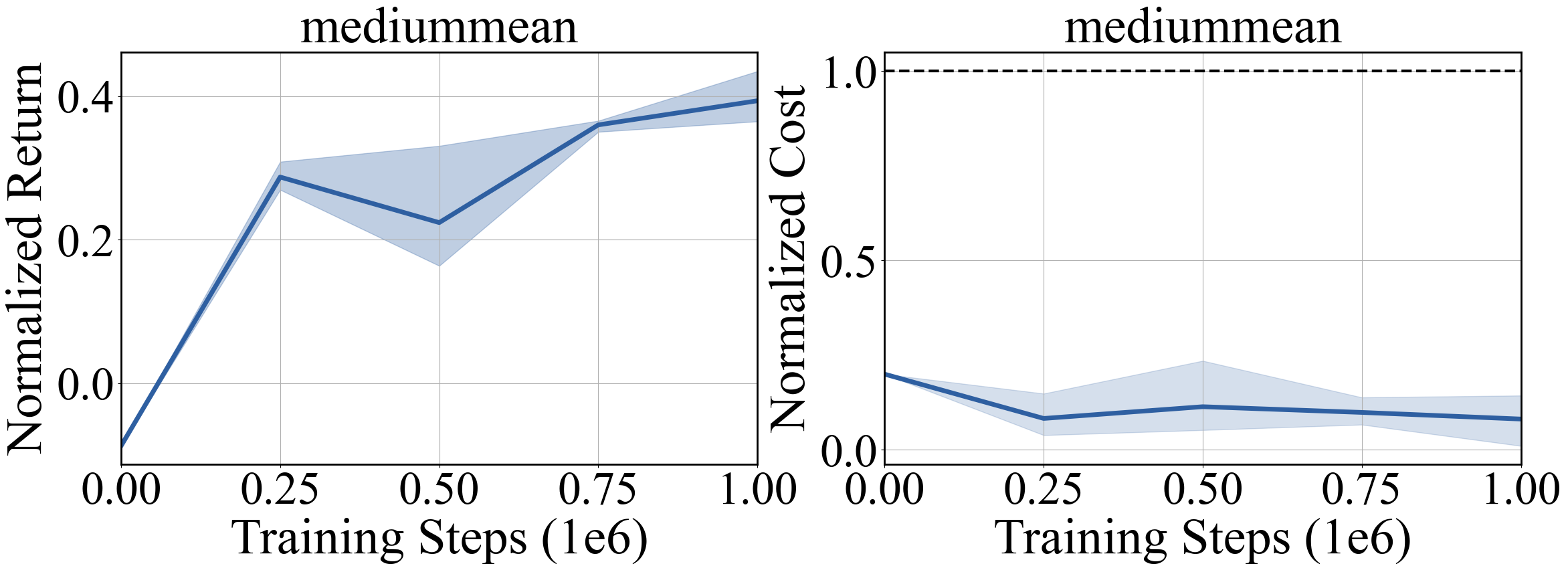}
    \includegraphics[width=0.45\textwidth]{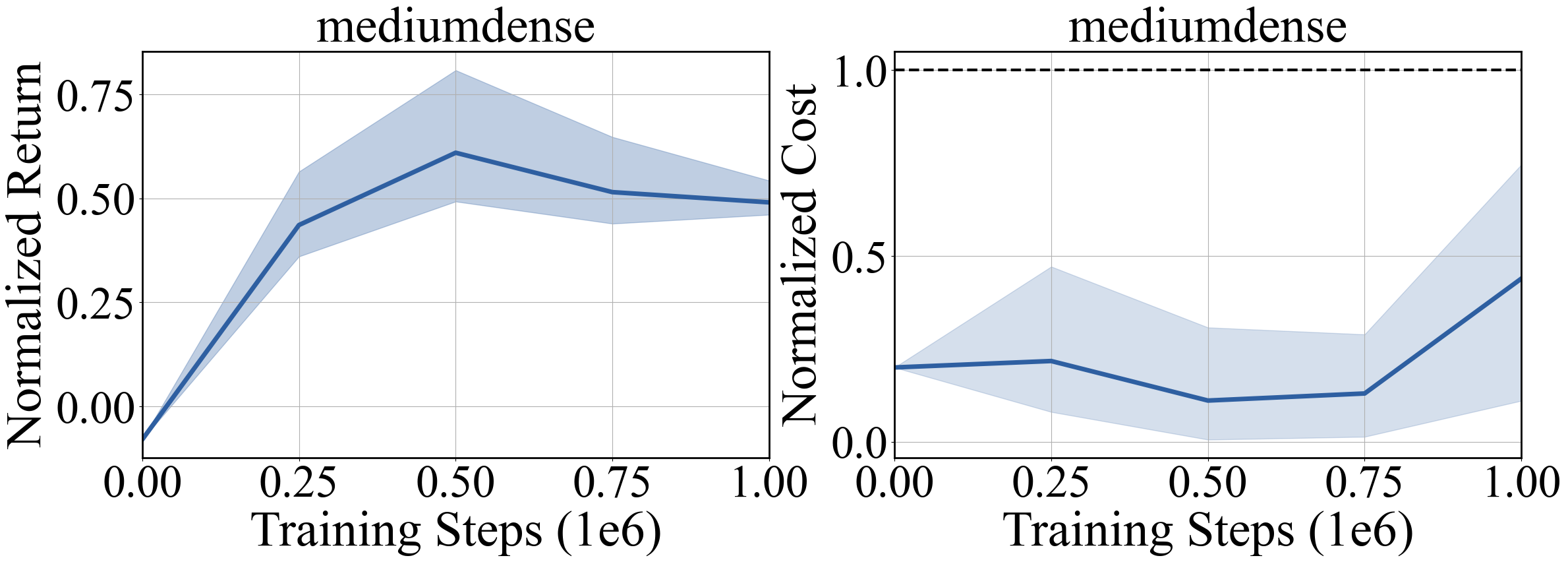}
    \includegraphics[width=0.45\textwidth]{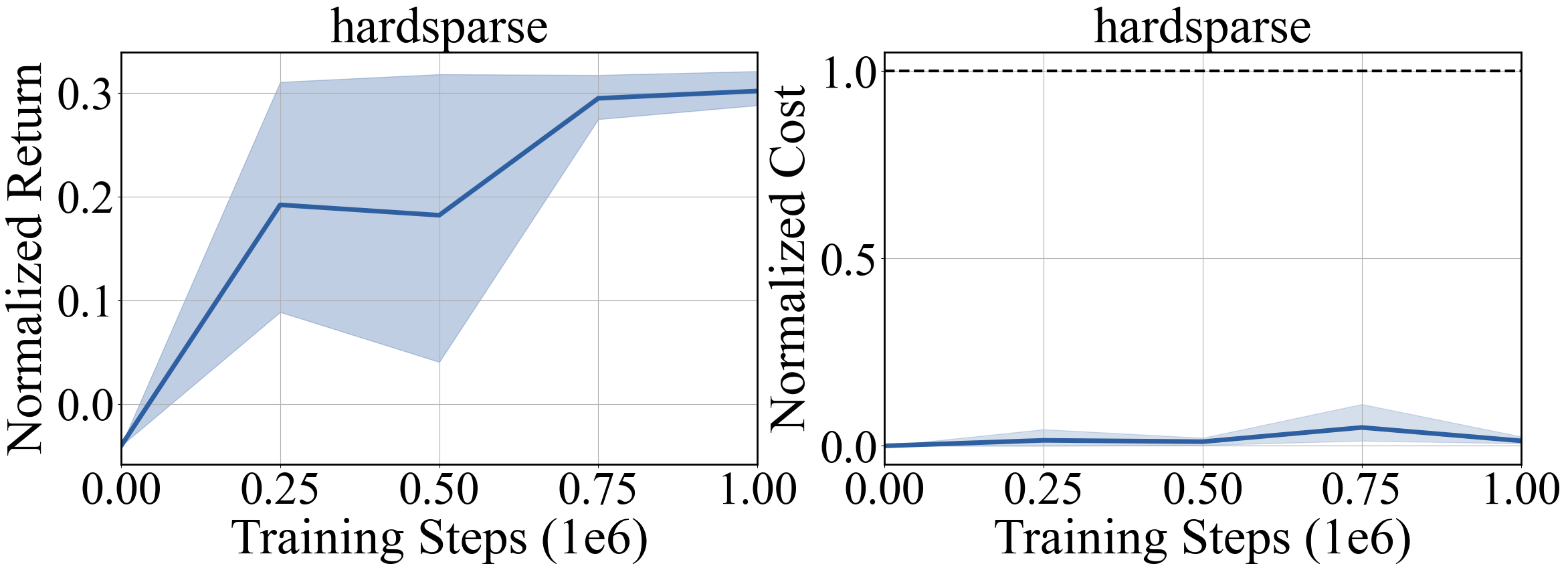}
    \includegraphics[width=0.45\textwidth]{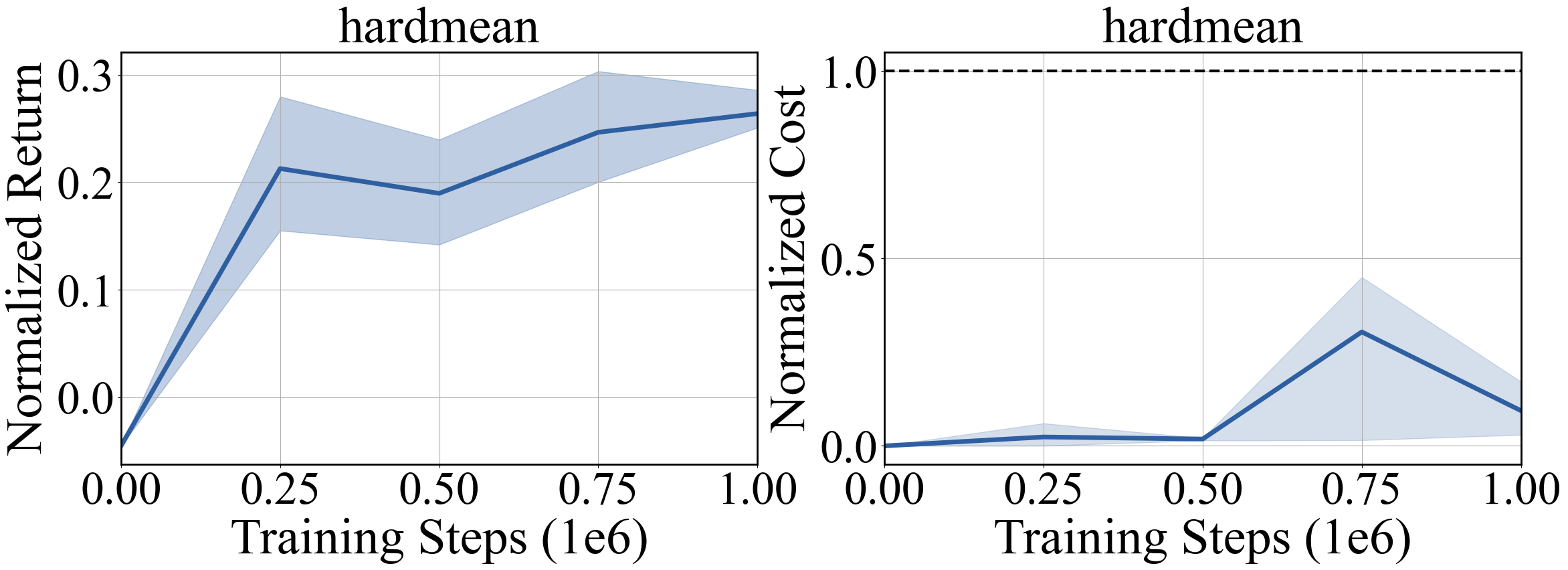}
    \includegraphics[width=0.45\textwidth]{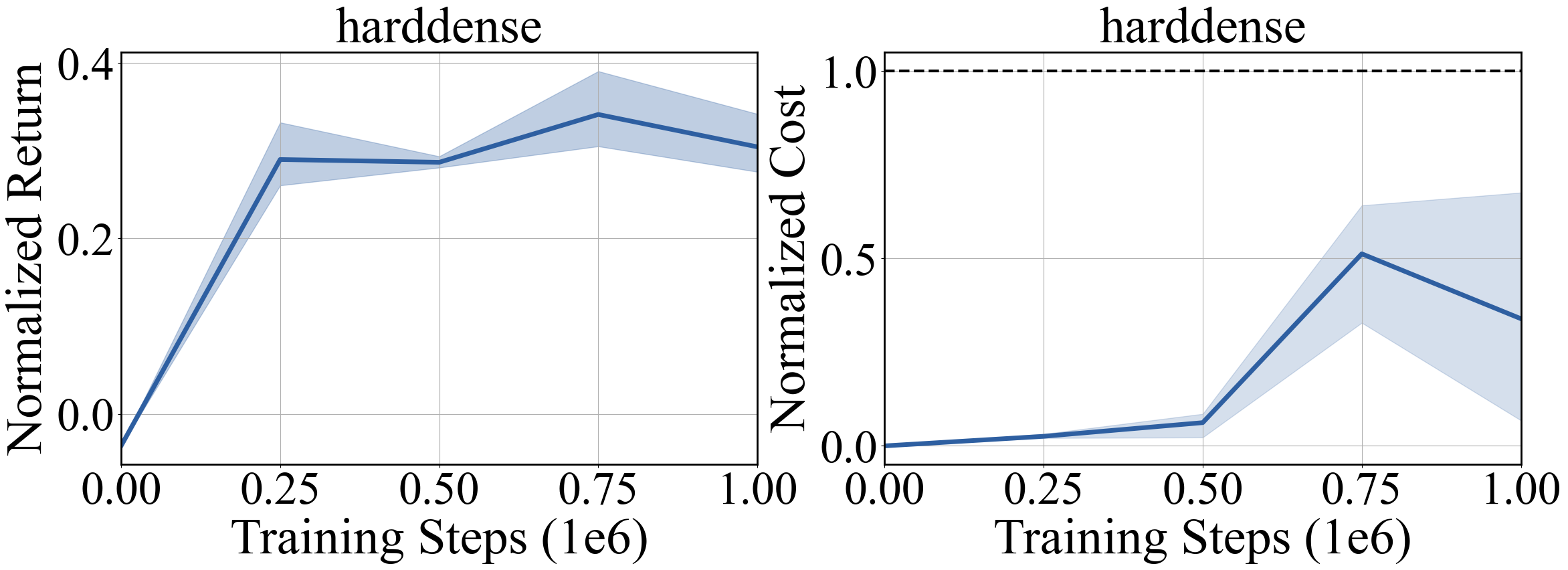}
 \caption{\small The learning curves for the 26 tasks in DSRL benchmark~\citep{liu2023datasets}.}
 \label{fig:main_result_curve}
\end{figure}

\end{document}